\documentclass[11pt]{article}
\usepackage[margin=1in]{geometry}

\usepackage[utf8]{inputenc} % allow utf-8 input
\usepackage[T1]{fontenc}    % use 8-bit T1 fonts
\usepackage{lmodern}
\usepackage{microtype}

\usepackage{amsmath,amssymb,amsfonts,amsthm}
\usepackage{thmtools}
\usepackage{stmaryrd}
\usepackage{bbold}

\usepackage{graphicx}
\usepackage{booktabs}
\usepackage{multirow}
\usepackage{wrapfig}
\usepackage{caption}
\usepackage{float}

\usepackage{algorithm}
\usepackage{algorithmicx}
\usepackage{algpseudocode}

\usepackage{xcolor}
\usepackage{nicefrac}
\usepackage{xparse}
\usepackage{url}

\usepackage{tikz}
\usetikzlibrary{arrows.meta}

\usepackage[round,authoryear]{natbib}
\usepackage[colorlinks=true,linkcolor=blue,citecolor=blue,urlcolor=blue]{hyperref}

\NewDocumentCommand{\param}{o}{%
  \mathrm{W}%
  \IfValueT{#1}{^{(#1)}}%
}

\newcommand{\E}{\mathbb{E}}

\newcommand{\un}{\mathbf{1}}

\newcommand{\loss}{\mathbf{e}}
\newcommand{\ck}{c(t^{(k)})}

\newcommand{\sgd}{\mathrm{sgd}}

\newcommand{\ola}{\overleftarrow}
\newcommand{\ora}{\overrightarrow}
\newcommand{\rmd}{\mathrm{d}}
\newcommand{\rme}{\mathrm{e}}
\newcommand{\s}{\mathrm{s}}
\newcommand{\eqsp}{\,}

\newcommand{\score}{\mathsf{s}}
\newcommand{\eqdef}{:=}

\newcommand{\x}{u}
\newcommand{\X}{\mathbf{U}}
\def \R{\mathbb{R}}

\newcommand{\net}{f}
\newcommand{\fnet}{f_{\theta}}
\newcommand{\snet}{\s_{\theta}}

\newcommand{\Id}{\mathbf{I}}

\usepackage[textsize=tiny]{todonotes}

\newtheorem{theorem}{Theorem}[section]

\newtheorem{lemma}{Lemma}[section]

\newtheorem{remark}{Remark}
\newtheorem{assumption}{Assumption}

\title{Non-asymptotic Convergence of Stochastic Gradient Descent in Score-based Generative Models}

\author{
Stanislas Strasman$^{1,*}$ \quad
Sobihan Surendran$^{1,2,*}$ \quad
Sylvain Le Corff$^{1}$\\[0.5em]
\small $^1$Sorbonne Université and Université Paris Cité, CNRS, LPSM, F-75005 Paris, France\\
\small $^2$LOPF, Califrais' Machine Learning Lab, Paris, France\\
\small $^*$Equal contribution
}

\begin{document}

\maketitle

\begin{abstract}
Score-based Generative Models (SGMs) have achieved impressive performance in data generation across a wide range of applications. While the statistical properties of their sampling procedures are increasingly well understood, the optimization dynamics underlying their training remain less explored. SGMs are typically trained by minimizing a weighted denoising score-matching objective, yet optimization guarantees with stochastic gradients remain limited. In this work, we study Stochastic Gradient Descent (SGD) for SGMs, contributing results in two complementary regimes. First, for general score parameterizations, we establish a non-convex convergence rate for SGD on the weighted denoising score-matching objective, with explicit dependence on the schedule-dependent weighting factors. Second, for overparameterized two-layer ReLU networks, we develop a Neural Tangent Kernel analysis tailored to diffusion training with stochastic gradients, yielding score-approximation error bounds along the SGD trajectory. Finally, our analysis quantifies the role of the reweighting factor in the score approximation error, providing theoretical guidance for weighting choices used in practice.
\end{abstract}

\section{Introduction}

Generative modeling has become a central topic in modern machine learning, driven by the remarkable progress of score-based generative models \citep{sohldickstein2015deep, song2019generative, ho2020denoising}. These models provide a flexible framework for learning complex high-dimensional distributions and generating realistic synthetic samples. Their success relies on gradually perturbing the data distribution through a forward noising process and then learning a reverse mechanism that transforms noise back into data. This paradigm has led to impressive empirical performance across a wide range of applications, including computer vision \citep{li2022srdiff,lugmayr2022repaint}, natural language processing \citep{gong2022diffuseq}, and other domains where realistic data generation is crucial. Inspired by Hamiltonian Monte Carlo, second-order variants such as critically-damped Langevin diffusions (CLDs) \citep{dockhorn2022score} have also been proposed as extensions of conventional diffusion models.

Beyond their empirical success, diffusion-based models have been extensively studied from a theoretical perspective, particularly through the lens of sampling error. This error is commonly decomposed into three components: mixing error, discretization error, and approximation error. Existing works establish sampling guarantees for SGMs \citep{chen2023sampling, chen2023improved, benton2024nearly, conforti_kl, strasman2025an, strasman2026forgetting} and CLDs \citep{chen2023sampling, conforti_kl, strasman2026wasserstein}. Much of this literature focuses on the first two sources of error, often assuming that the approximation error is sufficiently small. However, the approximation error itself depends on both the statistical error induced by the chosen model class and the optimization error arising from the training procedure.

Several works have established generalization bounds for diffusion models
\citep{li2023generalization, chen2023score, yakovlev2025generalization, dupuisalgorithm, fu2025approximation, stephanovitch2025generalization}.
In contrast, the optimization error arising from the training of these models remains comparatively less explored. Diffusion models are typically trained using denoising score matching \citep{Vincent}. In this direction, \cite{han2024neural} analyze the approximation error of diffusion models trained by gradient descent through a Neural Tangent Kernel (NTK) analysis \citep{jacot2018neural}. More recently, \cite{zhang2025convergence} studies the optimization dynamics of gradient descent for overparameterized score matching in the simplified setting of learning a single Gaussian distribution, using a gradient Expectation-Maximization algorithm. However, both works focus on gradient descent, whereas practical diffusion models are commonly trained using SGD and its variants \citep{robbins1951stochastic, bottou2018optimization}. Moreover, \cite{han2024neural} assumes bounded data, while \cite{zhang2025convergence} assumes that the target distribution is Gaussian; both assumptions are restrictive and do not capture the complexity of real-world data distributions.

In this work, we study the optimization error of score-based diffusion models trained with SGD. We first establish a non-asymptotic convergence rate for the expected squared gradient norm of score-based models under general score parameterizations, building on classical results from non-convex stochastic optimization. This result provides a general optimization guarantee for the denoising score-matching objective, but it does not directly control the score-approximation error. Moreover, the assumptions required for this guarantee restrict the admissible neural architectures and, in particular, exclude ReLU activation functions.
To obtain direct approximation guarantees, we then analyze overparameterized two-layer ReLU networks trained with stochastic gradients. Using an NTK analysis, we derive non-asymptotic error bounds along the SGD trajectory for the denoising score-matching objective. Finally, we study the role of the reweighting factor, namely the prefactor multiplying the denoising score-matching loss, and quantify its effect on both optimization and sampling.

More precisely, our contributions are summarized as follows.
\begin{itemize}
    \item We establish non-convex optimization guarantees for the weighted denoising score-matching objective under general score parameterizations. Our analysis shows that SGD converges at rate $\mathcal{O}(\log n/\sqrt{n})$ in expected squared gradient norm, and explicitly captures the impact of mini-batches, stochastic gradient variance, and schedule-dependent loss weights.

    \item We provide an NTK analysis of overparameterized two-layer ReLU score networks specifically tailored to diffusion training. This leads to a non-asymptotic bound on the projected denoising score-matching loss along the SGD trajectory. Unlike prior works based on deterministic gradient descent, bounded data assumptions, or Gaussian target distributions, our framework directly handles stochastic optimization and allows for sub-Gaussian data distributions.
    
    \item Using our bounds, we analyze how the reweighting factor in the denoising score-matching loss affects both optimization and sampling. This provides theoretical guidance for its choice and offers an optimization-based explanation for weighting strategies that have proved effective in state-of-the-art diffusion models.
\end{itemize}

\section{Notation and Background}

\subsection{Notation}

Let $\mathcal P(\R^d)$ denote the set of probability
measures on $\R^d$. We use $\pi$ for probability
measures and $p$ for their densities with respect to the Lebesgue measure when
they exist. The identity matrix of size $d$ is written $\Id_d$. For $x,y\in \mathbb{R}^d$, we denote by $\langle x,y \rangle$ the standard inner product of $\mathbb{R}^d$, by $\|\cdot\|$ the Euclidean norm for vectors and its induced operator norm for matrices. Let $\|\cdot\|_F$ be the Frobenius norm defined for $A \in \mathbb{R}^{d \times d}$ as  $\|A \|_F \eqdef \sqrt{\text{Tr} (A^\top A)}$. For random vectors $X,Y \in \mathbb{R}^d$, define $\|X\|_{L_2} \eqdef
(\mathbb{E}[ \|X\|^2])^{1/2}$ and we write $X \perp Y$ to mean that  $X$ is independent of $Y$.

\subsection{Score-based Generative Models}

\paragraph{Forward noising process.} Let $\pi_{\rm data}\in \mathcal P(\R^d)$ denote the target data distribution from which we wish to generate new samples. In generative machine learning, one does not observe $\pi_{\rm data}$ directly. Instead, one  has
access to i.i.d. samples $X_1^{\rm data},\dots,X_N^{\rm data}\sim \pi_{\rm
data}$. SGMs construct a stochastic transport that
progressively maps $\pi_{\rm data}$ toward a simple reference distribution
$\pi_\infty$, and then approximately reverse this transport to generate new
samples from the data law. In particular, they rely on a forward linear noising process defined as the solution to the following SDE:
\begin{equation}
\label{eq:forward_sde}
\rmd \ora X_t
=
- \alpha \beta_t \ora X_t \,\rmd t
+
\sqrt{2 \beta_t}\,\rmd B_t, \qquad X_0 \sim \pi_{\rm data} \eqsp,
\end{equation}
where $(B_t)_{t \ge 0}$ is a standard Brownian motion in $\mathbb{R}^d$, $\beta_t : [0,T] \to \R_{\ge 0}$ is a time-dependent noise schedule, and $\alpha \geq 0$. This linear SDE admits an explicit time marginal representation \citep[Lemma C.1.]{strasman2026forgetting}, for each fixed $t \in [0,T]$:
\begin{equation}
\label{eq:forward_marginal}
\ora X_t
\stackrel{\mathcal{L}}{=}
m_t X_0
+
\sigma_t Z \eqsp, \qquad \text{where} \qquad Z \sim \mathcal{N}(0, \Id_d) \eqsp,
\qquad
Z \perp X_0 \eqsp,
\end{equation}
with $m_t
\eqdef
\exp (-\alpha \int_0^t \beta_s \,\rmd s)$ and $\sigma_t^2 \eqdef 2 \int_0^t
\beta_s
\exp (
-2\alpha \int_s^t \beta_u  \rmd u
)
\rmd s$.

Equation~\eqref{eq:forward_sde} covers the standard forward diffusions used in the score-based modeling literature. When $\alpha=0$, one has $m_t\equiv 1$ and $\sigma_t^2 = 2\int_0^t \beta_s\,\rmd s$; this corresponds to the
variance-exploding (VE) regime used in score-based generative modeling \citep{song2019generative,song2021scorebased}. When $\alpha=1$, the dynamics recover the usual variance-preserving (VP) family underlying diffusion probabilistic models \citep{sohldickstein2015deep,ho2020denoising}.

\paragraph{Backward process and generative model.}

Under mild regularity assumptions \citep{anderson1982reverse, haussmann1986time}, the process defined in \eqref{eq:forward_sde} admits a time-reversed process $(\ora X_t)_{t\in[0,T]} \stackrel{\mathcal L}{=} (\ola X_{T-t})_{t\in[0,T]}$
governed by
\begin{equation}
\label{eq:backward_sde}
\rmd \ola X_t
=
\Big(
\alpha \beta_{T-t} \ola X_t
+
2 \beta_{T-t} \nabla \log p_{T-t}(\ola X_t)
\Big)\rmd t
+
\sqrt{2 \beta_{T-t}} \rmd  B_t \eqsp, \qquad \ola X_0 \sim p_T
\end{equation}
where $p_t$ denotes the density of $\ora X_t$. The reverse-time dynamics in \eqref{eq:backward_sde} turn score estimation into a principled generative mechanism. Indeed, if one has access to the exact score $\nabla \log p_t$ and could sample from the terminal law $p_T$, then simulating the reverse process from $\ola X_0\sim p_T$ would recover the data distribution
exactly at time $T$, i.e. $\ola X_T\sim\pi_{\rm data}$. Modern SGMs can therefore be understood as approximate implementations of this ideal reverse-time sampler. Their accuracy is governed by three main sources of error: numerical discretization of the reverse SDE \eqref{eq:backward_sde}, mismatch between the forward terminal law $p_T$ and the
reference distribution $\pi_\infty$, and score learning error, namely the
discrepancy between the learned score approximation and the true score function $(x,t) \to \nabla \log p_{T-t}(x)$ . The last
term itself combines generalization error and optimization error.

\paragraph{Score matching.} To learn the score function, one introduces a parametric model $s_\theta:\R^d\times[0,T]\to\R^d$, typically a deep neural network, and fits it by minimizing a time-averaged Fisher divergence 
\begin{align} \label{eq:score_matching}
\mathcal L_{\rm SM}(\theta)
&\eqdef
\frac12 \int_0^T \lambda(t)\,
\E \left[
\|\score_\theta(\ora X_t,t)-\nabla \log p_t(\ora X_t)\|^2
\right] \rmd t \eqsp, 
\end{align} 
where $\lambda:[0,T]\to\R_{>0}$ is a weighting function. The difficulty with \eqref{eq:score_matching} is that the marginal score $\nabla \log p_t$ is not available in closed form as it directly depends on $\pi_{\rm data}$. That is why, most modern diffusion models are trained instead using conditional score  matching 
\begin{equation}
\label{eq:csm}
\mathcal L_{\rm DSM}(\theta)
\eqdef
\frac12 \int_0^T \lambda(t)\,
\E \left[
\|\score_\theta(\ora X_t,t)-\nabla \log p_{t|0}(\ora X_t\mid X_0)\|^2
\right] \rmd t \eqsp ,
\end{equation}
where $\nabla \log p_{t|0}(\ora X_t\mid X_0)$ is fully explicit as $\ora X_t \mid X_0 \sim \mathcal{N} ( m_t X_0,\ \sigma_t^2 \Id_d )$. Under standard square-integrability assumptions, since $\nabla \log p_t(\ora X_t)
=
\E [\nabla \log p_{t|0}(\ora X_t \mid X_0)\mid \ora X_t],
$ the Pythagorean identity for  conditional expectation yields
$$
\E [
\|s_\theta(\ora X_t,t)-\nabla \log p_{t|0}(\ora X_t\mid X_0)\|^2
]
=
\E\![
\|s_\theta(\ora X_t,t)-\nabla \log p_t(\ora X_t)\|^2
]
+ C_t \eqsp,
$$
where $C_t$ does not depend on $\theta$. Therefore, \eqref{eq:score_matching} and
\eqref{eq:csm} have the same minimizers \citep{Vincent}.

For stochastic optimization, it is convenient to rewrite the time integral in \eqref{eq:csm} as an expectation. Let $q$ be a probability density on $(0,T]$, and let
$\tau\sim q$ be independent of $(X_0,Z)$. The choice of $q$ does not
change the population objective; it only specifies how time indices are sampled
during training. Then
\begin{align}
\mathcal L_{\rm DSM}(\theta)
&=
\frac12\,
\E\!\left[
\frac{\lambda(\tau)}{q(\tau)}
\|s_\theta(\ora X_\tau,\tau)-\nabla_x \log p_{\tau|0}(\ora X_\tau\mid X_0)\|^2
\right] \eqsp.
\label{eq:csm_expectation}
\end{align}
Since the conditional score is explicit for the Gaussian forward kernel,
\eqref{eq:csm_expectation} is a fully tractable population objective.

\section{Convergence Analysis for General Score Parameterizations}

In this section, we derive convergence rates of SGD for the weighted denoising score-matching objective under general score parameterizations. SGD generates a sequence of parameter iterates as follows: let $\theta^{(0)}\in\Theta$, and for all $k\in\mathbb{N}$,
\begin{equation}\label{eq:SGD}
\theta^{(k+1)} = \theta^{(k)} - \gamma_{k+1} \widehat{\nabla}_{\theta}\mathcal L_{\rm DSM} \left(\theta^{(k)};\mathcal{D}^{(k+1)}\right),
\end{equation}
where $\{\gamma_k\}_{k\ge 1}$ is a sequence of positive step sizes and $\widehat{\nabla}_{\theta}\mathcal L_{\rm DSM}(\theta^{(k)};\mathcal{D}^{(k+1)})$ is a stochastic estimator of the gradient computed from the mini-batch
\[
\mathcal{D}^{(k)}=\big\{(X^{(k)}_{0,b}, t^{(k)}_{b}, Z^{(k)}_{b})\big\}_{b=1}^B \eqsp,
\]
with $X^{(k)}_{0,b} \sim \pi_{\rm data}$, $t^{(k)}_{b} \sim q$, and $Z^{(k)}_{b} \sim \mathcal N(0,\Id_d)$ independently for $b=1,\dots,B$.
The resulting stochastic gradient estimator is given by
\begin{equation}\label{eq:gradient_estimator}
\widehat{\nabla}_{\theta}\mathcal L_{\rm DSM}(\theta^{(k)};\mathcal{D}^{(k+1)})
= \frac{1}{B}\sum_{b=1}^B g^{(k+1)}_{b} \eqsp,
\end{equation}
where $g^{(k+1)}_{b}$ denotes the stochastic gradient associated with a single sample, defined in~\eqref{eq:gradient_estimator_sample}. The full procedure is detailed in Algorithm~\ref{alg:sgd}.
We establish the convergence guarantees under the following regularity assumptions on the score network.

\begin{assumption}[Score regularity]
\label{ass:score_lipschitz_smooth}
Assume that there exist measurable functions
$L_\ell,L_s:\R^d\times[0,T]\to[0,\infty)$ satisfying for all $t\in [0,T]$, 
$\E[L_\ell(\ora X_t,t)^2]<\infty$ and $\E[L_s(\ora X_t,t)]<\infty$, and such that,
for all $\theta,\theta'\in\Theta$ and all $(x,t)\in\R^d\times[0,T]$,
\[
\|s_\theta(x,t)-s_{\theta'}(x,t)\|
\le L_\ell(x,t) \|\theta-\theta'\| \eqsp,
\]
and
\[
\|\nabla_\theta s_\theta(x,t)-\nabla_\theta s_{\theta'}(x,t)\|
\le L_s(x,t) \|\theta-\theta'\| \eqsp.
\]
\end{assumption}

Assumption~\ref{ass:score_lipschitz_smooth} imposes data-dependent Lipschitz and smoothness conditions on the score network, which are weaker than requiring uniform global bounds. Similar assumptions on the score function appear in the analysis of variational autoencoders \citep{surendran2025theoretical} and policy-gradient methods \citep{papini2018stochastic}. In particular, \citet{surendran2025theoretical} show that neural networks with bounded weights and standard smooth activations, such as sigmoid, tanh, and softplus, a smooth approximation to the ReLU, satisfy such conditions. However, Assumption~\ref{ass:score_lipschitz_smooth} only controls the sensitivity of the network with respect to the parameters; it does not impose a uniform bound on the network output itself. This motivates the additional growth condition introduced next.

\begin{assumption}[Polynomial growth]
\label{ass:score_poly_growth}
%Let $\Theta \subset \mathbb{R}^{d_\theta}$.
Assume that there exist constants
$a,b\ge 0$ and $p\ge 1$ such that, for all $\theta\in\Theta$ and all
$(x,t)\in\R^d\times[0,T]$,
\[
\|s_\theta(x,t)\| \le a + b\|x\|^p \eqsp.
\]
\end{assumption}

Assumption~\ref{ass:score_poly_growth} controls the growth of the score network with respect to the input, requiring its magnitude to increase at most polynomially in $\|x\|$. This is a natural stability condition: many theoretical analyses of score-based diffusion models impose regularity or growth assumptions on the true score, and in common settings such as Gaussian or log-concave targets the score has at most linear growth in $\|x\|$ \citep{cole2024score, gao2025wasserstein,strasman2025an}. Together with Assumption~\ref{ass:score_lipschitz_smooth}, this condition ensures that the denoising score matching objective is smooth, which is essential for establishing non-asymptotic convergence guarantees for SGD.

\begin{theorem}
\label{th:rate_general_SGD}
Assume that Assumptions~\ref{ass:score_lipschitz_smooth} and~\ref{ass:score_poly_growth} hold. Let $(\theta^{(n)})_{n\ge 0}$ be the iterates of the SGD recursion~\eqref{eq:SGD} with
step size $\gamma_{n} = C_{\gamma}n^{-1/2}$, where $0 < C_{\gamma} \leq 1/L$.
Assume that for all $n \in \mathbb{N}$, $\tau^2 = \sup_{\theta \in \Theta} \E [\| g^{(n)} - \nabla_{\theta} \mathcal L_{\rm DSM}\|^2] < +\infty$.
For all $n \geq 1$, let $J \in \{0, \ldots, n\}$ be a uniformly distributed random variable. Then,
\begin{align*}
\E\left[\left\| \nabla_\theta \mathcal L_{\rm DSM}(\theta^{(J)}) \right\|^{2}\right] &\leq \frac{2 \left( \mathcal L_{\rm DSM}\left(\theta^{(0)}\right) - \mathcal L_{\rm DSM}\left(\theta^{(n+1)}\right) \right) +  L C^2_{\gamma} \tau^2 \log(n+1)/B}{C_{\gamma} \sqrt{n}} \eqsp,
\end{align*}
where $L = \E\left[ L_s(\ora X_t, t)\left(a+b\|\ora X_t\|^{p}+\|\ora X_t-X_0\|/\sigma_t^{2}\right) \lambda(t)/q(t) + L_\ell(\ora X_t, t)^2 \lambda(t)/q(t) \right]$.
\end{theorem}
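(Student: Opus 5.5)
The argument follows the classical non-convex SGD template: a descent lemma driven by $L$-smoothness of $\mathcal L_{\rm DSM}$. The work splits into establishing smoothness with the stated constant $L$, and then running the telescoping argument with $\gamma_k = C_\gamma k^{-1/2}$.

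\emph{Step 1: smoothness.} Write $\ell_\theta(x_0,t,z) = \frac{\lambda(t)}{2q(t)}\|s_\theta(x,t) - y\|^2$ with $x = m_t x_0 + \sigma_t z$ and $y = \nabla\log p_{t|0}(x\mid x_0) = -(x - m_t x_0)/\sigma_t^2$, so that $\|y\| = \|x - m_t x_0\|/\sigma_t^2$. The per-sample gradient is
\[
g = \frac{\lambda(t)}{q(t)}\, \nabla_\theta s_\theta(x,t)^\top (s_\theta(x,t) - y).
\]
For $\theta,\theta'$, add and subtract $\nabla_\theta s_\theta^\top (s_{\theta'} - y)$. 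This gives
\[
\|g_\theta - g_{\theta'}\| \le \frac{\lambda}{q}\Big( \|\nabla_\theta s_\theta - \nabla_\theta s_{\theta'}\|\,\|s_{\theta'} - y\| + \|\nabla_\theta s_\theta\|\,\|s_\theta - s_{\theta'}\| \Big).
\]
The first term is bounded by $L_s(a + b\|x\|^p + \|y\|)\|\theta - \theta'\|$, using Assumptions~\ref{ass:score_lipschitz_smooth} and~\ref{ass:score_poly_growth}. For the second term, the Lipschitz property in $\theta$ implies $\|\nabla_\theta s_\theta\| \le L_\ell$, so it is bounded by $L_\ell^2\|\theta - \theta'\|$. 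Taking expectations, with integrability guaranteed by the moment conditions and dominated convergence to differentiate under $\E$, gives $\|\nabla\mathcal L_{\rm DSM}(\theta) - \nabla\mathcal L_{\rm DSM}(\theta')\| \le L\|\theta - \theta'\|$. The residual term is written in the statement as $\|\ora X_t - X_0\|$, which I would match by identifying it with the noise term $\|\ora X_t - m_t X_0\|$ (they coincide in the VE case); this bookkeeping must be made consistent with the definition of $L$.

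\emph{Step 2: one-step descent.} By the descent lemma,
\[
\mathcal L(\theta^{(k+1)}) \le \mathcal L(\theta^{(k)}) - \gamma_{k+1}\langle \nabla\mathcal L(\theta^{(k)}), \widehat\nabla\rangle + \frac{L\gamma_{k+1}^2}{2}\|\widehat\nabla\|^2.
\]
Condition on $\mathcal F_k = \sigma(\theta^{(0)},\dots,\theta^{(k)})$. The mini-batch is independent of $\mathcal F_k$, so the estimator is unbiased. Its conditional second moment equals $\|\nabla\mathcal L\|^2$ plus the variance, and the variance is at most $\tau^2/B$ because the $B$ samples are i.i.d. Hence
\[
\E\,\mathcal L(\theta^{(k+1)}) \le \E\,\mathcal L(\theta^{(k)}) - \gamma_{k+1}\Big(1 - \frac{L\gamma_{k+1}}{2}\Big)\E\|\nabla\mathcal L(\theta^{(k)})\|^2 + \frac{L\gamma_{k+1}^2\tau^2}{2B}.
\]
Since $\gamma_{k+1} \le C_\gamma \le 1/L$, the factor $1 - L\gamma_{k+1}/2$ is at least $1/2$.

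\emph{Step 3: telescoping and averaging.} Sum from $k = 0$ to $n$ and multiply by $2$. Using $\gamma_{k+1} \ge C_\gamma (n+1)^{-1/2}$ together with $\sum_{k=1}^{n+1} k^{-1} \le 1 + \log(n+1)$, we obtain
\[
\frac{C_\gamma}{\sqrt{n+1}}\sum_{k=0}^{n} \E\|\nabla\mathcal L(\theta^{(k)})\|^2 \le 2\big(\mathcal L(\theta^{(0)}) - \E\,\mathcal L(\theta^{(n+1)})\big) + \frac{L C_\gamma^2\tau^2}{B}\sum_{k=1}^{n+1}\frac1k.
\]
Dividing by $n+1$ turns the left-hand side into $\E\|\nabla\mathcal L(\theta^{(J)})\|^2$ for $J$ uniform on $\{0,\dots,n\}$. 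The denominator becomes $C_\gamma\sqrt{n+1} \ge C_\gamma\sqrt n$, which gives the stated form. The harmonic-sum constant has to be absorbed to get exactly $\log(n+1)$ rather than $1 + \log(n+1)$, for instance by a slightly sharper bound on the first step.

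\emph{Main obstacle.} The delicate step is Step 1. It requires justifying differentiation under the expectation and the integrability of the product $L_s\cdot(\cdots)\lambda/q$, given only $\E[L_s] < \infty$ and $\E[L_\ell^2] < \infty$. I would handle this by reading the definition of $L$ as an implicit finiteness assumption and applying dominated convergence with the pointwise Lipschitz bounds.
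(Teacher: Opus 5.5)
Your route is the paper's. You use the same add-and-subtract decomposition for $L$-smoothness (you pair $\nabla_\theta s_\theta$ with $s_\theta-s_{\theta'}$, the paper pairs $\nabla_\theta s_{\theta'}$, which is immaterial). You use the same descent lemma with an unbiased mini-batch gradient of conditional variance at most $\tau^2/B$, and the same telescoping with $\gamma_{k+1}-L\gamma_{k+1}^2/2\ge\gamma_{k+1}/2$. Everything up to your last display is correct.

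\textbf{The gap.} The step you leave open cannot be closed. The $+1$ from $\sum_{k=1}^{n+1}k^{-1}$ cannot be absorbed, whether by a sharper first step or any other refinement. The reason is that the bound with $\log(n+1)$ does not follow from $L$-smoothness and the variance bound, which is all Steps~2--3 use.

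For a counterexample, take $\mathcal L(\theta)=\frac L2\theta^2$ on $\R$ with stochastic gradient $L\theta+\epsilon_k$ at step $k$, where the $\epsilon_k$ are i.i.d.\ with $\E\epsilon_k=0$ and $\E\epsilon_k^2=\tau^2$. Set $B=1$, $C_\gamma=1/L$, $\theta^{(0)}=0$ and $n=1$. Then $\theta^{(1)}=-\epsilon_1/L$, so $\E\|\nabla\mathcal L(\theta^{(J)})\|^2=\tau^2/2$. On the other hand $\E\,\mathcal L(\theta^{(2)})=(2-\sqrt2)\tau^2/(2L)$, so the claimed right-hand side equals $(\log 2-2+\sqrt2)\tau^2\approx 0.11\,\tau^2$.

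The paper's proof has exactly this defect: it asserts $\sum_{k=0}^n\gamma_{k+1}^2\le C_\gamma^2\log(n+1)$, which already fails at $n=1$ ($\tfrac32C_\gamma^2$ against $C_\gamma^2\log 2$). What your argument actually proves is
\[
\E\bigl\|\nabla_\theta\mathcal L_{\rm DSM}(\theta^{(J)})\bigr\|^2\le\frac{2\bigl(\mathcal L_{\rm DSM}(\theta^{(0)})-\E\,\mathcal L_{\rm DSM}(\theta^{(n+1)})\bigr)+LC_\gamma^2\tau^2\bigl(1+\log(n+1)\bigr)/B}{C_\gamma\sqrt{n+1}} \eqsp.
\]
Its numerator is nonnegative, so $\sqrt{n+1}$ may be replaced by $\sqrt n$. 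This is the statement with $1+\log(n+1)$ in place of $\log(n+1)$, and it gives the same $\mathcal O(\log n/\sqrt n)$ rate. State and prove that version rather than looking for an absorption.

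\textbf{Where you improve on the paper.} Two of your other departures are in fact corrections of the paper's argument.

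First, for uniform $J$, the lower bound $\gamma_{k+1}\ge\gamma_{n+1}$ is what is needed to reach $\frac{1}{n+1}\sum_{k=0}^n\E\|\nabla\mathcal L_{\rm DSM}(\theta^{(k)})\|^2$. The paper instead divides by $\sum_{k=0}^n\gamma_{k+1}$. That is the right normalization only if $J$ is drawn with probability proportional to $\gamma_{k+1}$, though both routes end with a denominator at least $C_\gamma\sqrt n$.

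Second, the conditional score is $(m_tX_0-\ora X_t)/\sigma_t^2$. The paper's target $(x_0-x)/\sigma_t^2$, and hence the term $\|\ora X_t-X_0\|$ in the stated $L$, is correct only when $m_t\equiv 1$. Your reading with $\|\ora X_t-m_tX_0\|$ is the right definition of $L$ in general (e.g.\ in the VP case). Treating finiteness of $L$ as an implicit assumption is also what the paper does tacitly.
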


Theorem~\ref{th:rate_general_SGD} recovers the standard non-asymptotic
$\mathcal O(\log n/\sqrt n)$ convergence rate for nonconvex stochastic
optimization, while making explicit the dependence on the mini-batch size,
the stochastic gradient variance, and the time-dependent weighting of the denoising score-matching objective.

For a fixed time $t\in[0,T]$, the bound in Theorem~\ref{th:rate_general_SGD} with time reweighting scales as $\lambda(t)/(\sigma_t^2 q(t))$, where $\lambda(t)$ is the weighting function and $q(t)$ is the time-sampling density. This factor reveals a trade-off between the small- and large-noise regimes in the optimization error. In the case of uniform time sampling, i.e. $q(t)=1$, if $\lambda(t)$ grows more slowly than $\sigma_t^2$, then this factor becomes large as $\sigma_t \to 0$. This leads to a deterioration of the bound in the low-noise regime, where accurate score estimation is particularly important.
On the other hand, if $\lambda(t)$ grows faster than $\sigma_t^2$, then the bound becomes large in the large-noise regime. At the same time, the objective overweights large-noise levels, where learning is typically easier and less important for generation. The choice $\lambda(t)=\sigma_t^2$ therefore balances these two regimes by keeping the reweighting factor controlled across time. This provides a natural optimization-based justification for the weighting commonly used in score-based diffusion models~\citep{song2021scorebased}.

However, this result only guarantees convergence in terms of the expected squared gradient norm and does not directly control the score-approximation error, which is one of the three components of the final sampling error. Moreover, although the assumptions cover a broad class of smooth neural network architectures, they exclude commonly used non-smooth activations such as ReLU. To address these limitations, the next section focuses on overparameterized two-layer ReLU networks and uses an NTK analysis to derive direct error bounds along the SGD trajectory.

\section{Overparameterized Two-Layer Neural Networks: An NTK Approach}

\subsection{Setting}

We now specialize the denoising parameterization introduced above to an
overparameterized two-layer ReLU network and analyze the resulting SGD dynamics through a matrix-valued NTK argument. In contrast with standard NTK analyses, diffusion training involves unbounded noised inputs, vector-valued targets and an additional time variable \citep{han2024neural}.

\paragraph{Training objective with noisy target. } 
The score satisfies the identity (see, e.g., Lemma~B.8 of \citet{strasman2025an})
\[
\nabla \log p_t(x)
=
\frac{1}{\sigma_t^2}
\left(
m_t \E[\ora X_0 \mid \ora X_t=x]-x
\right) \eqsp.
\]
Therefore, the denoising score matching objective~\eqref{eq:score_matching} can be written as
\begin{align}
\mathcal L_{\rm DSM}(\theta)
&=
\frac12 \int_0^T c(t)\,
\E \left[
\|\fnet(\ora X_t,t)- f^\star(\ora X_t,t)\|^2
\right] q(t) \rmd t ,
\label{eq:objective_function}
\end{align}
where
\[
c(t)\eqdef \frac{\lambda(t)m_t^2}{q(t)\sigma_t^4}>0,
\qquad
f^\star(x,t)\eqdef \E[X_0\mid \ora X_t=x],
\qquad
\snet(x,t)
\eqdef
\frac{1}{\sigma_t^2}\bigl(m_t\fnet(x,t)-x\bigr) \eqsp.
\]
Since $f^\star$ is generally unavailable, diffusion models are trained using the noisy target $X_0$ \citep{karras2022elucidating}. Let $\ora \X\eqdef(\ora X_\tau,\tau)$ with $\tau\sim q$, and define
\[
\xi \eqdef X_0-f^\star(\ora \X),
\qquad
\E[\xi\mid \ora \X]=0,
\qquad
\tau_c^2 \eqdef \E[c(\tau)\|\xi\|^2] \eqsp.
\]

\paragraph{Neural network parameterization. }
% ,
% \qquad x\in\R^d,\quad t\in(0,T] \eqsp.
% $$
We consider a two-layer fully-connected ReLU network with vector-valued output in $\R^d$. Since the score function depends on both the spatial variable and time, we work with the augmented input $ \x \eqdef (x,t)\in\R^{d} \times (0,T]$. Let $A\in\R^{d\times m}$ denote the fixed output-layer weights, and let $W\in\R^{m\times(d+1)}$ denote the trainable first-layer weights. We consider the network
\begin{equation} \label{eq:net_def}
\net (\x;W) \eqdef \frac{1}{\sqrt m}A D_{\x,W} W \x
= \frac{1}{\sqrt m}\sum_{i=1}^m A_{\cdot i}\mathbf 1_{\{W_i^\top \x\ge 0\}}W_i^\top \x
\in\R^d \eqsp,
\end{equation}
where, for each $i\in\{1,\dots,m\}$, $W_i\in\R^{d+1}$ denotes the column vector such that the $i$-th row of $W$ is $W_i^\top$, $A_{\cdot i}\in\R^d$ denotes the $i$-th column of $A$, and
$$
D_{\x,W}
\eqdef
\operatorname{diag} \left(
\mathbf 1_{\{W_1^\top \x \ge 0\}},\dots,\mathbf 1_{\{W_m^\top \x \ge 0\}}
\right)\in\R^{m\times m}
$$
is the diagonal activation matrix. Following the standard NTK parameterization, only the first-layer weights $W$ are updated during training, whereas the matrix $A$ is initialized with independent Rademacher entries and then kept fixed.

\paragraph{Stochastic Gradient Descent. }
The SGD update in \eqref{eq:SGD} specialized to this setting is given by
\begin{equation}
\label{eq:SGD_update}
W^{(k+1)}
=
W^{(k)}
-
\gamma_k
\nabla_W
\left(
\frac12 c(t^{(k)})
\left\|
\net(\ora \X^{(k)},W^{(k)})-X_0^{(k)}
\right\|^2
\right) \eqsp,
\end{equation}
where $\param[k]$ denotes the $k$-th iterate and $\ora \X^{(k)}
= \bigl(\ora X_{t^{(k)}}^{(k)},t^{(k)}\bigr)$.
% We denote the $k$-th iterate by $\param[k]$. We also introduce $\loss_k \in \R^d$ defined by
% $$
% \loss_k
% \eqdef
% \net \left(\ora \X^{(k)};\param^{(k)}\right)-X_0^{(k)}
% =
% \net \left(\ora \X^{(k)};\param^{(k)}\right)
% -
% \left(f^\star \left(\ora \X^{(k)}\right)+\xi_k\right) \eqsp.
% $$
% where
% $$
% \ora \X^{(k)}
% \eqdef
% \bigl(\ora X_{t^{(k)}}^{(k)},t^{(k)}\bigr),
% \qquad
% \xi_k
% \eqdef
% X_0^{(k)}-f^\star\!\bigl(\ora \X^{(k)}\bigr) \eqsp.
% $$
The SGD update \eqref{eq:SGD_update} can be written as
\begin{equation} \label{sgd_updates}
\param[k+1] = \param[k] - \gamma_k\,c(t^{(k)})\,
\frac{1}{\sqrt m}\,
D_k A^\top \left(\net \left(\ora \X^{(k)};\param^{(k)}\right)-X_0^{(k)}\right) \,(\ora \X^{(k)})^\top \eqsp,
\end{equation}
where $D_k \eqdef D_{\ora \X^{(k)},\param[k]}
=
\operatorname{diag} \left(
\mathbf 1_{\{ \param_1^{(k)\top}  \ora \X^{(k)}\ge 0\}},
\dots,
\mathbf 1_{\{ \param_m^{(k)\top}  \ora \X^{(k)}\ge 0\}}
\right)$. 
% Similarly, for each $i\in\{1,\dots,m\}$, the $i$-th row (viewed as a column vector in $\R^{d+1}$) satisfies
% \[
% \param[k+1]_i = \param[k]_i
% -\gamma_k\,c(t^{(k)})\,
% \frac{1}{\sqrt m}\,
% \mathbf 1_{\{ \param_i^{(k)\top}  \ora \X^{(k)}\ge 0\}}
%  A_{\cdot,i}^\top \left(\net \left(\ora \X^{(k)};\param^{(k)}\right)-X_0^{(k)}\right)
% \ora \X^{(k)} \eqsp.
% \]
The derivation is provided in Appendix~\ref{proof:sgd_udpates}.

\paragraph{Neural Tangent Kernel.}
For each iteration $k\ge 0$, we define the empirical matrix-valued neural tangent kernel associated with
\eqref{eq:net_def} by
\begin{equation}
\label{eq:empirical_ntk}
K_k(\x,\x')
\eqdef
\sum_{i=1}^m
\nabla_{\param_i}\net(\x;\param[k])
\nabla_{\param_i}\net(\x';\param[k])^\top
\in \R^{d\times d} \eqsp.
\end{equation}
Using the explicit form of the network, this kernel writes as
\begin{equation}
\label{eq:explicit_empirical_ntk}
K_k(\x,\x')
=
\frac{\x^\top \x'}{m}
A D_{\x,\param[k]}D_{\x',\param[k]}A^\top
=
\frac{\x^\top \x'}{m}
\sum_{i=1}^m
\mathbf 1_{\{ \param_i^{(k)\top}  \x\ge 0\}}
\mathbf 1_{\{ \param_i^{(k)\top} \x'\ge 0\}}
A_{\cdot i}A_{\cdot i}^\top \eqsp.
\end{equation}
At initialization, under the Gaussian initialization of the first-layer weights and independently initialized Rademacher output weights, $K_0$ concentrates around the deterministic limiting kernel
\begin{equation}
\label{eq:limiting_ntk}
K_\infty(\x,\x')
=
(\x^\top \x')
\,\E_{w\sim \mathcal N(0,\Id_{d+1})}
\left[
\mathbf 1_{\{w^\top \x\ge 0\}}
\mathbf 1_{\{w^\top \x'\ge 0\}}
\right] \Id_d \eqsp.
\end{equation}
In the overparameterized regime, the empirical kernel $K_k$ remains close to its infinite-width
limit $K_\infty$ along training, yielding a linearized function-space characterization of the
network dynamics \citep{jacot2018neural, arora2019exact, du2019gradient, allen2019convergence}.

\subsection{Convergence Results}
\label{subsec:results}

Consider the following assumptions.

% \begin{assumption}\label{ass:bounded_input}
% The data distribution $\pi_{\rm data}$ admits a Lebesgue density $p_0$ on
% $\R^d$ and is compactly supported. That is, there exists $D>0$ such that
% $$
% \mathbb P(\|X_0\|\le D)=1 \eqsp.
% $$
% \end{assumption}

\begin{assumption}[Sub-Gaussian data tails]
\label{ass:subgaussian_data}
The data distribution $\pi_{\rm data} \in \mathcal{P}(\R^d)$ admits a density $p_0$ with respect to the Lebesgue measure and has sub-Gaussian tails: there exist constants $C_0,\nu_0>0$ such that, for all $r\ge 0$,
\begin{equation}
\mathbb P(\|X_0\|>r)\le C_0 \exp\ \left(-\frac{r^2}{2 \nu_0^2}\right) \eqsp.
\end{equation}
\end{assumption}

Assumption~\ref{ass:subgaussian_data} is standard in the analysis of score-based generative models and ensures that the data distribution has sufficiently light tails. This property is crucial for controlling the forward process $\ora X_t$ uniformly over $t\in[0,T]$, since these perturbed samples are used as inputs to the score network throughout the convergence analysis. We note that this assumption is satisfied by a broad class of distributions, including bounded distributions and Gaussian mixtures.

Let $\x=(x,t)\in\mathbb{R}^{d} \times [0,T]$ denote the augmented input and let $\ora \X := (\ora X_{\tau},\tau)$ be an independent draw of the random variable sampled for training. Fix a radius $R>0$ and define the localized region
$$
\mathcal{X}_R
\eqdef
\{(x,t)\in\R^d\times[0,T]:\|x\|\le R\} \eqsp,
\qquad
B_R\eqdef \sqrt{R^2+T^2} \eqsp ,
\qquad
c_\infty\eqdef \sup_{t\in[0,T]} c(t) \eqsp.
$$
In what follows, we assume that $c_\infty < \infty$. This condition can be ensured by early stopping along the diffusion time horizon, which is commonly used in the literature \citep{chen2023improved} or by making appropriate design choices for the functions $\beta_t$ and $\lambda(t)$. 
We also define the
localized weighted measure
\begin{equation}
\label{eq:def_mu_R_main}
\mu_R(\rmd x,\rmd t)
\eqdef
c(t)\,q(t)\,p_t(x)\,\mathbf 1_{\{\|x\|\le R\}}\,\rmd x\,\rmd t \eqsp,
\end{equation}
and the associated norm $\|g\|_{L^2(\mu_R)}^2
\eqdef
\int_{\R^d\times[0,T]} \|g(x,t)\|^2\,\mu_R(\rmd x,\rmd t)$, for every measurable $g:\R^d\times[0,T]\to\R^d$. Define the prediction error and its projected version onto the Euclidean ball
of $\R^d$ of radius $R>0$ at iteration $k$ by
\begin{align} \label{eq:def_pred_err_proj}
\Delta_k(\x) := f^{\star}(\x) - f_{\theta}(\x, \param[k]),
\qquad
\Delta_k^R(\x) := \Pi_R \left( f^{\star}(\x) \right) - \Pi_R \left( f_{\theta}(\x, \param[k]) \right) \eqsp.
\end{align}
It follows that the projected DSM loss after $n$ iterations writes as  
\begin{align*}
\mathcal L_{\rm DSM}^\Pi (\param^{(n)})
&=
\frac12 \int_0^T c(t)
\E \left[
\| \Delta_n^R (\ora X_t,t) \|^2
\right] q(t) \rmd t  = \frac12 \E \left[c(\tau) \|\Delta_n^R(\ora \X)\|^2\right] \eqsp.
\end{align*}

\begin{theorem}
\label{th:rate_two_layer_SGD}
Assume that Assumption~\ref{ass:subgaussian_data} holds. Let $f_\theta$ be the neural network defined in~\eqref{eq:net_def}, and let $(\param^{(n)})_{n\ge 0}$ be the sequence generated by the SGD recursion~\eqref{eq:SGD_update}, with step size $\gamma_{n} = C_{\gamma}/(n+1)$ , where $0<C_\gamma\le \Gamma_R$ and $\Gamma_R$ is defined in~\eqref{eq:step_size_condition}. Assume moreover that the iterates are stopped at some horizon $N\ge 1$ and that
\begin{align}
\label{eq:approx_condition_width}
m \gtrsim \mathcal C_R e^{2\Lambda_R C_\gamma} (N+1)^{4\Lambda_R C_\gamma} d^3 \bigl(d+\log(1/\delta)\bigr) \log^4(\rme(N+1)) \eqsp, 
\end{align}
where $\mathcal C_R$ is defined in~\eqref{eq:def_C_R}.
Then, with probability at least $1-\delta$ over the initialization, for every
$1\le n\le N$ and every $R> \sigma_T\mu_Z$,
\begin{equation}
\label{eq:two_layer_SGD_bound}
\begin{aligned}
&\E_{\sgd} \left[\mathcal L_{\rm DSM}^\Pi (\param^{(n)}) \right] 
\le R\sqrt{\mu_R(\mathcal{X}_R)} \sup_{0\le j\le N}
\inf_{r\in\mathbb N}
\left\{
\left(\prod_{k=0}^{j-1}(1-\gamma_k\lambda_r)\right)
\|\Delta_0\|_{L_2(\mu_R)}
+
\mathcal R(\Delta_0,r)
\right\} \\
&\quad+ C \mu^{3/2}_R(\mathcal{X}_R) B_R^3 d
C_\gamma \log (N+1) \sqrt{\frac{d\log(m)+\log(1/\delta)}{m}} \left(\|\Delta_0\|_{L_2(\mu_R)}+\tau_c\right) \\
&\quad+ C\left( 1+\sqrt{\frac d m} +\sqrt{\frac{\log(1/\delta)}{m}} \right)^2 \sqrt{c_\infty} \mu_R(\mathcal{X}_R) B_R^3 C_{\gamma} e^{\Lambda_R C_{\gamma}} \left(1+\frac{1}{1-2\Lambda_R C_\gamma}\right)^{1/2}  \\
& \qquad \qquad \qquad \times \left(\|\Delta_0\|_{L_2(\mu_R)}+\tau_c\right) \\
% &\qquad+ 2(1+C_0)NR^2 \int_0^T c(t)q(t)
% \exp\!\left(
% -\frac{(R-\sigma_t\mu_Z)^2}{8\max(\sigma_t^2,m_t^2\nu_0^2)}
% \right)\rmd t \eqsp,
&\quad+ 2(1+C_0) (1+N) c_\infty R^2
\int_0^T q(t) \exp\!\left(
-\frac{(R-\sigma_t\mu_Z)^2}{8\max(\sigma_t^2,m_t^2\nu_0^2)}
\right)  \rmd t \eqsp,
\end{aligned}
\end{equation}
where $(\lambda_k)_{k\ge1}$ denote the eigenvalues of the limiting kernel
operator $\mathsf K_\infty$, ordered in non-increasing order, and
$\mathcal R(\Delta_0,r)$ is the spectral remainder of the initial error
$\Delta_0$ defined in~\eqref{eq:def_B_R}. Also $\mu_Z \eqdef \E \| Z \|$ for $Z \sim \mathcal{N}(0,\Id_d)$ and $C>0$ denotes a universal constant, independent of $n,m,d,R,\delta$ and the initialization.
\end{theorem}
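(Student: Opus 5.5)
The plan is to compare the SGD trajectory, in function space, with a linearized dynamics driven by the limiting kernel $K_\infty$. All error terms are controlled on the localized region $\mathcal X_R$, and the tail contribution outside it is handled separately.

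\textbf{Step 1: inside/outside split.} Since $\|\Pi_R(y)-\Pi_R(y')\|\le 2R$ everywhere, we write
\[
\E\bigl[c(\tau)\|\Delta_n^R(\ora\X)\|^2\bigr]\le \E\bigl[c(\tau)\|\Delta_n^R\|^2\mathbf 1_{\{\|\ora X_\tau\|\le R\}}\bigr]+4R^2c_\infty\,\mathbb P(\|\ora X_\tau\|>R).
\]
For the tail, we use $\ora X_t=m_tX_0+\sigma_tZ$, Gaussian concentration of $\|Z\|$ around $\mu_Z$, and Assumption~\ref{ass:subgaussian_data}. This gives $\mathbb P(\|\ora X_t\|>R)\le (1+C_0)\exp(-(R-\sigma_t\mu_Z)^2/(8\max(\sigma_t^2,m_t^2\nu_0^2)))$, valid because $R>\sigma_T\mu_Z$. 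A union bound over the $N+1$ sampled inputs gives the event on which all training inputs lie in $\mathcal X_R$; this produces the factor $(1+N)$ and the last term of~\eqref{eq:two_layer_SGD_bound}. On that event, every SGD step only sees inputs with $\|\ora\X^{(k)}\|\le B_R$.

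For the inside term, the projection is $1$-Lipschitz and the projected error is bounded by $2R$. Hence $\int\|\Delta_n^R\|^2\,d\mu_R\le 2R\int\|\Delta_n\|\,d\mu_R\le 2R\sqrt{\mu_R(\mathcal X_R)}\,\|\Delta_n\|_{L_2(\mu_R)}$ by Cauchy--Schwarz. It therefore suffices to bound $\E_{\sgd}\|\Delta_n\|_{L_2(\mu_R)}$ linearly, which explains the prefactor $R\sqrt{\mu_R(\mathcal X_R)}$.

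\textbf{Step 2: function-space recursion.} From~\eqref{sgd_updates}, we compute $f(\x;W^{(k+1)})-f(\x;W^{(k)})$. It equals $-\gamma_k c(t^{(k)})K_k(\x,\ora\X^{(k)})(f(\ora\X^{(k)};W^{(k)})-X_0^{(k)})$ plus a remainder from activation-pattern flips. Writing $X_0^{(k)}=f^\star(\ora\X^{(k)})+\xi_k$, we get
\[
\Delta_{k+1}=(\mathrm{Id}-\gamma_k\mathsf K_\infty)\Delta_k+\gamma_k M_k+\gamma_k(E^{\rm ker}_k+E^{\rm flip}_k),
\]
with the following pieces:
\begin{itemize}
\item $\mathsf K_\infty g(\x)=\int K_\infty(\x,\x')g(\x')\mu_R(d\x')$ is the population operator in $L^2(\mu_R)$.
\item $M_k$ is a martingale difference collecting the sampling fluctuation of the empirical operator and the noise $\xi_k$; its conditional second moment is $\lesssim B_R^4 c_\infty(\|\Delta_k\|^2+\tau_c^2)$.
\item $E^{\rm ker}_k$ comes from replacing $K_k$ by $K_\infty$.
\item $E^{\rm flip}_k$ comes from sign changes.
\end{itemize}

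\textbf{Step 3: controlling the three pieces.} The step-size condition $C_\gamma\le\Gamma_R$ ensures $\|\mathrm{Id}-\gamma_k\mathsf K_\infty\|\le1$.

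For the deterministic part, we unroll and expand $\Delta_0$ in the eigenbasis of $\mathsf K_\infty$. Modes $\ell\le r$ contract by $\prod_k(1-\gamma_k\lambda_r)$, and the remaining modes contribute $\mathcal R(\Delta_0,r)$. Optimizing over $r$ and taking the supremum over $j$ gives the first term.

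For the martingale part, we sum $\sum_k\gamma_k(\prod_{l>k}(\mathrm{Id}-\gamma_l\mathsf K_\infty))M_k$ and bound its $L^2$ norm by orthogonality. This requires an a priori bound $\E\|\Delta_k\|^2\lesssim e^{2\Lambda_RC_\gamma}(k+1)^{2\Lambda_RC_\gamma}(\|\Delta_0\|+\tau_c)^2$, obtained from a Grönwall-type recursion $\E\|\Delta_{k+1}\|^2\le(1+\gamma_k\Lambda_R)^2\E\|\Delta_k\|^2+\gamma_k^2(\cdots)$. Since $\sum\gamma_k^2(k+1)^{2\Lambda_RC_\gamma}$ converges when $2\Lambda_RC_\gamma<1$, this yields the factor $(1+1/(1-2\Lambda_RC_\gamma))^{1/2}$ and the third term.

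For the kernel-deviation part, concentration of $K_0$ around $K_\infty$ uniformly on $\mathcal X_R$ follows from an $\varepsilon$-net over $\mathcal X_R$ together with matrix Bernstein for the Rademacher/Gaussian sums. This yields $\sup\|K_0-K_\infty\|\lesssim B_R^2\sqrt{(d\log m+\log(1/\delta))/m}$. Summing $\gamma_k$ gives $\log(N+1)$, which produces the second term.

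\textbf{Step 4 (main obstacle): lazy training.} We must show that $\|W^{(k)}-W^{(0)}\|$ stays small enough that only few neurons flip on $\mathcal X_R$. Each step moves $W_i$ by at most $\gamma_k\sqrt{c_\infty}B_R\|\Delta\text{-residual}\|/\sqrt m$, so the cumulative movement per neuron is $\lesssim C_\gamma\log(N+1)(N+1)^{\Lambda_RC_\gamma}/\sqrt m$. Anti-concentration of $W_i^{(0)\top}\x$ under the Gaussian initialization then bounds the fraction of flipped neurons. Combined with a high-probability control of the residuals along the trajectory, requiring $m$ as in~\eqref{eq:approx_condition_width} keeps $E^{\rm flip}_k$ and $K_k-K_0$ below the kernel concentration error.

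Making this rigorous with unbounded random residuals is the delicate part. I would first try a stopping-time argument on the event where the residuals stay below the a priori bound, with the $\log^4$ and polynomial-in-$N$ factors in $m$ absorbing the union bounds.
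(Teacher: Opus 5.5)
Your Steps~1--3 match the paper's architecture: localization with a union bound over the training inputs, spectral contraction under $\mathsf K_\infty$, martingale orthogonality for the SGD noise, and the Gr\"onwall bound $\sigma_{k+1}^2\le(1+\gamma_k\Lambda_R)^2\sigma_k^2$. That last bound needs no lazy-training input, only the crude count $\widetilde S_k\le m$.

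The genuine gap is Step~4. It carries every finite-width term, you leave it open, and the route you sketch would not deliver the stated bound. Pathwise control of $\|\loss_r\|$ along the trajectory is not available, for three reasons:
your a priori bound is only in $L_2$ over the SGD stream;
$\loss_r$ contains the unbounded noise $\xi_r$ and the point value $\Delta_r(\ora\X^{(r)})$, which no $L_2(\mu_R)$ bound controls pathwise;
the theorem has probability only over the initialization, with $\E_{\sgd}$ on the left, so the failure event of a stopping time would add a term that is absent from \eqref{eq:two_layer_SGD_bound}.
Pointwise anti-concentration is also insufficient. You need uniformity over $\x\in\mathcal X_R$, and the flip radius depends on the trajectory, hence on $W^{(0)}$ itself.

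The missing idea is a VC law of large numbers for the slab class $\{w\mapsto\mathbf 1_{\{|w^\top u|\le t\}}:u\in\mathbb S^d,\ t\ge 0\}$, uniform in direction \emph{and} threshold. It yields a single initialization event of probability $1-\delta$ on which, for every realization of the SGD stream and every $k$,
\[
\frac{\|S_k\|_{\infty,R}}{m}\le\frac{2B_R\sqrt d}{\sqrt{2\pi m}}\,\eta_k+C\sqrt{\frac{d+\log(1/\delta)}{m}}\eqsp,\qquad \eta_k=\sum_{r<k}\gamma_r\,c(t^{(r)})\,\|\loss_r\|\eqsp.
\]
Because this is linear in $\eta_k$, only first moments are needed:
$\E_{\sgd}[c(t^{(r)})\|\loss_r\|\un_{\Omega_{r+1,R}}]\le\sqrt{\mu_R(\mathcal X_R)}\,(\E_{\sgd}[\|\Delta_r\|_{L_2(\mu_R)}\un_{\Omega_{r,R}}]+\tau_c)$.
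These moments are closed by an induction $\E_{\sgd}[\|\Delta_n\|_{L_2(\mu_R)}\un_{\Omega_{n,R}}]\le S_N$ for $n\le N$. The width condition \eqref{eq:approx_condition_width} is exactly what absorbs the self-referential contributions. For the remainder $\varepsilon_k$, the bound $(\|S_k\|_{\infty,R}+\|S_{k+1}\|_{\infty,R})/m\le 2$ lets Cauchy--Schwarz reduce everything again to $\sigma_k$ and $\E_{\sgd}[\eta_{k+1}]$.

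Two further steps fail as written.

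First, your uniform concentration of $K_0$ via an $\varepsilon$-net plus matrix Bernstein breaks down. The map $\x\mapsto K_0(\x,\x')$ is a sum of indicators, hence discontinuous, so control at net points does not transfer to all of $\mathcal X_R$ unless you again count near-boundary neurons uniformly. The paper instead conditions on $W^{(0)}$ and uses Sauer--Shelah to reduce the supremum over $(\x,\x')$ to at most $(\rme m)^{C(d+1)}$ activation patterns. It then union-bounds a sub-Gaussian covariance estimate for the independent Rademacher columns of $A$; this is where the $d\log m$ in the second term comes from.

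Second, with $\mathsf K_\infty$ inside your recursion, the kernel error is $(\mathsf K_\infty-\mathsf K_k)\Delta_k$. It multiplies the random, possibly growing $\Delta_k$ and is correlated with the drift $\mathsf K_k-\mathsf K_0$. So summing $\gamma_k$ does not by itself give a term proportional to $\|\Delta_0\|_{L_2(\mu_R)}$. The paper keeps $\mathsf Q_k=\mathsf I-\gamma_k\mathsf K_k$ in the recursion and uses
$\prod_s\mathsf Q_s-\prod_s\mathsf P_s=\sum_r\bigl(\prod_{i>r}\mathsf Q_i\bigr)\mathsf D_r\bigl(\prod_{j<r}\mathsf P_j\bigr)$
with $\|\mathsf P_j\|_{\rm op},\|\mathsf Q_i\|_{\rm op}\le 1$. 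The kernel drift then acts only on $\Delta_0$ and needs only $\E_{\sgd}[\eta_k]$. In your arrangement you would instead have to feed in the induction bound and absorb the result through the width condition.
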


The bound in Theorem~\ref{th:rate_two_layer_SGD} decomposes into four contributions. The first term captures the optimization dynamics along the spectrum of the limiting kernel $\mathsf K_\infty$ and reflects a bias--approximation trade-off through a spectral cutoff parameter $r$. In particular, the product term captures the contraction induced by SGD along the eigendirections, while $\mathcal R(\Delta_0,r)$ quantifies the approximation error associated with projecting onto the leading eigenspaces. The second and third terms quantify the impact of stochastic gradients and finite-width approximation. They scale inversely with the width $m$ and depend explicitly on the dimension $d$, the localization radius $R$, and the variance of the stochastic gradients. Controlling these terms requires the width condition in~\eqref{eq:approx_condition_width}, which highlights the role of overparameterization in limiting the accumulation of noise along the SGD trajectory. Compared with existing NTK analyses for SGD~\citep{zhu2021one} and gradient-descent analyses for diffusion models~\citep{han2024neural}, our bound yields an improved dependence on the width $m$ and the dimension $d$. The final term arises from the localization step. It controls the probability
that the noised input leaves the truncated region $\mathcal{X}_R$, and is
therefore exponentially small in $R$ under Assumption~\ref{ass:subgaussian_data}.
This contribution is specific to the present localized analysis and replaces the
bounded-input assumptions commonly used in standard NTK treatments. Overall, the theorem shows that, under suitable width and early stopping conditions, SGD for score-based generative models with overparameterized two-layer ReLU networks achieves a controlled trade-off between stochastic optimization, finite-width, and localization errors. Consequently, it yields a direct bound on the score-approximation error along the training trajectory.

% The last term arises from the localization argument and controls the contribution of samples for which the forward process leaves the ball of radius $R$. Assumption~\ref{ass:subgaussian_data} ensures that this term decays exponentially in $R$ and can therefore be made negligible by choosing $R$ sufficiently large. This contribution is specific to the present localized analysis and does not appear in standard NTK analyses, where the data are typically assumed to be bounded or supported on the unit sphere.

For a fixed time $t\in[0,T]$, the dominant terms in the bound scale as
$c(t)^{3/2} R^3$. If the localization radius is chosen of the same order as the
noise level, i.e., $R\propto \sigma_t$, this suggests the balanced scaling
$c(t)\propto 1/\sigma_t^2$. Similarly to the reweighting analysis in the
previous section, taking $\lambda(t)=\sigma_t^2$ and $q(t)=1$ yields, under the present parametrization, $c(t)=m_t^2/\sigma_t^2$, which matches this scaling up to the signal factor $m_t^2$.
This corresponds to a signal-to-noise-ratio weighting, which is widely used in diffusion models and has been instrumental in achieving state-of-the-art results in image and video generation \citep{dhariwal2021diffusion,rombach2022high,ho2022video,hang2023efficient}. Different works use different parametrizations, for instance predicting the
clean data, the noise, or the score directly. These choices modify the prefactor appearing in front of the loss, but the underlying choice $\lambda(t)=\sigma_t^2$ remains a common and natural reweighting. It is also worth noting that \citet{han2024neural} analyze the case $c(t)=1$,
which corresponds in our parametrization to $\lambda(t)=\sigma_t^4$ and matches the unweighted MSE objective used in their NTK analysis. In Section~\ref{sec:experiments}, we show that this choice often leads to weaker empirical performance.
Our analysis therefore provides an optimization-based justification for the
widely used signal-to-noise-ratio weighting scheme.

\paragraph{Sketch of proof.} Because the noised input $\ora X_t=m_tX_0+\sigma_t Z$ is unbounded, we first
localize the analysis to a truncated region $\mathcal{X}_R$ and project both the target and the network output onto the Euclidean ball of radius $R$. This
reduces the projected DSM loss to a localized  error plus an
exponentially small tail term. On the localized event, we derive the recursion
$$
\Delta_{k+1}
=
(\Id-\gamma_k\mathsf K_k)\Delta_k - v_k + \varepsilon_k,
$$
where $\mathsf K_k$ is the empirical NTK, $v_k$ is the SGD noise, and $\varepsilon_k$ is the nonlinear linearization error. Comparing
$\mathsf K_k$ with its infinite-width limit $\mathsf K_\infty$ yields four
terms: contraction, kernel drift, stochastic fluctuation, and nonlinear remainder. The contraction term is controlled spectrally through $\mathsf K_\infty$; the
stochastic term is handled by martingale estimates; and the kernel drift and
remainder are controlled by sign-flip bounds together with VC-dimension
arguments that provide uniform concentration over activation patterns. This
yields a non-asymptotic bound in which the leading term is the contraction of
the limiting NTK and the lower-order terms capture finite-width effects, stochastic gradients, and feature drift. A detailed proof is given in Appendix \ref{app:NTK_proof}.

 \paragraph{From score learning to generation error.}
In settings with bounded data and clipped neural network outputs, Girsanov's theorem applies and Lemma~B.5
of~\citet{strasman2025an} yields the standard decomposition of the generative
error into three contributions: terminal-law mismatch (or mixing error), score approximation, and
time discretization. The first and third terms are independent of the training
dynamics, while the second isolates the error induced by replacing the true
score with the learned model. In such settings, Theorem~\ref{th:rate_two_layer_SGD}
provides an upper bound on this score-approximation term. This shows that our optimization result can be modular and be combined with Girsanov-based stability analyses of SGMs \citep{shi2023diffusion,chen2023sampling,conforti_kl, strasman2025an}.

\section{Experiments}
\label{sec:experiments}

We study the effect of the reweighting factor in the denoising
score-matching loss. This experiment is intended as an empirical illustration of the qualitative effect of the weighting schedule in a standard diffusion architecture. We train a score-based generative model following
\citet{song2021scorebased}, parameterizing the score function with the
NCSN++ architecture \citep{song2021scorebased}. The model is trained on
the Leeds Butterfly dataset \citep{wang2009learning} for $200{,}000$
iterations. Samples are generated using the Euler--Maruyama discretization
with $1{,}000$ steps. The experiments were conducted using NVIDIA RTX 6000 GPUs with 48 GB of VRAM.

\begin{wrapfigure}{r}{0.5\textwidth} 
  \centering
  \includegraphics[width=0.48\textwidth]{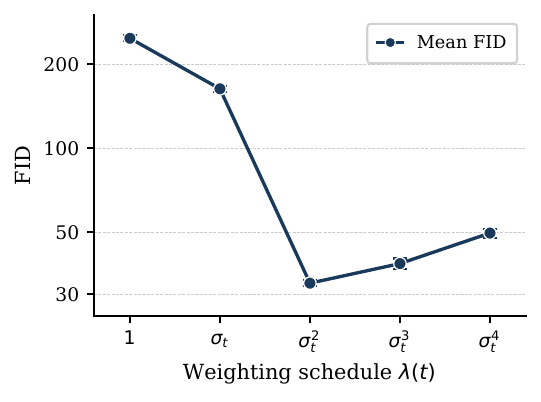} 
  \caption{Effect of the loss weighting schedule $\lambda(t)$ on generation quality evaluated by FID (lower is better) on the Leeds Butterfly dataset. Results are averaged over five independent runs, with error bars indicating the standard deviation.}
  %\vskip -0.2in
  \label{fig:fid_butterflies}
\end{wrapfigure}

We examine the effect of the loss-weighting schedule $\lambda(t)$ in the
denoising score-matching objective by considering five choices:
$\lambda(t) \in \{1, \sigma_t, \sigma_t^2, \sigma_t^3, \sigma_t^4\}$ with uniform time sampling, i.e., $q(t)=1$ for $t \in [0,T]$.
For each schedule, we train a separate model and evaluate generation 
quality using the Fréchet Inception Distance (FID) \citep{heusel2017gans}, keeping all other training and sampling settings fixed. This allows us to isolate the effect of the loss-weighting schedule on generation quality.

The results are reported in Figure~\ref{fig:fid_butterflies}. Since the optimization error contributes to the score-learning component of the generation error, the choice of $\lambda(t)$ can influence the quality of the learned score function and, consequently, the quality of generated samples.

Consistently with the theoretical analysis, $\lambda(t) = \sigma_t^2$
achieves the lowest FID among the schedules considered.
This result can be interpreted as follows. For $\lambda(t) = 1$ and
$\lambda(t) = \sigma_t$, the loss assigns insufficient weight to the
small-noise regime, i.e., as $t \to 0$, where the score is hardest to
estimate and most critical for sample quality. This is consistent with
our theoretical bound, which becomes large for small $t$, and provides
an additional explanation for the degraded performance and high FID
observed under these schedules.
Conversely, for $\lambda(t) = \sigma_t^3$ and $\lambda(t) = \sigma_t^4$, 
the weighting becomes excessively large in the high-noise regime, 
i.e., as $t \to T$, concentrating the optimization effort on 
time steps where the signal-to-noise ratio is low and the score 
estimation is less informative for generation. As a result, the gradient 
signal at small $t$ is relatively diminished, impairing fine-grained 
score estimation near the data distribution, leading to degraded sample 
quality compared to $\sigma_t^2$.

The schedule $\lambda(t) = \sigma_t^2$ provides a balanced trade-off 
between these two regimes: it avoids overweighting the high-noise 
regime, unlike $\sigma_t^3$ and $\sigma_t^4$, while assigning 
sufficient weight to the small noise regime to prevent gradient 
explosion and keep the optimization bound controlled. This is 
consistent with the signal-to-noise ratio weighting discussed in 
Section~\ref{subsec:results}, where $\lambda(t) = \sigma_t^2$ has been 
empirically shown to yield state-of-the-art generative performance. Overall, the empirical results are consistent with the theoretical analysis, which suggests that $\lambda(t) = \sigma_t^2$ balances the optimization difficulty across noise levels. While this experiment is illustrative rather than exhaustive, it supports the view that this weighting is a natural choice from an optimization perspective.

% \textbf{TO DO: }
% \begin{itemize}
%     \item Datasets: Funnel, MG 25, Butterflies
%     \item Model: VP or VE?
%     \item Analyze the setting with fixed noise, varying $\sigma^2$, to illustrate the impact of optimization.
%     \item Try the following configurations:
%     \begin{itemize}
%     \item $\lambda(t)=1$ and $q(t)=\mathcal{U}([0,T])$
%     \item $\lambda(t)=\sigma_t$ and $q(t)=\mathcal{U}([0,T])$
%     \item $\lambda(t)=\sigma_t^2$ and $q(t)=\mathcal{U}([0,T])$
%     \item $\lambda(t)=\max(\sigma_t^2,\epsilon)$ and $q(t)=\mathcal{U}([0,T])$
%     \item $\lambda(t)=\min(\sigma_t^2,c)$ and $q(t)=\mathcal{U}([0,T])$
%     \item ($\lambda(t)=\beta_t$ and $q(t)=\mathcal{U}([0,T])$)
%     \item $\lambda(t)=1$ and $q(t)\propto \frac{\beta(t)}{\sigma_t^2}$
% \end{itemize}
% \end{itemize}

\section{Discussion}

This paper provides score-approximation guarantees, and in particular optimization guarantees, for score-based generative models trained with SGD. For general score parameterizations, we show that SGD converges to an approximate stationary point of the weighted denoising score-matching objective, with explicit dependence on the schedule-dependent weighting factors. For overparameterized two-layer ReLU networks, our NTK analysis gives a direct bound on the score-approximation error along the SGD trajectory. This connects the optimization dynamics of denoising score matching to the score-learning term appearing in sampling guarantees, thereby clarifying how training error can affect the final sampling error.
A main implication of our analysis is the role of the reweighting schedule. Our results provide theoretical guidance, from an optimization perspective, for weighting choices used in practice.
One limitation is that our NTK analysis focuses on overparameterized two-layer ReLU networks. Although this setting helps us understand the optimization dynamics of diffusion models, extending the theory to deep U-Net architectures remains an important open question. A natural direction for future work is also to extend the present analysis to kinetic noising processes. In such models, the forward diffusion is augmented with a velocity variable, which may act as a powerful regularization mechanism. Empirically, kinetic SGMs often lead to improved generation error \citep{dockhorn2022score} but this improvement has not been proved by previous stability analyses \citep{conforti_kl, strasman2026wasserstein}. An interesting open problem is therefore to determine whether such a benefit can be captured within our framework through the score-learning term and the induced training dynamics.

\clearpage
\bibliographystyle{abbrvnat}
\bibliography{ref}

\begin{thebibliography}{48}
\providecommand{\natexlab}[1]{#1}
\providecommand{\url}[1]{\texttt{#1}}
\expandafter\ifx\csname urlstyle\endcsname\relax
  \providecommand{\doi}[1]{doi: #1}\else
  \providecommand{\doi}{doi: \begingroup \urlstyle{rm}\Url}\fi

\bibitem[Allen-Zhu et~al.(2019)Allen-Zhu, Li, and Song]{allen2019convergence}
Z.~Allen-Zhu, Y.~Li, and Z.~Song.
\newblock A convergence theory for deep learning via over-parameterization.
\newblock In \emph{International Conference on Machine Learning}, pages 242--252. PMLR, 2019.

\bibitem[Anderson(1982)]{anderson1982reverse}
B.~D.~O. Anderson.
\newblock Reverse-time diffusion equation models.
\newblock \emph{Stochastic Processes and their Applications}, 12\penalty0 (3):\penalty0 313--326, 1982.

\bibitem[Arora et~al.(2019)Arora, Du, Hu, Li, Salakhutdinov, and Wang]{arora2019exact}
S.~Arora, S.~S. Du, W.~Hu, Z.~Li, R.~R. Salakhutdinov, and R.~Wang.
\newblock On exact computation with an infinitely wide neural net.
\newblock In \emph{Advances in Neural Information Processing Systems}, volume~32, 2019.

\bibitem[Benton et~al.(2024)Benton, Bortoli, Doucet, and Deligiannidis]{benton2024nearly}
J.~Benton, V.~D. Bortoli, A.~Doucet, and G.~Deligiannidis.
\newblock Nearly \$d\$-linear convergence bounds for diffusion models via stochastic localization.
\newblock In \emph{International Conference on Learning Representations}, 2024.

\bibitem[Bottou et~al.(2018)Bottou, Curtis, and Nocedal]{bottou2018optimization}
L.~Bottou, F.~E. Curtis, and J.~Nocedal.
\newblock Optimization methods for large-scale machine learning.
\newblock \emph{SIAM {R}eview}, 60\penalty0 (2):\penalty0 223--311, 2018.

\bibitem[Chen et~al.(2023{\natexlab{a}})Chen, Lee, and Lu]{chen2023improved}
H.~Chen, H.~Lee, and J.~Lu.
\newblock Improved analysis of score-based generative modeling: User-friendly bounds under minimal smoothness assumptions.
\newblock In \emph{International Conference on Machine Learning}, volume 202, pages 4735--4763. PMLR, 2023{\natexlab{a}}.

\bibitem[Chen et~al.(2023{\natexlab{b}})Chen, Huang, Zhao, and Wang]{chen2023score}
M.~Chen, K.~Huang, T.~Zhao, and M.~Wang.
\newblock Score approximation, estimation and distribution recovery of diffusion models on low-dimensional data.
\newblock In \emph{International Conference on Machine Learning}, pages 4672--4712. PMLR, 2023{\natexlab{b}}.

\bibitem[Chen et~al.(2023{\natexlab{c}})Chen, Chewi, Li, Li, Salim, and Zhang]{chen2023sampling}
S.~Chen, S.~Chewi, J.~Li, Y.~Li, A.~Salim, and A.~Zhang.
\newblock Sampling is as easy as learning the score: theory for diffusion models with minimal data assumptions.
\newblock In \emph{International Conference on Learning Representations}, 2023{\natexlab{c}}.

\bibitem[Cole and Lu(2024)]{cole2024score}
F.~Cole and Y.~Lu.
\newblock Score-based generative models break the curse of dimensionality in learning a family of sub-gaussian probability distributions.
\newblock In \emph{International Conference on Learning Representations}, 2024.

\bibitem[Conforti et~al.(2025)Conforti, Durmus, and Silveri]{conforti_kl}
G.~Conforti, A.~Durmus, and M.~G. Silveri.
\newblock Kl convergence guarantees for score diffusion models under minimal data assumptions.
\newblock \emph{SIAM Journal on Mathematics of Data Science}, 7\penalty0 (1):\penalty0 86--109, 2025.

\bibitem[Dhariwal and Nichol(2021)]{dhariwal2021diffusion}
P.~Dhariwal and A.~Nichol.
\newblock Diffusion models beat gans on image synthesis.
\newblock In \emph{Advances in Neural Information Processing Systems}, volume~34, pages 8780--8794, 2021.

\bibitem[Dockhorn et~al.(2022)Dockhorn, Vahdat, and Kreis]{dockhorn2022score}
T.~Dockhorn, A.~Vahdat, and K.~Kreis.
\newblock Score-based generative modeling with critically-damped langevin diffusion.
\newblock In \emph{International Conference on Learning Representations}, 2022.

\bibitem[Du et~al.(2019)Du, Lee, Li, Wang, and Zhai]{du2019gradient}
S.~Du, J.~Lee, H.~Li, L.~Wang, and X.~Zhai.
\newblock Gradient descent finds global minima of deep neural networks.
\newblock In \emph{International Conference on Machine Learning}, pages 1675--1685. PMLR, 2019.

\bibitem[Dupuis et~al.(2025)Dupuis, Shariatian, Haddouche, Durmus, and Simsekli]{dupuisalgorithm}
B.~Dupuis, D.~Shariatian, M.~Haddouche, A.~O. Durmus, and U.~Simsekli.
\newblock Algorithm-and data-dependent generalization bounds for diffusion models.
\newblock In \emph{Advances in Neural Information Processing Systems}, 2025.

\bibitem[Fu and Lee(2025)]{fu2025approximation}
G.~Fu and W.~S. Lee.
\newblock Approximation and generalization abilities of score-based neural network generative models for sub-gaussian distributions.
\newblock In \emph{Advances in Neural Information Processing Systems}, 2025.

\bibitem[Gao et~al.(2025)Gao, Nguyen, and Zhu]{gao2025wasserstein}
X.~Gao, H.~M. Nguyen, and L.~Zhu.
\newblock Wasserstein convergence guarantees for a general class of score-based generative models.
\newblock \emph{Journal of Machine Learning Research}, 26\penalty0 (43):\penalty0 1--54, 2025.

\bibitem[Ghadimi and Lan(2013)]{ghadimi2013stochastic}
S.~Ghadimi and G.~Lan.
\newblock Stochastic first-and zeroth-order methods for nonconvex stochastic programming.
\newblock \emph{SIAM journal on optimization}, 23\penalty0 (4):\penalty0 2341--2368, 2013.

\bibitem[Giraud(2021)]{giraud2021highdim}
C.~Giraud.
\newblock \emph{Introduction to High-Dimensional Statistics}.
\newblock Chapman and Hall/CRC, 2021.

\bibitem[Gong et~al.(2023)Gong, Li, Feng, Wu, and Kong]{gong2022diffuseq}
S.~Gong, M.~Li, J.~Feng, Z.~Wu, and L.~Kong.
\newblock Diffuseq: Sequence to sequence text generation with diffusion models.
\newblock In \emph{International Conference on Learning Representations}, 2023.

\bibitem[Han et~al.(2024)Han, Razaviyayn, and Xu]{han2024neural}
Y.~Han, M.~Razaviyayn, and R.~Xu.
\newblock Neural network-based score estimation in diffusion models: Optimization and generalization.
\newblock In \emph{International Conference on Learning Representations}, 2024.

\bibitem[Hang et~al.(2023)Hang, Gu, Li, Bao, Chen, Hu, Geng, and Guo]{hang2023efficient}
T.~Hang, S.~Gu, C.~Li, J.~Bao, D.~Chen, H.~Hu, X.~Geng, and B.~Guo.
\newblock Efficient diffusion training via min-snr weighting strategy.
\newblock In \emph{Proceedings of the IEEE/CVF international conference on computer vision}, pages 7441--7451, 2023.

\bibitem[Haussmann and Pardoux(1986)]{haussmann1986time}
U.~G. Haussmann and E.~Pardoux.
\newblock Time reversal of diffusions.
\newblock \emph{The Annals of Probability}, 14\penalty0 (4):\penalty0 1188--1205, 1986.

\bibitem[Heusel et~al.(2017)Heusel, Ramsauer, Unterthiner, Nessler, and Hochreiter]{heusel2017gans}
M.~Heusel, H.~Ramsauer, T.~Unterthiner, B.~Nessler, and S.~Hochreiter.
\newblock Gans trained by a two time-scale update rule converge to a local nash equilibrium.
\newblock In \emph{Advances in Neural Information Processing Systems}, volume~30, 2017.

\bibitem[Ho et~al.(2020)Ho, Jain, and Abbeel]{ho2020denoising}
J.~Ho, A.~Jain, and P.~Abbeel.
\newblock Denoising diffusion probabilistic models.
\newblock In \emph{Advances in Neural Information Processing Systems}, 2020.

\bibitem[Ho et~al.(2022)Ho, Salimans, Gritsenko, Chan, Norouzi, and Fleet]{ho2022video}
J.~Ho, T.~Salimans, A.~Gritsenko, W.~Chan, M.~Norouzi, and D.~J. Fleet.
\newblock Video diffusion models.
\newblock In \emph{Advances in Neural Information Processing Systems}, volume~35, pages 8633--8646, 2022.

\bibitem[Jacot et~al.(2018)Jacot, Gabriel, and Hongler]{jacot2018neural}
A.~Jacot, F.~Gabriel, and C.~Hongler.
\newblock Neural tangent kernel: Convergence and generalization in neural networks.
\newblock In \emph{Advances in Neural Information Processing Systems}, volume~31, 2018.

\bibitem[Karras et~al.(2022)Karras, Aittala, Aila, and Laine]{karras2022elucidating}
T.~Karras, M.~Aittala, T.~Aila, and S.~Laine.
\newblock Elucidating the design space of diffusion-based generative models.
\newblock In A.~H. Oh, A.~Agarwal, D.~Belgrave, and K.~Cho, editors, \emph{Advances in Neural Information Processing Systems}, 2022.

\bibitem[Li et~al.(2022)Li, Yang, Chang, Chen, Feng, Xu, Li, and Chen]{li2022srdiff}
H.~Li, Y.~Yang, M.~Chang, S.~Chen, H.~Feng, Z.~Xu, Q.~Li, and Y.~Chen.
\newblock Srdiff: Single image super-resolution with diffusion probabilistic models.
\newblock \emph{Neurocomputing}, 479:\penalty0 47--59, 2022.

\bibitem[Li et~al.(2023)Li, Li, Zhang, and Bian]{li2023generalization}
P.~Li, Z.~Li, H.~Zhang, and J.~Bian.
\newblock On the generalization properties of diffusion models.
\newblock In \emph{Advances in Neural Information Processing Systems}, volume~36, pages 2097--2127, 2023.

\bibitem[Lugmayr et~al.(2022)Lugmayr, Danelljan, Romero, Yu, Timofte, and Van~Gool]{lugmayr2022repaint}
A.~Lugmayr, M.~Danelljan, A.~Romero, F.~Yu, R.~Timofte, and L.~Van~Gool.
\newblock Repaint: Inpainting using denoising diffusion probabilistic models.
\newblock In \emph{Proceedings of the IEEE/CVF Conference on Computer Vision and Pattern Recognition}, pages 11461--11471, 2022.

\bibitem[Papini et~al.(2018)Papini, Binaghi, Canonaco, Pirotta, and Restelli]{papini2018stochastic}
M.~Papini, D.~Binaghi, G.~Canonaco, M.~Pirotta, and M.~Restelli.
\newblock Stochastic variance-reduced policy gradient.
\newblock In \emph{International Conference on Machine Learning}, pages 4026--4035. PMLR, 2018.

\bibitem[Robbins and Monro(1951)]{robbins1951stochastic}
H.~Robbins and S.~Monro.
\newblock A stochastic approximation method.
\newblock \emph{The {A}nnals of {M}athematical {S}tatistics}, pages 400--407, 1951.

\bibitem[Rombach et~al.(2022)Rombach, Blattmann, Lorenz, Esser, and Ommer]{rombach2022high}
R.~Rombach, A.~Blattmann, D.~Lorenz, P.~Esser, and B.~Ommer.
\newblock High-resolution image synthesis with latent diffusion models.
\newblock In \emph{Proceedings of the IEEE/CVF Conference on Computer Vision and Pattern Recognition}, pages 10684--10695, 2022.

\bibitem[Shi et~al.(2023)Shi, Bortoli, Campbell, and Doucet]{shi2023diffusion}
Y.~Shi, V.~D. Bortoli, A.~Campbell, and A.~Doucet.
\newblock Diffusion schr\"odinger bridge matching.
\newblock In \emph{Thirty-seventh Conference on Neural Information Processing Systems}, 2023.

\bibitem[Sohl-Dickstein et~al.(2015)Sohl-Dickstein, Weiss, Maheswaranathan, and Ganguli]{sohldickstein2015deep}
J.~Sohl-Dickstein, E.~Weiss, N.~Maheswaranathan, and S.~Ganguli.
\newblock Deep unsupervised learning using nonequilibrium thermodynamics.
\newblock In \emph{International Conference on Machine Learning}, pages 2256--2265. PMLR, 2015.

\bibitem[Song and Ermon(2019)]{song2019generative}
Y.~Song and S.~Ermon.
\newblock Generative modeling by estimating gradients of the data distribution.
\newblock In \emph{Advances in Neural Information Processing Systems}, 2019.

\bibitem[Song et~al.(2021)Song, Sohl-Dickstein, Kingma, Kumar, Ermon, and Poole]{song2021scorebased}
Y.~Song, J.~Sohl-Dickstein, D.~P. Kingma, A.~Kumar, S.~Ermon, and B.~Poole.
\newblock Score-based generative modeling through stochastic differential equations.
\newblock In \emph{International Conference on Learning Representations}, 2021.

\bibitem[St{\'e}phanovitch et~al.(2025)St{\'e}phanovitch, Aamari, and Levrard]{stephanovitch2025generalization}
A.~St{\'e}phanovitch, E.~Aamari, and C.~Levrard.
\newblock Generalization bounds for score-based generative models: a synthetic proof.
\newblock \emph{arXiv preprint arXiv:2507.04794}, 2025.

\bibitem[Strasman et~al.(2025{\natexlab{a}})Strasman, Ocello, Boyer, Corff, and Lemaire]{strasman2025an}
S.~Strasman, A.~Ocello, C.~Boyer, S.~L. Corff, and V.~Lemaire.
\newblock An analysis of the noise schedule for score-based generative models.
\newblock \emph{Transactions on Machine Learning Research}, 2025{\natexlab{a}}.
\newblock ISSN 2835-8856.

\bibitem[Strasman et~al.(2025{\natexlab{b}})Strasman, Surendran, Boyer, Corff, Lemaire, and Ocello]{strasman2026wasserstein}
S.~Strasman, S.~Surendran, C.~Boyer, S.~L. Corff, V.~Lemaire, and A.~Ocello.
\newblock Wasserstein convergence of critically damped langevin diffusions.
\newblock In \emph{Advances in Neural Information Processing Systems}, 2025{\natexlab{b}}.

\bibitem[Strasman et~al.(2026)Strasman, Cardoso, Le~Corff, Lemaire, and Ocello]{strasman2026forgetting}
S.~Strasman, G.~V. Cardoso, S.~Le~Corff, V.~Lemaire, and A.~Ocello.
\newblock On forgetting and stability of score-based generative models.
\newblock \emph{arXiv preprint arXiv:2601.21868}, 2026.

\bibitem[Surendran et~al.(2025)Surendran, Godichon-Baggioni, and Le~Corff]{surendran2025theoretical}
S.~Surendran, A.~Godichon-Baggioni, and S.~Le~Corff.
\newblock Theoretical convergence guarantees for variational autoencoders.
\newblock In \emph{International Conference on Artificial Intelligence and Statistics}, pages 3547--3555. PMLR, 2025.

\bibitem[Vershynin(2026)]{vershynin2026hdp}
R.~Vershynin.
\newblock \emph{High-Dimensional Probability: An Introduction with Applications in Data Science}.
\newblock Cambridge Series in Statistical and Probabilistic Mathematics. Cambridge University Press, 2 edition, 2026.
\newblock ISBN 9781009490641.

\bibitem[Vincent(2011)]{Vincent}
P.~Vincent.
\newblock A connection between score matching and denoising autoencoders.
\newblock \emph{Neural Computation}, 23\penalty0 (7):\penalty0 1661--1674, 2011.

\bibitem[Wang et~al.(2009)Wang, Markert, Everingham, et~al.]{wang2009learning}
J.~Wang, K.~Markert, M.~Everingham, et~al.
\newblock Learning models for object recognition from natural language descriptions.
\newblock In \emph{BMVC}, volume~1, page~2, 2009.

\bibitem[Yakovlev and Puchkin(2025)]{yakovlev2025generalization}
K.~Yakovlev and N.~Puchkin.
\newblock Generalization error bound for denoising score matching under relaxed manifold assumption.
\newblock In \emph{Conference on Learning Theory}, pages 5824--5891. PMLR, 2025.

\bibitem[Zhang et~al.(2026)Zhang, Xu, Zhou, Fazel, and Du]{zhang2025convergence}
Y.~Zhang, W.~Xu, M.~Zhou, M.~Fazel, and S.~S. Du.
\newblock Convergence dynamics of over-parameterized score matching for a single gaussian.
\newblock In \emph{International Conference on Learning Representations}, 2026.

\bibitem[Zhu and Xu(2021)]{zhu2021one}
H.~Zhu and J.~Xu.
\newblock One-pass stochastic gradient descent in overparametrized two-layer neural networks.
\newblock In \emph{International Conference on Artificial Intelligence and Statistics}, pages 3673--3681. PMLR, 2021.

\end{thebibliography}

%%%%%%%%%%%%%%%%%%%%%%%%%%%%%%%%%%%%%%%%%%%%%%%%%%%%%%%%%%%%
\clearpage
\appendix

\section{Details and Proofs for the General SGD Analysis}

\subsection{Additional Details for SGD in Denoising Score Matching}

In this section, we provide additional details on the stochastic gradient computation for denoising score matching.
At each iteration $k \in \mathbb{N}_*$, we independently sample
\[
X_0^{(k)} \sim \pi_{\rm data},
\qquad
Z^{(k)} \sim \mathcal N(0,\Id_d),
\qquad
t^{(k)} \sim q \eqsp,
\]
and define
\[
\ora X_{t^{(k)}}^{(k)}
=
m_{t^{(k)}}X_0^{(k)}+\sigma_{t^{(k)}}Z^{(k)},
\qquad
\ora \X^{(k)}
\eqdef
\bigl(\ora X_{t^{(k)}}^{(k)}, t^{(k)}\bigr)\in\R^d\times[0,T] \eqsp.
\]
The stochastic gradient estimator at iteration $k+1$ for a sample $b$ is given by
\begin{align}
\label{eq:gradient_estimator_sample}
g^{k+1}_{b} = \frac{c(t_b^{(k+1)})}{q(t_b^{(k+1)})} \Big(\nabla_{\theta}s_{\theta^{(k)}}(\ora \X^{(k+1)}_{b})\Big)^{\top}
\Big(s_{\theta^{(k)}}(\ora \X^{(k+1)}_{b})+\frac{Z^{(k+1)}_{b}}{\sigma_{t^{(k+1)}_{b}}}\Big) \eqsp.
\end{align}

\begin{algorithm}[H]
\caption{SGD for Denoising Score Matching}
\label{alg:sgd}
\begin{algorithmic}[1]
\State \textbf{Inputs:} Number of iterations $n$, step sizes $\{\gamma_{k}\}_{k \geq 1}$, batch size $B$
\State Initialize $\theta^{(0)}$
\For{$k=0,\dots,n-1$}
    \State Sample $X_{0,b}^{(k+1)} \sim \pi_{\rm data}$ for all $1 \le b \le B$.
    \State Sample $t_b^{(k+1)} \sim q$ for all $1 \le b \le B$.
    \State Sample $Z_b^{(k+1)} \sim \mathcal N(0,I)$ for all $1 \le b \le B$.
    \State Compute $X_{t_b^{(k+1)}}^{(k+1)} = m_{t_b^{(k+1)}}X_{0,b}^{(k+1)} + \sigma_{t_b^{(k+1)}}Z_b^{(k+1)}$ for all $1 \le b \le B$.
    \State Compute stochastic gradient using \eqref{eq:gradient_estimator}.
    \State Update the parameters
    \[
    \theta^{(k+1)} = \theta^{(k)} - \gamma_{k+1} \widehat{\nabla}_{\theta}\mathcal L_{\rm DSM} \left(\theta^{(k)};\mathcal{D}^{(k+1)}\right)
    \]
\EndFor
\end{algorithmic}
\end{algorithm}

\subsection{Proof of Theorem \ref{th:rate_general_SGD}}

\begin{proof}
Let $\mathcal L_{\rm DSM}(\theta)=\mathbb E[\ell(\theta;t,\ora X_t,X_0)]$ and $\tilde c(t)=\lambda(t)/q(t)$ with
\[
\ell(\theta;t,x,x_0) := \frac{\tilde c(t)}{2}
\left\| s_\theta(x,t)-\frac{x_0-x}{\sigma_t^2} \right\|^2 \eqsp.
\]
Then
\[
\nabla_\theta \ell(\theta;t,x,x_0)
=
\tilde c(t)\,(\nabla_\theta s_\theta(x,t))^\top
\left(
s_\theta(x,t)-\frac{x_0-x}{\sigma_t^2}
\right) \eqsp.
\]

For all $\theta,\theta'\in\Theta$, we have:
\[
\begin{aligned}
\nabla_\theta \ell(\theta;t,x,x_0)-\nabla_\theta \ell(\theta';t,x,x_0) &=
\tilde c(t)\Big[ (\nabla_\theta s_\theta(x,t)-\nabla_\theta s_{\theta'}(x,t))^\top
\Big( s_\theta(x,t)-\frac{x_0-x}{\sigma_t^2} \Big) \\
&\qquad\qquad +(\nabla_\theta s_{\theta'}(x,t))^\top
\big(s_\theta(x,t)-s_{\theta'}(x,t)\big) \Big] \eqsp.
\end{aligned}
\]
Hence, by the triangle inequality,
\[
\begin{aligned}
\left\|
\nabla_\theta \ell(\theta;t,x,x_0)-\nabla_\theta \ell(\theta';t,x,x_0)
\right\| &\le \tilde c(t) \left\| \nabla_\theta s_\theta(x,t)-\nabla_\theta s_{\theta'}(x,t) \right\| \left\| s_\theta(x,t)-\frac{x_0-x}{\sigma_t^2}
\right\| \\
&\qquad + \tilde c(t) \left\| \nabla_\theta s_{\theta'}(x,t) \right\| \left\| s_\theta(x,t)-s_{\theta'}(x,t) \right\| \eqsp.
\end{aligned}
\]
Using Assumptions \ref{ass:score_lipschitz_smooth} and \ref{ass:score_poly_growth}, we obtain
\[
\begin{aligned}
&\left\|
\nabla_\theta \ell(\theta;t,x,x_0)-\nabla_\theta \ell(\theta';t,x,x_0)
\right\| \\
&\le \tilde c(t)L_s(t,x) \left(a+b\|x\|^p+\frac{\|x-x_0\|}{\sigma_t^2}\right)\|\theta-\theta'\| + \tilde c(t)L_\ell(t,x)^2\|\theta-\theta'\| \eqsp.
\end{aligned}
\]
Taking expectations then yields
\[
\|\nabla \mathcal L_{\rm DSM}(\theta) - \nabla \mathcal L_{\rm DSM}(\theta')\| \le L \|\theta-\theta'\| \eqsp,
\]
where
\[
L = \mathbb E\!\left[ \tilde c(t)L_s(t,\ora X_t) \left(a+b\|\ora X_t\|^p+\frac{\|\ora X_t-X_0\|}{\sigma_t^2}\right) + \tilde c(t)L_\ell(t,\ora X_t)^2 \right] \eqsp.
\]
This proves that $\mathcal L_{\rm DSM}$ is $L$-smooth. We now establish the convergence rate of stochastic gradient descent. Since the update is based on the mini-batch estimator $\widehat{\nabla}_{\theta} \mathcal L_{\rm DSM}(\theta^{(k)};\mathcal{D}^{(k+1)})$, the remainder of the proof follows the argument of Theorem 2.1 in \cite{ghadimi2013stochastic}, adapted to the mini-batch setting, and relies on the unbiasedness and variance control of this estimator.
Using the $L$-smoothness of $\mathcal L_{\rm DSM}$, we have
\begin{align*}
\mathcal L_{\rm DSM}(\theta^{(k+1)})
&\le \mathcal L_{\rm DSM}(\theta^{(k)}) + \left\langle \nabla_\theta \mathcal L_{\rm DSM}(\theta^{(k)}), \theta^{(k+1)}-\theta^{(k)} \right\rangle
+ \frac{L}{2}\|\theta^{(k+1)}-\theta^{(k)}\|^2  \\
&= \mathcal L_{\rm DSM}(\theta^{(k)}) - \gamma_{k+1} \left\langle \nabla_\theta \mathcal L_{\rm DSM}(\theta^{(k)}), \widehat \nabla_\theta \mathcal L_{\rm DSM}(\theta^{(k)};\mathcal D^{(k+1)}) \right\rangle  \\
&\quad + \frac{L\gamma_{k+1}^2}{2} \left\| \widehat \nabla_\theta \mathcal L_{\rm DSM}(\theta^{(k)};\mathcal D^{(k+1)}) \right\|^2 \eqsp,
\end{align*}
where $\mathcal{D}^{(k+1)}$ corresponds to the mini-batch of data used to compute the gradient estimator at iteration $k+1$. For all $k \geq 0$, let
\[
\mathcal{F}_{k} = \sigma \left( \theta^{(0)}, \{\mathcal{D}^{(i)}\}_{1 \leq i \leq k } \right) = \sigma\left(\theta^{(0)},\left\{\left(X^{(i)}_{0,b},\,t^{(i)}_b,\,Z^{(i)}_b\right)_{b=1}^B\right\}_{1 \leq i \leq k}\right)
\]
be the filtration generated by the initialization and the mini-batches up to iteration $k$.
Taking conditional expectation with respect to $\mathcal F_k$, using the unbiasedness of the mini-batch gradient estimator and the variance bound
\[
\E\left[ \left\| \widehat \nabla_\theta \mathcal L_{\rm DSM}(\theta^{(k)};\mathcal D^{(k+1)}) - \nabla_\theta \mathcal L_{\rm DSM}(\theta^{(k)}) \right\|^2 \middle| \mathcal F_k \right]
\le \frac{\tau^2}{B} \eqsp,
\]
we obtain
\begin{align*}
\E\left[ \mathcal L_{\rm DSM}(\theta^{(k+1)}) \middle| \mathcal F_k \right]
&\le
\mathcal L_{\rm DSM}(\theta^{(k)}) - \gamma_{k+1} \left\| \nabla_\theta \mathcal L_{\rm DSM}(\theta^{(k)}) \right\|^2  \\
&\quad + \frac{L\gamma_{k+1}^2}{2} \left\| \nabla_\theta \mathcal L_{\rm DSM}(\theta^{(k)}) \right\|^2 + \frac{L\gamma_{k+1}^2}{2B} \tau^2 .
\end{align*}
Taking expectation and summing over $k=0,\ldots,n$ gives
\begin{align*}
\sum_{k=0}^n \left( \gamma_{k+1} - \frac{L\gamma_{k+1}^2}{2}
\right) \E\left[ \left\| \nabla_\theta \mathcal L_{\rm DSM}(\theta^{(k)}) \right\|^2 \right]
&\le
\mathcal L_{\rm DSM}(\theta^{(0)}) - \E\left[ \mathcal L_{\rm DSM}(\theta^{(n+1)}) \right]  \\
&\quad + \frac{L\tau^2}{2B} \sum_{k=0}^n \gamma_{k+1}^2 \eqsp.
\end{align*}
Since $\gamma_{k+1}\le 1/L$, we have
\[
\gamma_{k+1}-\frac{L\gamma_{k+1}^2}{2} \ge \frac{\gamma_{k+1}}{2} \eqsp.
\]
Therefore,
\[
\sum_{k=0}^n \gamma_{k+1} \E\left[ \left\| \nabla_\theta \mathcal L_{\rm DSM}(\theta^{(k)}) \right\|^2 \right]
\le 2\left( \mathcal L_{\rm DSM}(\theta^{(0)}) - \E[ \mathcal L_{\rm DSM}(\theta^{(n+1)}) ] \right) + \frac{L\tau^2}{B} \sum_{k=0}^n \gamma_{k+1}^2 \eqsp.
\]
Then
\[
\E\left[ \left\| \nabla_\theta \mathcal L_{\rm DSM}(\theta^{(J)}) \right\|^2
\right]
\le \frac{2\left( \mathcal L_{\rm DSM}(\theta^{(0)}) - \E[ \mathcal L_{\rm DSM}(\theta^{(n+1)}) ] \right) + \frac{L\tau^2}{B} \sum_{k=0}^n \gamma_{k+1}^2}{\sum_{k=0}^n \gamma_{k+1}} \eqsp.
\]
Choosing $\gamma_{k+1}=C_\gamma(k+1)^{-1/2}$ yields
\[
\sum_{k=0}^n \gamma_{k+1} \ge C_\gamma\sqrt n,
\qquad
\sum_{k=0}^n \gamma_{k+1}^2 \le C_\gamma^2 \log(n+1) \eqsp.
\]
Hence,
\[
\E\left[ \left\| \nabla_\theta \mathcal L_{\rm DSM}(\theta^{(J)}) \right\|^2 \right]
\le \frac{2\left( \mathcal L_{\rm DSM}(\theta^{(0)}) - \E [ \mathcal L_{\rm DSM}(\theta^{(n+1)}) ] \right) + L C_\gamma^2\tau^2\log(n+1)/B}{C_\gamma\sqrt n} \eqsp,
\]
which concludes the proof.
% where we used
% $\mathcal L_{\rm DSM}(\theta^\star)\le
% \mathcal L_{\rm DSM}(\theta^{(n+1)})$.
\end{proof}

\section{Proof of the main NTK theorem} \label{app:NTK_proof}

\paragraph{Probability space and conditioning.}
The analysis involves three sources of randomness: the initialization
$(A,\param^{(0)})$, the SGD sample iterations
$$
\X^{(k)}:=(X_0^{(k)},Z^{(k)},t^{(k)}), \qquad k\ge 0 \eqsp,
$$
and independent fresh samples drawn from $\ora \X \eqdef (\ora X_{\tau}, \tau)$
used to evaluate the population quantities. Here, $\ora \X$ has the same law as a
training sample input, but is independent of both the initialization and the
SGD sample stream. To avoid overloading notation, random variables sampled
during SGD carry the iteration index in superscript, whereas fresh independent
samples used only for evaluation are denoted with an arrow. We also introduce the corresponding filtrations, denoted by
$$
\mathcal I \eqdef \sigma(A,\param^{(0)}) \eqsp,
\qquad
\mathcal F_n \eqdef \sigma(\X^{(0)},\dots,\X^{(n-1)}) \eqsp,
\qquad
\mathcal G_n \eqdef \sigma \left( \mathcal I \cup \mathcal F_n \right) \eqsp.
$$
It follows that, for a fixed trained parameter $\param^{(n)}$, the projected DSM loss is defined by
\[
\mathcal L_{\rm DSM}^{\Pi}(\param^{(n)})
\eqdef
\frac12 \E \left[
c(\tau) 
\|\Delta_n^R( \ora \X)\|^2
\,\middle|  \mathcal G_n
\right] \qquad
\ora \X=(\ora X_\tau,\tau) \eqsp.
\]
Throughout the rest of the proof, as in \citet{zhu2021one}, we condition implicitly on $\mathcal{I}$. When it is clear from the context, expectations are to be understood as with respect to the SGD sample stream only, or we specify
\[
\E_{\sgd} [ \cdot ] = \E [ \cdot | \mathcal I ] \eqsp.
\]

\paragraph{Localized DSM loss.}

We first reduce the projected DSM loss to a localized $L_2$ error plus a tail
remainder. We introduce the truncated weighted measure
\begin{equation} \label{eq:def_mu_r}
\mu_R(\rmd \x)
\eqdef
c(t) q(t) p_t(x) \mathbf 1_{\{\|x\|\le R\}} \rmd x \rmd t \eqsp , 
\qquad \x=(x,t) \in \R^{d} \times [0,T] \eqsp.
\end{equation}
We define
\begin{align}
\label{eq:def_B_R}
\mathcal{X}_R
\eqdef
\{(x,t)\in\R^d\times[0,T]:\|x\|\le R\},
\qquad
B_R\eqdef \sqrt{R^2+T^2} \eqsp, 
\end{align}
and we let
\begin{align*}
\mu_R(\mathcal{X}_R) \eqdef \int_0^T c(t) q(t) \mathbb P(\|\ora X_t\|\le R) \rmd t \eqsp.%= \E \left[ c(\tau) \mathbf 1_{\{\|\ora X_{\tau} \|\le R\}} \right] \eqsp,
\end{align*}

\begin{lemma}
Suppose Assumption~\ref{ass:subgaussian_data} holds. Then, for every $n\ge 0$ and every $R> \sigma_T\mu_Z$,
\begin{equation}
\label{eq:projected_loss_reduction}
\mathcal L_{\rm DSM}^{\Pi}(\param[n])
\le
R\sqrt{\mu_R(\mathcal X_R)}  \|\Delta_n\|_{L_2(\mu_R)}
+
2(1+C_0) R^2 \int_0^T c(t) q(t) \exp \left\{
-\frac{(R-\sigma_t\mu_Z)^2}
{8\max(\sigma_t^2,m_t^2\nu_0^2)}
\right\} \rmd t \eqsp.
\end{equation}
\end{lemma}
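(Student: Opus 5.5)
The plan is to split the projected loss according to whether the fresh input lies in the localized region $\mathcal X_R$. Write
\[
\mathcal L_{\rm DSM}^{\Pi}(\param[n]) = \tfrac12\E\big[c(\tau)\|\Delta_n^R(\ora \X)\|^2\mathbf 1_{\{\|\ora X_\tau\|\le R\}}\big] + \tfrac12\E\big[c(\tau)\|\Delta_n^R(\ora \X)\|^2\mathbf 1_{\{\|\ora X_\tau\|> R\}}\big].
\]
Both pieces will be controlled using one fact: $\Pi_R$ maps into the ball of radius $R$ and is $1$-Lipschitz.

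On the localized part, I bound $\|\Delta_n^R\|^2\le \|\Delta_n^R\|\cdot 2R$. The factor $\|\Delta_n^R\|$ is at most $\|\Delta_n\|$, since $\Pi_R$ is $1$-Lipschitz, while the remaining factor $\|\Delta_n^R\|$ is at most $2R$. Multiplying by $\tfrac12$, the first term is at most $R\int\|\Delta_n\|\,\rmd\mu_R$. Cauchy--Schwarz with respect to $\mu_R$ (which has total mass $\mu_R(\mathcal X_R)$) then gives $R\sqrt{\mu_R(\mathcal X_R)}\,\|\Delta_n\|_{L_2(\mu_R)}$. This is exactly the first term of the statement, using $\E[c(\tau)g(\ora\X)\mathbf 1]=\int g\,\rmd\mu_R$ with $\tau\sim q$.

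On the tail part, I use the crude bound $\|\Delta_n^R\|^2\le 4R^2$. The tail term is then at most $2R^2\int_0^T c(t)q(t)\,\mathbb P(\|\ora X_t\|>R)\,\rmd t$. It remains to show
\[
\mathbb P(\|\ora X_t\|>R)\le (1+C_0)\exp\Big(-\frac{(R-\sigma_t\mu_Z)^2}{8\max(\sigma_t^2,m_t^2\nu_0^2)}\Big).
\]

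This concentration step is the main obstacle. I write $\ora X_t=m_tX_0+\sigma_tZ$ and set $s=R-\sigma_t\mu_Z$, which is positive because $R>\sigma_T\mu_Z$. For this I need $\sigma_t\le\sigma_T$, i.e.\ $\sigma_t$ nondecreasing, which holds for these schedules. A union bound gives
\[
\mathbb P(\|\ora X_t\|>R)\le \mathbb P\big(\sigma_t\|Z\|>\sigma_t\mu_Z+s/2\big)+\mathbb P\big(m_t\|X_0\|>s/2\big).
\]
\begin{itemize}
\item The first probability is Gaussian concentration of the $1$-Lipschitz function $\|Z\|$ around its mean, bounded by $\exp(-s^2/(8\sigma_t^2))$.
\item The second probability, by Assumption~\ref{ass:subgaussian_data}, is at most $C_0\exp(-s^2/(8m_t^2\nu_0^2))$. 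If $m_t=0$ it simply vanishes.
\end{itemize}
Bounding each exponent by the one with $\max(\sigma_t^2,m_t^2\nu_0^2)$ in the denominator yields the factor $(1+C_0)$.

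Combining the two parts gives \eqref{eq:projected_loss_reduction}.
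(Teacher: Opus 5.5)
Your proof is correct and follows essentially the same route as the paper. It uses the same split on $\{\|\ora X_\tau\|\le R\}$, the bound $\|\Delta_n^R\|^2\le 2R\|\Delta_n\|$ from non-expansiveness of $\Pi_R$ followed by Cauchy--Schwarz in $L_2(\mu_R)$, and the crude $4R^2$ bound on the tail. For the tail probability, the paper instead conditions on $X_0=x$, applies Gaussian concentration to $z\mapsto\|m_tx+\sigma_tz\|$, and splits on $\{\|X_0\|\le D\}$ with $D=(R-\sigma_t\mu_Z)/(2m_t)$; your triangle-inequality union bound gives the same constants more directly, and you rightly make explicit that $\sigma_t$ is nondecreasing, which is needed to pass from $R>\sigma_T\mu_Z$ to $R>\sigma_t\mu_Z$.
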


\begin{proof}
Since both projected terms lie in the Euclidean ball of radius $R$, we have
$$
\|\Delta_n^R(\x)\| = \|\Pi_R(f^\star(\x))-\Pi_R(f_\theta(\x,\param[n]))\| \le 2R  \eqsp.
$$
Therefore,
$$
\|\Delta_n^R(\x)\|^2 \le 2R\|\Delta_n^R(\x)\| \eqsp.
$$
Moreover, by the non-expansiveness of the projection operator $\Pi_R$,
$$
\|\Delta_n^R(\x)\| = \|\Pi_R(f^\star(\x))-\Pi_R(f_\theta(\x,\param[n]))\|
\le \|f^\star(\x)-f_\theta(\x,\param[n])\| = \|\Delta_n(\x)\| \eqsp.
$$
It follows that,
\begin{align*}
\E \left[c(\tau) \|\Delta_n^R(\ora \X)\|^2\right]
&=
\E \left[c(\tau) \|\Delta_n^R(\ora \X)\|^2 \mathbf 1_{\{\|\ora X_\tau\|\le R\}}\right]
+
\E \left[c(\tau) \| \Delta_n^R(\ora \X)\|^2 \mathbf 1_{\{\|\ora X_\tau\|> R\}}\right] \\
& \leq 2 R \E \left[c(\tau) \|\Delta_n(\ora \X)\| \mathbf 1_{\{\|\ora X_\tau\|\le R\}}\right]
+ 4R^2 \E \left[c(\tau)  \mathbf 1_{\{\|\ora X_\tau\| > R\}}\right] \\
& \le 2 R\sqrt{\mu_R(\mathcal X_R)} \|\Delta_n\|_{L_2(\mu_R)} + 4 R^2 \int_0^T c(t)q(t) \mathbb P(\|\ora X_t\|>R) \rmd t
\eqsp.
\end{align*}
Equation \eqref{eq:projected_loss_reduction} follows from Lemma \ref{lem:tail_bound_X}.
\end{proof}
%
%
% \paragraph{Fixed-time regime}
%
% For a fixed time analysis, one can analyse for every fixed $t\in(0,T]$, 
% \begin{align*}
% c(t) \E \left[  \|\Delta_k^D(\ora X_t,t)\|^2\right]
% &=
% c(t)\E \left[ \|\Delta_k^D(\ora X_t,t)\|^2 \mathbf 1_{\{\|\ora X_t \|\le R\}}\right]
% +
% c(t) \E \left[\|\Delta_k^D(\ora X_t,t)\|^2 \mathbf 1_{\{\|\ora X_t \|> R\}}\right] \\
% &\le
% c(t) \E \left[\|\Delta_k(\ora X_t,t)\|^2 \mathbf 1_{\{\|\ora X_t \|\le R\}}\right]
% +
% c(t) 4D^2 \mathbb P(\|\ora X_t \|>R) \eqsp.
% \end{align*}
%

\paragraph{Truncated SGD updates.} Fix an iteration horizon $n\ge 1$ and a radius $R>0$, and define
\begin{equation}
\label{eq:good_event}
\Omega_{n,R}
:=
\bigcap_{k=0}^{n-1}
\left\{
\|\ora X_{t^{(k)}}^{(k)}\| \le R
\right\} \eqsp.
\end{equation}
By the union bound,
\[
\mathbb{P} (\Omega_{n,R}^c)
\le
\sum_{k=0}^{n-1} \mathbb{P} \left(\|\ora X_{t^{(k)}}^{(k)}\|>R\right)
=
n \int_0^T q(t) \mathbb{P} (\|\ora X_t\|>R) \rmd t \eqsp,
\]
which is controlled by Lemma~\ref{lem:tail_bound_X}. The NTK analysis is carried out on the good event $\Omega_{n,R}$. We split
\begin{align*}
\E_{\sgd}\!\left[\mathcal L_{\rm DSM}^{\Pi}(\param^{(n)})\right]
&=
\E_{\sgd}\!\left[\mathcal L_{\rm DSM}^{\Pi}(\param^{(n)}) 
\mathbf 1_{\Omega_{n,R}}\right]
+
\E_{\sgd}\!\left[\mathcal L_{\rm DSM}^{\Pi}(\param^{(n)}) 
\mathbf 1_{\Omega_{n,R}^c}\right] \eqsp.
\end{align*}
The first term is controlled by the localized error. Indeed, by
\eqref{eq:projected_loss_reduction},
\begin{align*}
\E_{\sgd} \left[
\mathcal L_{\rm DSM}^{\Pi}(\param^{(n)})\mathbf 1_{\Omega_{n,R}}
\right]
& \le
R\sqrt{\mu_R(\mathcal X_R)}
\E_{\sgd}\!\left[
\|\Delta_n\|_{L_2(\mu_R)}
\mathbf 1_{\Omega_{n,R}}
\right]  \\
& \qquad +
2(1+C_0)R^2
\int_0^T c(t)q(t)
\exp \left(
-\frac{(R-\sigma_t\mu_Z)^2}{8\max(\sigma_t^2,m_t^2\nu_0^2)}
\right)\rmd t \eqsp.
\end{align*}
To control the complement event, we use the trivial bound
$$
\|\Delta_n^R(\x)\|
\le
\|\Pi_R(f^\star(\x))\|+\|\Pi_R(f_\theta(\x,\param[n]))\|
\le 2R \eqsp,
$$
which yields
\begin{align*}
\E_{\sgd} \left[
\mathcal L_{\rm DSM}^{\Pi}(\param^{(n)})\mathbf 1_{\Omega_{n,R}^c}
\right]
&\le
2R^2\left(\int_0^T c(t)q(t)\,\rmd t\right)
\mathbb P(\Omega_{n,R}^c) \eqsp.
\end{align*}
Combining this with \eqref{eq:good_event} and Lemma~\ref{lem:tail_bound_X},
\begin{align}
\label{eq:bad_event_tail}
\mathbb P(\Omega_{n,R}^c)
&\le
n(1+C_0)
\int_0^T q(t)
\exp\!\left(
-\frac{(R-\sigma_t\mu_Z)^2}{8\max(\sigma_t^2,m_t^2\nu_0^2)}
\right)\rmd t \eqsp.
\end{align}
Therefore,
\begin{multline*}
%\label{eq:control_bad_event}
\E_{\sgd}\!\left[
\mathcal L_{\rm DSM}^{\Pi}(\param^{(n)})\mathbf 1_{\Omega_{n,R}^c}
\right] \\
\le 
2(1+C_0)n  R^2
\left(\int_0^T c(t)q(t)\,\rmd t\right)
\left(\int_0^T q(t)
\exp \left(
-\frac{(R-\sigma_t\mu_Z)^2}{8\max(\sigma_t^2,m_t^2\nu_0^2)}
\right)\rmd t\right) \eqsp.
\end{multline*}
In particular, with $c_\infty \eqdef \sup_{t\in[0,T]} c(t) < \infty$, we get
\begin{align} \label{eq:equality_to_finsh_thm_2_layers}
\E_{\sgd} \left[
\mathcal L_{\rm DSM}^{\Pi}(\param^{(n)})
\right]
& \le
R\sqrt{\mu_R(\mathcal X_R)}
\E_{\sgd} \left[
\|\Delta_n\|_{L_2(\mu_R)} \mathbf 1_{\Omega_{n,R}}
\right] \\
& \qquad +
2(1+C_0)c_\infty (n+1) R^2
\int_0^T q(t) \exp\!\left(
-\frac{(R-\sigma_t\mu_Z)^2}{8\max(\sigma_t^2,m_t^2\nu_0^2)}
\right)  \rmd t \eqsp.
\end{align}

\paragraph{Localized error decomposition}

We introduce the integral operators associated with the kernels $K_k$ and $K_\infty$, restricted to the localized region $\mathcal X_R$. In particular, for every measurable function $g:\mathbb R^{d} \times [0,T] \to\mathbb R^d$,
\begin{align*}
 (\mathsf K_k g)(\x)
& \eqdef
\mathbb E \left[c(\tau)\,K_k(\x,\ora \X)\,g(\ora \X) \mathbf 1_{\{\|\ora X_\tau\| \leq R\}} \right] \eqsp, \qquad \ora \X=(\ora X_\tau,\tau) \eqsp,
\qquad \tau\sim q \eqsp, \\
(\mathsf K_\infty g)(\x)
& \eqdef
\E \left[c(\tau)\,K_\infty(\x,\ora \X) g(\ora \X) \mathbf 1_{\{\|\ora X_\tau\|\leq R\}} \right],
\qquad
\ora \X=(\ora X_\tau,\tau) \eqsp, \qquad \tau\sim q  \eqsp.
\end{align*}
We also introduce the following operators that will be useful throughout the proof:
\begin{align} \label{eq:notation_kernel}
\mathsf{P}_k :=  \mathsf{I}-\gamma_k \mathsf{K}_\infty \eqsp ,\qquad
\mathsf{Q}_k :=  \mathsf{I} -\gamma_k \mathsf{K}_k \eqsp, \qquad
\mathsf{D}_k := \mathsf{Q}_k - \mathsf{P}_k \eqsp,
\end{align}
where $\mathsf{I}$ should be understood as the identity on $L_2(\mu_R)$. In the following, all norms of vector-valued functions $f: \R^{d} \times [0,T] \to \R^d$ are taken in $L_2(\mu_R;\R^d)$, where $\mu_R$ is defined in \eqref{eq:def_mu_r}:
\[
\|f\|^2_{L_2(\mu_R)} =\int \|f(\x)\|^2 \mu_R(\rmd \x) \eqsp, \qquad \x=(x,t)\in\R^{d} \times [0,T] \eqsp.
\]
For a bounded linear operator $\mathsf T$ on $L_2(\mu_R;\R^d)$, we denote
\[
\|\mathsf T\|_{\rm op} =\sup_{\|f\|\le 1}\|\mathsf T f\|_{L_2(\mu_R)} \eqsp.
\]

\subsection{Error decomposition}

\begin{lemma}
\label{lem:delta_rec_matrix}
On the event $\Omega_{n+1,R}$, for all $0\le k\le n$ and all 
$\x\in\R^{d} \times [0,T]$, 
\begin{align}
\label{eq:delta_rec}
\Delta_{k+1}(\x)
= \bigl( \mathsf I -\gamma_k \mathsf{K}_k\bigr) \Delta_k(\x)
- v_k(\x,\ora \X^{(k)})
+ \varepsilon_k(\x,\ora \X^{(k)}) \eqsp,
\end{align}
where
\begin{align*}
    v_k(\x,\ora \X^{(k)})
& \eqdef
\gamma_k \ck K_k(\x, \ora \X^{(k)})\bigl[\Delta_k(\ora \X^{(k)}) + \xi_k\bigr] \un_{\{ \|\ora X^{(k)}_{t^{(k)}} \| \leq R \}}
-\gamma_k (\mathsf K_k\Delta_k)(\x) \eqsp, \\
    \varepsilon_k(\x, \ora \X^{(k)})
& \eqdef 
f_{\theta}(\x, \param[k]) - f_{\theta}(\x, \param[k+1])
+
\\
& \qquad \qquad \gamma_k \ck\, K_k(\x,\ora \X^{(k)})
\Bigl[f^{\star}(\ora \X^{(k)})+\xi_k-f_{\theta}(\ora \X^{(k)}, \param[k])\Bigr] \eqsp.
\end{align*}
\end{lemma}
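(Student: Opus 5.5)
This identity is essentially algebraic, so the plan is to expand the right-hand side using the definitions of $v_k$ and $\varepsilon_k$ and check that everything telescopes to $\Delta_{k+1}(\x)$. The only genuine input is that on $\Omega_{n+1,R}$ the indicator $\un_{\{\|\ora X^{(k)}_{t^{(k)}}\|\le R\}}$ equals one for every $0\le k\le n$. Write $\mathsf I\Delta_k(\x)=\Delta_k(\x)=f^\star(\x)-f_\theta(\x,\param[k])$. The term $\gamma_k(\mathsf K_k\Delta_k)(\x)$ appears with a minus sign in $(\mathsf I-\gamma_k\mathsf K_k)\Delta_k$ and with a plus sign inside $-v_k$, so the two cancel. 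This cancellation is the reason the centred population operator was inserted into $v_k$ in the first place.

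After this cancellation the right-hand side of \eqref{eq:delta_rec} becomes
\[
f^\star(\x)-f_\theta(\x,\param[k]) - \gamma_k\ck K_k(\x,\ora\X^{(k)})\bigl[\Delta_k(\ora\X^{(k)})+\xi_k\bigr] + \varepsilon_k(\x,\ora\X^{(k)}) \eqsp,
\]
where the indicator has been replaced by one. Next I will substitute $\varepsilon_k$. Its first part, $f_\theta(\x,\param[k])-f_\theta(\x,\param[k+1])$, combines with $f^\star(\x)-f_\theta(\x,\param[k])$ to give exactly $f^\star(\x)-f_\theta(\x,\param[k+1])=\Delta_{k+1}(\x)$.

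It then remains to show that the two kernel terms cancel, namely that
\[
\gamma_k\ck K_k(\x,\ora\X^{(k)})\bigl[f^\star(\ora\X^{(k)})+\xi_k-f_\theta(\ora\X^{(k)},\param[k])\bigr] = \gamma_k\ck K_k(\x,\ora\X^{(k)})\bigl[\Delta_k(\ora\X^{(k)})+\xi_k\bigr] \eqsp.
\]
This holds directly from $\Delta_k(\ora\X^{(k)})=f^\star(\ora\X^{(k)})-f_\theta(\ora\X^{(k)},\param[k])$. I will also note, for interpretation, that $f^\star(\ora\X^{(k)})+\xi_k=X_0^{(k)}$ with $\xi_k\eqdef X_0^{(k)}-f^\star(\ora\X^{(k)})$. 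So $\varepsilon_k$ is precisely the gap between the actual function update and its first-order (NTK) prediction obtained from \eqref{sgd_updates} and \eqref{eq:empirical_ntk}.

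There is no real obstacle, since the identity holds by construction. The only points to be careful about are the following. First, the indicator in $v_k$ must be one, which is exactly why the statement is restricted to $\Omega_{n+1,R}$ and to indices $k\le n$. Second, $(\mathsf K_k\Delta_k)(\x)$ must be well defined: it is a finite expectation over a fresh sample, because the indicator localizes $\ora X_\tau$ to the ball, $K_k$ is bounded there by $B_R\|\x\|$ times a constant depending on $A$, and $c\le c_\infty$. Third, I will remark that the time-indexed operator notation $\mathsf I-\gamma_k\mathsf K_k$ is being applied pointwise in $\x$, consistent with the definitions in \eqref{eq:notation_kernel}.
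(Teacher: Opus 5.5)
Your proposal is correct and matches the paper's proof. Both arguments are direct algebraic verifications: they add and subtract $\gamma_k(\mathsf K_k\Delta_k)(\x)$, use the definition of $\varepsilon_k$ to turn $\Delta_k(\x)+f_\theta(\x,\param[k])-f_\theta(\x,\param[k+1])$ into $\Delta_{k+1}(\x)$, rewrite $f^\star(\ora\X^{(k)})+\xi_k-f_\theta(\ora\X^{(k)},\param[k])$ as $\Delta_k(\ora\X^{(k)})+\xi_k$, and use that the indicator equals one on $\Omega_{n+1,R}$ for $k\le n$. Your remark on the well-definedness of $(\mathsf K_k\Delta_k)(\x)$ is a harmless addition that the paper leaves implicit.
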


\begin{proof}
For all $0\le k\le n$, by definition of $\Delta_k$,
\[
\Delta_{k+1}(\x)-\Delta_k(\x)
=
\net(\x,\param[k]) - \net(\x,\param[k+1]) \eqsp.
\]
% Using the definition of $\varepsilon_k(\x,\ora \X^{(k)})$,
% $$
% \net(\x,\param[k]) - \net(\x,\param[k+1])
% =
% -\gamma_k \ck K_k(\x,\ora \X^{(k)})
% \Bigl[f^\star(\ora \X^{(k)})+\xi_k-\net(\ora \X^{(k)},\param[k])\Bigr]
% +\varepsilon_k(\x,\ora \X^{(k)}) \eqsp.
% $$
On $\Omega_{n,R}$,
\begin{align*}
\net(\x,\param[k]) - \net(\x,\param[k+1])
&= -\gamma_k \ck K_k(\x,\ora \X^{(k)})
\Bigl[f^\star(\ora \X^{(k)})+\xi_k-\net(\ora \X^{(k)},\param[k])\Bigr] \\
&\quad \times \un_{\{ \| \ora X^{(k)}_{t^{(k)}} \| \leq R \}}
+\varepsilon_k(\x,\ora \X^{(k)}) \eqsp .
\end{align*}
Since
\[
f^\star(\ora \X^{(k)})+\xi_k-\net(\ora \X^{(k)},\param[k])
= \Delta_k(\ora \X^{(k)})+\xi_k \eqsp,
\]
it follows that
\begin{align*}
\Delta_{k+1}(\x)
=
\Delta_k(\x)
-\gamma_k \ck K_k(\x,\ora \X^{(k)}) \left[ \Delta_k(\ora \X^{(k)}) + \xi_k \right]
+\varepsilon_k(\x,\ora \X^{(k)}) \eqsp,
\end{align*}
and therefore,
\begin{align*}
\Delta_{k+1}(\x)
=
\Delta_k(\x)
- v_k(\x,\ora \X^{(k)}) - \gamma_k(\mathsf K_k\Delta_k)(\x)
+\varepsilon_k(\x,\ora \X^{(k)}) \eqsp.
\end{align*}
Grouping the remaining terms  yields \eqref{eq:delta_rec}.
\end{proof}

\begin{lemma} 
\label{lem:operator_norm_bound}
Let $\mathsf K_k$ and $\mathsf K_\infty$ be the integral
operators on $L_2(\mu_R;\R^d)$ associated with the kernels $K_k$ and $K_\infty$. If
\begin{align}
\label{eq:step_size_condition}
0\le \gamma_k\le \min \left\{ \frac{2m}{\|A\|^2 B_R^2 \mu_R( \mathcal{X}_R
)} ; \frac{1}{ B_R^2 \mu_R(\mathcal{X}_R
)} \right\} \eqdef \Gamma_R \eqsp,
\end{align}
then, for any $1 \leq k \leq n$
\[ 
\|\mathsf{P}_k\|_{\rm op} \leq 1 \quad \text{and} \quad \|\mathsf Q_k\|_{\rm op} \leq 1 \eqsp. 
\]
\end{lemma}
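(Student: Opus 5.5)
We will show that $\mathsf K_k$ and $\mathsf K_\infty$ are self-adjoint and positive semidefinite on $L_2(\mu_R;\R^d)$, and that their operator norms are at most $2/\gamma_k$ under \eqref{eq:step_size_condition}. The conclusion then follows from the spectral theorem. For a self-adjoint operator $\mathsf T$ with spectrum in $[0,M]$, the spectrum of $\mathsf I-\gamma\mathsf T$ lies in $[1-\gamma M,1]$. This interval is contained in $[-1,1]$ as soon as $\gamma M\le 2$, so $\|\mathsf I-\gamma\mathsf T\|_{\rm op}\le 1$.

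\textbf{Self-adjointness and positivity.} We rewrite both operators as $(\mathsf K g)(\x)=\int K(\x,\x')g(\x')\,\mu_R(\rmd\x')$. This uses the fact that $\E[c(\tau)h(\ora\X)\mathbf 1_{\{\|\ora X_\tau\|\le R\}}]=\int h\,\rmd\mu_R$. Both kernels satisfy $K(\x,\x')^\top=K(\x',\x)$. For $K_k$ this is read off \eqref{eq:explicit_empirical_ntk}; for $K_\infty$ it holds because it is a scalar multiple of $\Id_d$. Hence both operators are self-adjoint on $L_2(\mu_R)$.

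For positivity, we use the feature representation \eqref{eq:empirical_ntk}. Set $\phi_i(\x)=\nabla_{\param_i}\net(\x;\param[k])$. Then
\begin{equation*}
\langle g,\mathsf K_k g\rangle=\sum_i\Bigl\|\int \phi_i(\x)^\top g(\x)\,\mu_R(\rmd\x)\Bigr\|^2\ge 0 .
\end{equation*}
For $K_\infty$ we use the analogous identity $K_\infty(\x,\x')=\E_w[\mathbf 1_{\{w^\top\x\ge0\}}\mathbf 1_{\{w^\top\x'\ge0\}}\,\x^\top\x']\Id_d$. This gives $\langle g,\mathsf K_\infty g\rangle=\E_w\|\int \mathbf 1_{\{w^\top\x\ge0\}}\,\x\, g(\x)^\top\mu_R(\rmd\x)\|_F^2\ge0$.

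\textbf{Norm bounds.} We bound each operator norm by its Hilbert--Schmidt norm. Cauchy--Schwarz gives
\begin{equation*}
\|\mathsf K\|_{\rm op}^2\le \iint \|K(\x,\x')\|^2\mu_R(\rmd\x)\mu_R(\rmd\x')\le \sup_{\mathcal X_R\times\mathcal X_R}\|K\|^2\,\mu_R(\mathcal X_R)^2 .
\end{equation*}
On $\mathcal X_R$ we have $\|\x\|\le B_R$.

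\emph{Limiting kernel.} Here $\|K_\infty(\x,\x')\|\le|\x^\top\x'|\le B_R^2$. Hence $\|\mathsf K_\infty\|_{\rm op}\le B_R^2\mu_R(\mathcal X_R)$. Then $\gamma_k\le 1/(B_R^2\mu_R(\mathcal X_R))$ gives $\gamma_k\|\mathsf K_\infty\|_{\rm op}\le 1\le 2$.

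\emph{Empirical kernel.} Here
\begin{equation*}
\|K_k(\x,\x')\|\le \frac{|\x^\top\x'|}{m}\,\|AD_{\x}\|\,\|D_{\x'}A^\top\|\le \frac{B_R^2\|A\|^2}{m},
\end{equation*}
since each activation matrix $D$ has operator norm at most $1$. Hence $\|\mathsf K_k\|_{\rm op}\le \|A\|^2B_R^2\mu_R(\mathcal X_R)/m$. Then $\gamma_k\le 2m/(\|A\|^2B_R^2\mu_R(\mathcal X_R))$ gives $\gamma_k\|\mathsf K_k\|_{\rm op}\le2$.

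\textbf{Main obstacle.} The delicate point is the positivity of $\mathsf K_k$, in the form needed for the spectral argument. Without positivity, the bound $\gamma\|\mathsf K\|\le 2$ only yields $\|\mathsf I-\gamma\mathsf K\|\le 3$. The Gram-type representation above resolves this. It requires only boundedness of the kernels on $\mathcal X_R$, which ensures that the operators are Hilbert--Schmidt and hence bounded and compact, so the spectral theorem applies.
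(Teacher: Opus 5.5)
Your proof is correct and follows the same route as the paper. Both arguments bound $\|\mathsf K_\infty\|_{\rm op}$ and $\|\mathsf K_k\|_{\rm op}$ by the uniform kernel bound on $\mathcal X_R$ times $\mu_R(\mathcal X_R)$, then use self-adjointness and positive semidefiniteness to place the spectrum of $\mathsf I-\gamma_k\mathsf K$ in $[-1,1]$. The only differences are minor: you get the operator-norm bound from a Cauchy--Schwarz (Hilbert--Schmidt-type) estimate rather than the paper's pointwise $L^1$ bound, and you prove positivity through the Gram/feature representation, which the paper only asserts.
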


\begin{proof}
% In this proof, to avoid ambiguity, we denote the Euclidean norm on $\R^d$
% by $\|\cdot\|_2$, while $\|\cdot\|_{L_2(\mu_R)}$ denotes the norm on
% $L_2(\mu_R;\R^d)$.

\emph{Step 1: operator bound on $L_2(\mu_R, \R^d)$. } \\
Let $\mathsf H$ be either $\mathsf K_\infty$ or $\mathsf K_k$, and let
$H(\x,\x')$ denote the kernel associated with the corresponding integral operator.  Then, for every
$g\in L_2(\mu_R,\R^d)$ and any $\x \in \R^{d} \times [0,T]$, by the triangle inequality
\begin{align*}
\|(\mathsf H g)(\x)\| & = \left\|\int H(\x,\x') g(\x') \mu_R(\rmd \x') \right\|
\le
\int \|H(\x,\x')\| \|g(\x')\| \mu_R( \rmd \x') \eqsp.
\end{align*}

Let $\sup_{\x, \x'} \|H(\x,\x')\| = M_H$, so that using Cauchy--Schwarz,
$$
\|(\mathsf H g)(\x)\| \leq M_H \int \|g(\x')\| \mu_R( \rmd \x') \leq M_H \mu_R(\mathcal{X}_R)^{1/2} \|g\|_{L_2(\mu_R)} \eqsp.
$$
Therefore, integrating $\x$ with respect to $\mu_R$ yields,
$$
\|\mathsf H g\|_{L_2(\mu_R)}^2
\le
M_H^2 \mu_R( \mathcal{X}_R)^2 \|g\|_{L_2(\mu_R)}^2 \eqsp,
$$
and therefore, 
$$
\|\mathsf H \|_{\rm op}
\le
M_H \mu_R(\mathcal{X}_R)  \eqsp.
$$

\emph{Step 2: uniform kernel bounds.} \\
We now control $M_H = \|H(\x,\x')\|$ in both cases. For any $\x=(x,t),\x'=(x',t')\in \mathcal{X}_R$, we have
$$
\|\x\|^2=\|x\|^2+t^2\le R^2+T^2 = B_R^2 \eqsp,
$$
so that,
$$
\| K_\infty(\x,\x') \|
\le
|\x^\top \x'|
\le
B^2_R \eqsp.
$$
Moreover, using that $\|D_{\x,\param[k]}\| \le 1$,
$$
\|K_k(\x,\x')\|
\le
\frac{\|A\|^2}{m} B_R^2 \eqsp.
$$
\emph{Step 3: non-expansiveness condition.} \\
Using the bounds obtained in Step 2, we deduce that
$$
\|\mathsf K_\infty\|_{\rm op}
\le B_R^2 \mu_R(\mathcal{X}_R),
\qquad \text{ and } \qquad
\|\mathsf K_k\|_{\rm op}
\le
\frac{\|A\|^2}{m} B_R^2\,\mu_R(\mathcal{X}_R) \eqsp.
$$
Moreover, both $\mathsf K_\infty$ and $\mathsf K_k$ are self-adjoint positive semidefinite operators on $L_2(\mu_R;\mathbb R^d)$. Therefore their spectra are contained in
$ [0,\|\mathsf K_\infty\|_{\rm op}]$ and $[0,\|\mathsf K_k\|_{\rm op}]$ respectively. It follows that
$$
\|\mathsf P_k\|_{\rm op}
=
\|I-\gamma_k \mathsf K_\infty\|_{\rm op}
\le 1 \eqsp,
$$
whenever
$$
0\le \gamma_k\le \frac{2}{B_R^2\,\mu_R(\mathcal{X}_R)} \eqsp,
$$
and similarly
$$
\|\mathsf Q_k\|_{\rm op}
=
\|I-\gamma_k \mathsf K_k\|_{\rm op}
\le 1 \eqsp,
$$
whenever
$$
0\le \gamma_k\le \frac{2m}{\|A\|^2 B_R^2 \mu_R(\mathcal{X}_R)} \eqsp.
$$
\end{proof}

\begin{lemma}
For all $k\ge 0$, assume that $0\le \gamma_k \le \Gamma_R$. Then, on the event $\Omega_{n,R}$, for all $0 \le k \le n$,
\begin{multline}
\|\Delta_{k+1}\|_{L_2(\mu_R)} \le
\left\|\left(\prod_{s=0}^{k} \mathsf{P}_s\right) \Delta_0 \right\|_{L_2(\mu_R)}
+ \sum_{r=0}^{k}\|\mathsf{D}_r\|_{\rm op} \|\Delta_0\|_{L_2(\mu_R)}
\\+ \left\|
\sum_{r=0}^{k}\left(\prod_{s=r+1}^{k} \mathsf{Q}_s\right) v_r
\right\|_{L_2(\mu_R)}
+ \sum_{r=0}^{k}\|\varepsilon_r\|_{L_2(\mu_R)} \eqsp.
\label{eq:error_decomp_bound}
\end{multline}
\end{lemma}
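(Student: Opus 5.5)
Unroll the recursion of Lemma~\ref{lem:delta_rec_matrix}, written in operator form as $\Delta_{k+1}=\mathsf Q_k\Delta_k - v_k + \varepsilon_k$ on $\Omega_{n,R}$ (with $v_k,\varepsilon_k$ viewed as elements of $L_2(\mu_R;\R^d)$). By induction on $k$,
\begin{equation*}
\Delta_{k+1} = \Bigl(\prod_{s=0}^{k}\mathsf Q_s\Bigr)\Delta_0 - \sum_{r=0}^{k}\Bigl(\prod_{s=r+1}^{k}\mathsf Q_s\Bigr)v_r + \sum_{r=0}^{k}\Bigl(\prod_{s=r+1}^{k}\mathsf Q_s\Bigr)\varepsilon_r \eqsp,
\end{equation*}
where products are ordered with later indices on the left. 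The triangle inequality in $L_2(\mu_R)$, together with $\|\mathsf Q_s\|_{\rm op}\le 1$ from Lemma~\ref{lem:operator_norm_bound} (valid since $\gamma_s\le\Gamma_R$), bounds the third sum by $\sum_r\|\varepsilon_r\|_{L_2(\mu_R)}$. The martingale-noise sum is kept intact inside the norm, exactly as in \eqref{eq:error_decomp_bound}, since it is later handled by martingale estimates rather than termwise.

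The remaining task is to compare $\prod_{s=0}^k\mathsf Q_s$ with $\prod_{s=0}^k\mathsf P_s$. We write $\mathsf Q_s=\mathsf P_s+\mathsf D_s$ and use the telescoping identity
\begin{equation*}
\prod_{s=0}^{k}\mathsf Q_s-\prod_{s=0}^{k}\mathsf P_s=\sum_{r=0}^{k}\Bigl(\prod_{s=r+1}^{k}\mathsf Q_s\Bigr)\mathsf D_r\Bigl(\prod_{s=0}^{r-1}\mathsf P_s\Bigr) \eqsp,
\end{equation*}
which follows by successively swapping one factor at a time. Taking operator norms and applying Lemma~\ref{lem:operator_norm_bound} to both $\mathsf Q_s$ and $\mathsf P_s$ gives $\bigl\|\bigl(\prod\mathsf Q_s-\prod\mathsf P_s\bigr)\Delta_0\bigr\|\le\sum_{r=0}^k\|\mathsf D_r\|_{\rm op}\|\Delta_0\|_{L_2(\mu_R)}$. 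Adding $\|(\prod_s\mathsf P_s)\Delta_0\|$ then produces the first two terms of \eqref{eq:error_decomp_bound}.

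The argument is essentially algebraic. The only point needing care is bookkeeping: the ordering of the noncommuting products, and the fact that Lemma~\ref{lem:operator_norm_bound} is stated for $1\le k\le n$ while the index $s=0$ also appears. I would handle the latter by noting that its proof uses only $\gamma_s\le\Gamma_R$, which is assumed for all $k\ge0$. One also needs $\Delta_k\in L_2(\mu_R)$, which holds since $\mu_R$ is finite and both $f^\star$ and the network are bounded on $\mathcal X_R$ for fixed parameters. Finally, note that the recursion is only used on $\Omega_{n,R}$, where the indicator in $v_k$ equals one.
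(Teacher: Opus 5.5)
Your proposal is correct and is essentially the paper's proof: unroll $\Delta_{k+1}=\mathsf Q_k\Delta_k-v_k+\varepsilon_k$, apply the same telescoping identity $\prod_{s=0}^{k}\mathsf Q_s-\prod_{s=0}^{k}\mathsf P_s=\sum_{r}\bigl(\prod_{s=r+1}^{k}\mathsf Q_s\bigr)\mathsf D_r\bigl(\prod_{s=0}^{r-1}\mathsf P_s\bigr)$, and conclude with the triangle inequality and $\|\mathsf P_s\|_{\rm op},\|\mathsf Q_s\|_{\rm op}\le 1$ from Lemma~\ref{lem:operator_norm_bound}. One indexing caveat, shared with the statement as written: the recursion at step $k$ requires $\|\ora X^{(k)}_{t^{(k)}}\|\le R$, so the indicator in $v_n$ is guaranteed to equal one only on $\Omega_{n+1,R}$, not on $\Omega_{n,R}$; your closing remark therefore needs $\Omega_{n+1,R}$ for $k=n$, which is the event the paper uses downstream.
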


\begin{proof}
Using the notation in \eqref{eq:notation_kernel}, \eqref{eq:delta_rec} can be written as
$$
\Delta_{k+1} = \mathsf{Q}_k   \Delta_k - v_k + \varepsilon_k \eqsp.
$$
and therefore,
$$
\Delta_{k+1}
= \left(\prod_{s=0}^{k} \mathsf{Q}_s\right) \Delta_0
- \sum_{r=0}^{k}\left(\prod_{s=r+1}^{k} \mathsf{Q}_s\right)  v_r
+ \sum_{r=0}^{k}\left(\prod_{s=r+1}^{k} \mathsf{Q}_s\right)  \varepsilon_r \eqsp.
$$
Using that,
$$
\prod_{s=0}^k \mathsf{Q}_s - \prod_{s=0}^k \mathsf{P}_s = \sum_{r = 0}^k \left( \prod_{i=r+1}^k \mathsf{Q}_i \right) \left( \mathsf{Q}_r - \mathsf{P}_r\right) \left( \prod_{j=0}^{r-1} \mathsf{P}_j \right) \eqsp.
$$
Next, since for any $r$, $\mathsf{D}_r=\mathsf{Q}_r-\mathsf{P}_r$, 
% we have the identity
% $$
% \prod_{s=0}^{k} \mathsf{Q}_s
% =
% \prod_{s=0}^{k} \mathsf{P}_s
% +
% \sum_{r=0}^{k}
% \left(\prod_{i=r+1}^{k} \mathsf{Q}_i\right)
% \mathsf{D}_r
% \left(\prod_{j=0}^{r-1} \mathsf{P}_j\right),
% $$
% and therefore
\begin{multline*}
\Delta_{k+1}
= \left(\prod_{s=0}^{k} \mathsf{P}_s\right) \Delta_0
+ \sum_{r=0}^{k}
\left(\prod_{i=r+1}^{k} \mathsf{Q}_i\right)\mathsf{D}_r
\left(\prod_{j=0}^{r-1} \mathsf{P}_j\right) \Delta_0 \\
- \sum_{r=0}^{k}\left(\prod_{s=r+1}^{k} \mathsf{Q}_s\right) v_r
+ \sum_{r=0}^{k}\left(\prod_{s=r+1}^{k} \mathsf{Q}_s\right) \varepsilon_r \eqsp.
\end{multline*}
Taking norms and applying the triangle inequality yields
\begin{align*}
\|\Delta_{k+1}\|_{L_2(\mu_R)}
&\le
\left\|\left(\prod_{s=0}^{k} \mathsf{P}_s\right) \Delta_0 \right\|_{L_2(\mu_R)}
+ \sum_{r=0}^{k}
\left\|
\left(\prod_{i=r+1}^{k} \mathsf{Q}_i\right)\mathsf{D}_r
\left(\prod_{j=0}^{r-1} \mathsf{P}_j\right) \Delta_0
\right\|_{L_2(\mu_R)} \\
&\quad
+ \left\|
\sum_{r=0}^{k}\left(\prod_{s=r+1}^{k} \mathsf{Q}_s\right) v_r
\right\|_{L_2(\mu_R)}
+ \sum_{r=0}^{k}
\left\|\left(\prod_{s=r+1}^{k} \mathsf{Q}_s\right) \varepsilon_r\right\|_{L_2(\mu_R)} \eqsp.
\end{align*}

Since $\gamma_k \leq \Gamma_R$, Lemma \ref{lem:operator_norm_bound} gives $\|\mathsf{Q}_s\|_{\rm op} \le 1$ and  $\|\mathsf{P}_s\|_{\rm op} \le 1$. Hence,
\begin{align*}
\left\|
\left(\prod_{i=r+1}^{k} \mathsf{Q}_i\right)\mathsf{D}_r
\left(\prod_{j=0}^{r-1} \mathsf{P}_j\right) \Delta_0
\right\|_{L_2(\mu_R)}
&\le \|\mathsf{D}_r\|_{\rm op}  \|\Delta_0\|_{L_2(\mu_R)} \eqsp, \\
\left\|\left(\prod_{s=r+1}^{k} \mathsf{Q}_s\right) \varepsilon_r\right\|_{L_2(\mu_R)}
&\le \|\varepsilon_r\|_{L_2(\mu_R)} \eqsp.
\end{align*}
Applying these bounds to the inequality for $\|\Delta_{k+1}\|_{L_2(\mu_R)}$ yields \eqref{eq:error_decomp_bound} and concludes the proof.
\end{proof}

It now remains to bound each term in the above decomposition.
For all $n\ge 0$, define
\begin{align*}
\mathcal T_{1,n} &\eqdef \left\|\left(\prod_{k=0}^{n}\mathsf P_k\right)\Delta_0\right\|_{L_2(\mu_R)},\\
\mathcal T_{2,n} &\eqdef \sum_{k=0}^{n}\|\mathsf D_k\|_{\rm op} \|\Delta_0\|_{L_2(\mu_R)},\\
\mathcal T_{3,n} &\eqdef \left\|
\sum_{k=0}^{n}\left(\prod_{r=k+1}^{n}\mathsf Q_r\right)v_k
\right\|_{L_2(\mu_R)},\\
\mathcal T_{4,n} &\eqdef \sum_{k=0}^{n}\|\varepsilon_k\|_{L_2(\mu_R)}.
\end{align*}
Taking expectation with respect to the SGD sample stream, on the event $\Omega_{n,R}$ conditionally on the
initialization, yields
\begin{equation}
\label{eq:expected_squared_error_decomp}
\E_{\sgd} \left[\|\Delta_{n+1}\|_{L_2(\mu_R)} \un_{\Omega_{n+1,R}} \right]
\le
\sum_{\ell=1}^4
\E_{\sgd} \left[\mathcal T_{\ell,n} \un_{\Omega_{n+1,R}} \right] \eqsp.
\end{equation}

\subsection{First term}

For any function $g \in L_2(\mu_R;\mathbb R^d)$, we denote by $\mathcal{R}(g,r)$ the $L_2(\mu_R;\mathbb R^d)$-norm of the projection of $g$ onto the subspace spanned by the eigenvectors $\{e_i\}_{i=r+1}^\infty$, given by
\begin{align}
\label{eq:projection_K_infty}
\mathcal{R}(g,r)
\eqdef
\left(\sum_{i=r+1}^{\infty} \langle g,e_i\rangle^2\right)^{1/2} \eqsp.
\end{align}

\begin{lemma}
\label{lem:first_term_bound}
Assume that $\gamma_k < \Gamma_R \leq 1 /\bigl(B_R^2 \mu_R(\mathcal{X}_R)\bigr)$ for all $k \ge 0$. Then,
$$
\mathcal T_{1,n}  = \left\|\left(\prod_{k=0}^{n}\mathsf{P}_k\right) \Delta_0 \right\|_{L_2(\mu_R)}
\le \inf_{r\in\mathbb{N}}\left\{
\left(\prod_{k=0}^{n}(1-\gamma_k\lambda_r)\right) \|\Delta_0\|_{L_2(\mu_R)} + \mathcal{R}(\Delta_0,r)
\right\} \eqsp.
$$

Notice that $\mathcal T_{1,n}$ is deterministic conditionally on the initialization and does not depend on the SGD samples. Therefore,
$$
\E_{\sgd}\!\left[ \mathcal T_{1,n}\mathbf 1_{\Omega_{n+1,R}}
\right] \le \mathcal T_{1,n} \eqsp.
$$
\end{lemma}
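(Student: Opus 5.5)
The plan is to diagonalize $\mathsf K_\infty$ on $L_2(\mu_R;\R^d)$ and compute the product $\prod_{k=0}^n \mathsf P_k$ coordinatewise.

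\emph{Spectral setup.} First, $\mathsf K_\infty$ is self-adjoint, positive semidefinite and compact on $L_2(\mu_R;\R^d)$. Its kernel $K_\infty$ is bounded by $B_R^2$ on $\mathcal X_R$ (Step 2 of Lemma~\ref{lem:operator_norm_bound}), and $\mu_R$ is a finite measure, so the operator is Hilbert--Schmidt. The spectral theorem then gives an orthonormal system $(e_i)_{i\ge1}$ with eigenvalues $\lambda_1\ge\lambda_2\ge\dots\ge0$, which I complete by an orthonormal basis of $\ker\mathsf K_\infty$. On the kernel, $\mathsf P_k$ acts as the identity, so those directions are simply absorbed into the tail $\mathcal R(\cdot,r)$. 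Since every $\mathsf P_k=\mathsf I-\gamma_k\mathsf K_\infty$ is a function of the same operator, they all commute and share these eigenvectors:
\[
\Big(\prod_{k=0}^n\mathsf P_k\Big)e_i=\prod_{k=0}^n(1-\gamma_k\lambda_i)\,e_i .
\]

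\emph{Splitting.} Fix $r\in\mathbb N$ and write $\Delta_0=\Pi_{\le r}\Delta_0+\Pi_{>r}\Delta_0$, where $\Pi_{\le r}$ projects onto $\mathrm{span}\{e_1,\dots,e_r\}$ and $\Pi_{>r}$ onto its orthogonal complement. By the triangle inequality, $\mathcal T_{1,n}$ is at most the sum of the norms of the product applied to each piece.

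\emph{Tail piece.} Lemma~\ref{lem:operator_norm_bound} gives $\|\mathsf P_k\|_{\rm op}\le1$. Because the $\mathsf P_k$ commute with $\Pi_{>r}$, the product applied to $\Pi_{>r}\Delta_0$ has norm at most $\|\Pi_{>r}\Delta_0\|=\mathcal R(\Delta_0,r)$, by \eqref{eq:projection_K_infty} together with the kernel completion above.

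\emph{Head piece.} By Parseval,
\[
\Big\|\Big(\prod_{k=0}^n\mathsf P_k\Big)\Pi_{\le r}\Delta_0\Big\|^2=\sum_{i\le r}\prod_k(1-\gamma_k\lambda_i)^2\langle\Delta_0,e_i\rangle^2 .
\]
The key inequality is $0\le 1-\gamma_k\lambda_i\le 1-\gamma_k\lambda_r$ for $i\le r$. The upper bound holds because $\lambda_i\ge\lambda_r$. The lower bound holds because $\lambda_i\le\|\mathsf K_\infty\|_{\rm op}\le B_R^2\mu_R(\mathcal X_R)$ and $\gamma_k<1/(B_R^2\mu_R(\mathcal X_R))$. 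It follows that the head piece is at most $\prod_k(1-\gamma_k\lambda_r)\|\Pi_{\le r}\Delta_0\|\le\prod_k(1-\gamma_k\lambda_r)\|\Delta_0\|_{L_2(\mu_R)}$. Adding the two pieces and taking the infimum over $r$ gives the bound.

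\emph{Main obstacle.} The step needing care is the nonnegativity of $1-\gamma_k\lambda_i$. Without it, the factors cannot be compared monotonically, and it is exactly why the stronger step-size constraint $1/(B_R^2\mu_R(\mathcal X_R))$, rather than the constraint $2/(\cdot)$ used for non-expansiveness, appears in the hypothesis. A secondary point is justifying the spectral decomposition, which I handle via the Hilbert--Schmidt argument above.

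\emph{The expectation claim.} $\mathsf K_\infty$, $\mu_R$, $\Delta_0$ and the deterministic steps $\gamma_k$ are all $\mathcal I$-measurable. Hence $\mathcal T_{1,n}$ is deterministic under $\E_{\sgd}$, and $\mathbf 1_{\Omega_{n+1,R}}\le1$ gives $\E_{\sgd}[\mathcal T_{1,n}\mathbf 1_{\Omega_{n+1,R}}]\le\mathcal T_{1,n}$.
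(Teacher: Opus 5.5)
Your proposal is correct and follows essentially the same route as the paper: expand $\Delta_0$ in the eigenbasis of $\mathsf K_\infty$ and split at the cutoff $r$. You bound the head via $0\le 1-\gamma_k\lambda_i\le 1-\gamma_k\lambda_r$ for $i\le r$, which is where $\gamma_k<1/(B_R^2\mu_R(\mathcal X_R))$ enters, bound the tail by non-expansiveness, and then take the infimum over $r$. Your Hilbert--Schmidt justification of the spectral decomposition and your explicit assignment of $\ker\mathsf K_\infty$ to the tail $\mathcal R(\Delta_0,r)$ spell out what the paper's expansion $\Delta_0=\sum_i\langle\Delta_0,e_i\rangle e_i$ leaves implicit.
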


\begin{proof}
Fix $n\ge 0$. Let $\{(\lambda_i,e_i)\}_{i\ge 1}$ denote the eigenpairs of $\mathsf K_\infty$, with
$\lambda_1\ge \lambda_2\ge \cdots \ge 0$, and write the spectral expansion
\[
\Delta_0=\sum_{i=1}^{\infty}\langle \Delta_0,e_i\rangle\,e_i \eqsp.
\]
Since $\mathsf{P}_s = I-\gamma_s\mathsf K_\infty$, we have $\mathsf{P}_s e_i=(1-\gamma_s\lambda_i)e_i$, and hence
\[
\left(\prod_{k=0}^{n}\mathsf{P}_k\right) \Delta_0
= \sum_{i=1}^{\infty}\rho_i(n) \langle \Delta_0,e_i\rangle e_i \eqsp,
\qquad
\rho_i(n):=\prod_{k=0}^{n}\bigl(1-\gamma_k\lambda_i\bigr) \eqsp.
\]
The assumption on the step sizes $\gamma_k$ implies that $0\le 1-\gamma_k\lambda_i\le 1$ for all $i$ and all $k\le n$, hence
$0\le \rho_i(n)\le 1$. Moreover, since $\lambda_i\ge \lambda_r$ for $i\le r$ and $u\mapsto 1-\gamma_k u$ is decreasing,
we have $\rho_i(n)\le \rho_r(n)$ for all $i\le r$. Therefore, for any $r\in\mathbb{N}$,
\begin{align*}
\left\|
\left(\prod_{k=0}^{n}\mathsf P_k\right)\Delta_0
\right\|_{L_2(\mu_R)}
&\le \left\| \sum_{i=1}^{r} \rho_i(n)\langle \Delta_0,e_i\rangle e_i \right\|_{L_2(\mu_R)}
+ \left\| \sum_{i=r+1}^{\infty} \rho_i(n)\langle \Delta_0,e_i\rangle e_i \right\|_{L_2(\mu_R)} \\
&\le \rho_r(n) \left\| \sum_{i=1}^{r} \langle \Delta_0,e_i\rangle e_i \right\|_{L_2(\mu_R)} + \left\| \sum_{i=r+1}^{\infty} \langle \Delta_0,e_i\rangle e_i \right\|_{L_2(\mu_R)} \\
&\le \rho_r(n)\|\Delta_0\|_{L_2(\mu_R)} + \mathcal R(\Delta_0,r) \eqsp.
\end{align*}
Since this holds for every $r\in\mathbb{N}$, taking the infimum over $r$ concludes the proof.
\end{proof}

\subsection{Second term}

For a matrix-valued kernel $G:\mathcal{X}_R\times\mathcal{X}_R\to\R^{d\times d}$, define
\[
\|G\|_{\infty,R}
\eqdef
\sup_{\x,\x'\in\mathcal{X}_R}\|G(\x,\x')\|
\]
and let
\begin{equation} \label{eq:def_loss_k}
\loss_k
\eqdef
\net \left(\ora \X^{(k)};\param^{(k)}\right)-X_0^{(k)}
=
\net \left(\ora \X^{(k)};\param^{(k)}\right)
-
\left(f^\star \left(\ora \X^{(k)}\right)+\xi_k\right) \eqsp.
\end{equation}

% \[
% \begin{aligned}
% \E\left[\mathcal T_{2,k}\right]
% &\le
% \|\Delta_0\|_{L_2(\mu_R)}
%  3C d B^2\sqrt{\frac{d+\log(1/\delta)}{m}}
% \sum_{r=0}^k \gamma_r \\
% &\quad+
% \|\Delta_0\|_{L_2(\mu_R)}
% \, \frac{4d^{3/2}B^3}{\sqrt{2\pi m}}
% \sum_{r=0}^{k} \gamma_r \sum_{\ell=0}^{r-1}\gamma_{\ell}
% \Bigl(
% \E \big[c(t^{(\ell)}) \|\Delta_{\ell}(\ora \X_t^{(\ell)})\|\big]+\tau
% \Bigr)
% \eqsp.
% \end{aligned}
% \]

\begin{lemma}
\label{lem:second_term_bound}
For all $n \ge 0$, there exists a universal constant $C>0$ such that, with probability at least $1-\delta$ over the initialization,
% \textcolor{red}{$$
% \begin{aligned}
% \E\left[\mathcal T_{2,k}\right]
% &\le
% \|\Delta_0\|_{L_2(\mu_R)}
%  3C d B_R^2\sqrt{\frac{d+\log(1/\delta)}{m}}
% \sum_{r=0}^k \gamma_r \\
% &\quad+
% \|\Delta_0\|_{L_2(\mu_R)}
% \, c_\infty \frac{4d^{3/2}B_R^3}{\sqrt{2\pi m}} 
% \sum_{r=0}^{k} \gamma_r \sum_{\ell=0}^{r-1}\gamma_{\ell}
% \Bigl(
% \E \big[ \|\Delta_{\ell}(\ora \X_t^{(\ell)})\|\big]+\tau
% \Bigr)
% \eqsp.
% \end{aligned}
% $$}
\begin{align*}
\E_{\sgd} \left[ \mathcal T_{2,n} \un_{\Omega_{n+1,R}} \right]
& \leq C \mu_R(\mathcal{X}_R) B_R^2
\|\Delta_0\|_{L_2(\mu_R)}
\Bigg[
\frac{d^{3/2} B_R}{\sqrt m}
\E_{\sgd} \left[
\left(\sum_{k=0}^{n} \gamma_k \eta_k \right)
\un_{\Omega_{n+1,R}}
\right] \\
&\qquad\qquad
+ d \sqrt{ \frac{d\log(m)+\log(1/\delta)}{m}}
\sum_{k=0}^{n} \gamma_k 
\Bigg] \eqsp,
\end{align*}
where
\begin{align*}
\eta_k \eqdef 
\sum_{r=0}^{k-1} \gamma_r c(t^{(r)}) \|\loss_r\| \eqsp.
\end{align*}
\end{lemma}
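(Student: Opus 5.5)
The plan is to bound each $\|\mathsf D_k\|_{\rm op} = \gamma_k\|\mathsf K_k-\mathsf K_\infty\|_{\rm op}$ through the uniform kernel gap on $\mathcal X_R$. By Step 1 of Lemma~\ref{lem:operator_norm_bound}, $\|\mathsf K_k-\mathsf K_\infty\|_{\rm op}\le \mu_R(\mathcal X_R)\,\|K_k-K_\infty\|_{\infty,R}$. We then split
\[
\|K_k-K_\infty\|_{\infty,R}\le \|K_k-K_0\|_{\infty,R}+\|K_0-K_\infty\|_{\infty,R},
\]
so that summing $\gamma_k$ times each piece over $k\le n$ gives the two terms of the statement. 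The initialization piece produces $\sum_k\gamma_k$ times a concentration bound. The drift piece produces $\sum_k \gamma_k\eta_k$.

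\emph{Initialization concentration.} By \eqref{eq:explicit_empirical_ntk} we write
\[
K_0(\x,\x')=\frac{\x^\top\x'}{m}\sum_{i=1}^m \mathbf 1_{\{W_i^{(0)\top}\x\ge0\}}\mathbf 1_{\{W_i^{(0)\top}\x'\ge0\}}A_{\cdot i}A_{\cdot i}^\top ,
\]
and $|\x^\top\x'|\le B_R^2$.

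\begin{itemize}
\item The matrices $A_{\cdot i}A_{\cdot i}^\top$ have identity mean and independent Rademacher off-diagonal entries. Diagonal entries are exactly one.
\item For fixed $(\x,\x')$, the matrix $\frac1m\sum_i \mathbf 1_i\mathbf 1_i' A_{\cdot i}A_{\cdot i}^\top - p(\x,\x')\Id_d$ is controlled entrywise. The diagonal is a Bernoulli average, handled by Hoeffding. The off-diagonals are Rademacher sums weighted by indicators, with sub-Gaussian deviation $\sqrt{\log(1/\delta)/m}$. Passing from entries to the operator norm costs a factor $d$ (Frobenius over $d^2$ entries).
\item Uniformity over $\mathcal X_R^2$ comes from the activation patterns. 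The indicators depend on $(\x,\x')$ only through halfspace memberships in $\R^{d+1}$, so by Sauer--Shelah the number of distinct patterns on $m$ weights is at most $(em)^{O(d)}$. A union bound gives the $\sqrt{(d\log m+\log(1/\delta))/m}$ rate.
\end{itemize}

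Together these yield $\|K_0-K_\infty\|_{\infty,R}\le C d B_R^2\sqrt{(d\log m+\log(1/\delta))/m}$ with probability $1-\delta$.

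\emph{Kernel drift (the main obstacle).} Since $K_k-K_0$ differs only through indicator changes,
\[
\|K_k(\x,\x')-K_0(\x,\x')\|\le \frac{B_R^2}{m}\,\|A\|_{2\to2\text{ on flipped set}}^2 .
\]
I would bound it crudely by $\frac{B_R^2 d}{m}\cdot\#\{i: \text{sign of }W_i^\top\x \text{ or } W_i^\top\x' \text{ changed}\}$, using $\|A_{\cdot i}\|^2=d$.

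A neuron $i$ flips at $\x$ only if $|W_i^{(0)\top}\x|\le \|W_i^{(k)}-W_i^{(0)}\|\,B_R$. From \eqref{sgd_updates} on $\Omega_{n+1,R}$, each row moves by at most
\[
\|W_i^{(k)}-W_i^{(0)}\|\le \frac{\sqrt d\,B_R}{\sqrt m}\sum_{r<k}\gamma_r c(t^{(r)})\|\loss_r\| = \frac{\sqrt d B_R}{\sqrt m}\eta_k .
\]
Because $W_i^{(0)\top}\x/\|\x\|$ is standard Gaussian, each neuron lies in a band of width $\epsilon$ with probability at most $\sqrt{2/\pi}\,\epsilon$. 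The expected flip count per input is therefore about $m\sqrt d\,\eta_k/\sqrt m$ (the $\|\x\|$ factors cancel in the ratio, up to the $B_R$ normalization).

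The difficulty is making this uniform in $\x$ and handling the dependence of $\eta_k$ on the trajectory. I would first fix the band width as a deterministic threshold. I would then apply the same VC / halfspace-count argument to the indicator of $\{|W_i^{(0)\top}\x|\le \epsilon\|\x\|\}$ uniformly over $\x$ and over a geometric grid of $\epsilon$, and plug in the random $\eta_k$ afterwards. The deviation term is absorbed into the concentration piece.

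This gives $\|K_k-K_0\|_{\infty,R}\lesssim d^{3/2}B_R^3\eta_k/\sqrt m$. Multiplying by $\gamma_k\mu_R(\mathcal X_R)\|\Delta_0\|_{L_2(\mu_R)}$, summing over $k$, and taking $\E_{\sgd}[\cdot\,\un_{\Omega_{n+1,R}}]$ (where $\|\Delta_0\|$ is $\mathcal I$-measurable) gives the claimed bound.
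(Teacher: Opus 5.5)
Your proposal is correct and follows essentially the paper's proof. It uses the same reduction $\|\mathsf D_k\|_{\rm op}\le\gamma_k\mu_R(\mathcal X_R)\|K_k-K_\infty\|_{\infty,R}$, the same split through $K_0$, and the same sign-flip count with a VC bound on Gaussian bands that is uniform in the width, so the random $\eta_k$ can be plugged in afterwards. For the initialization gap it likewise uses Sauer--Shelah plus a union bound over activation patterns.

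The only variation is that you bound $K_0-K_\infty$ entrywise and pass to the operator norm through the Frobenius norm. This loses a factor $d$ relative to the paper's per-pattern covariance-estimation bound. That loss is absorbed by the factor $d$ already carried by the drift's deviation term. Because your tails are purely sub-Gaussian, your route also dispenses with the side condition $m\ge d\log(\rme m)+\log(1/\delta)$. One detail to fix: the diagonal (Bernoulli-average) part needs the VC uniform law of large numbers, not a plain union bound over patterns, since $p(\x,\x')$ is not a function of the pattern.
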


\begin{proof}
For all $k \ge 0$, since $\mathsf D_k = \gamma_k(\mathsf K_\infty -\mathsf K_k )$,
$$
\|\mathsf D_k\|_{\rm op}
\le
\gamma_k \|\mathsf{K}_k- \mathsf{K}_\infty\|_{\rm op} \eqsp,
$$
and therefore
\begin{align*}
\|\mathsf{K}_k- \mathsf{K}_\infty\|_{\rm op} %& \leq \mu_R(\mathcal{X}_R) \sup_{\x , \x' \in \mathcal{X}_R} \| K_r -K_\infty \| \\
& \leq \mu_R(\mathcal{X}_R) \| K_k -K_\infty \|_{\infty,R}  \eqsp.
\end{align*}
On the event $\Omega_{n+1,R}$, Lemma~\ref{lem:kernel_Kk_K0_bound}
applies for every $0\le k\le n$. Combining it with
Lemma~\ref{lem:kernel_K0_Kinf_bound}, both with confidence parameter
$\delta/2$, we obtain for $m\ge d\log(em)+\log(1/\delta)$, there exist universal constants (which may vary from line to line) $C,C' > 0$ such that, with probability at least $1-\delta$
\begin{align*}
\|K_k-K_\infty\|_{\infty,R}
& \le 2dB_R^2 \left( \frac{2B_R\sqrt d}{\sqrt{2\pi m}} \eta_k
+ C \sqrt{\frac{d+\log(2/\delta)}{m}} \right) 
+ C'B_R^2 \sqrt{\frac{d \log(m)+\log(2/\delta)}{m}}  \\
& \le C B_R^2
\left(
\frac{d^{3/2}B_R}{\sqrt m} \eta_k
+
d \sqrt{\frac{d \log(m)+\log(1/\delta)}{m}} \right) \eqsp.
\end{align*}
Therefore,
$$
\|\mathsf D_k\|_{\rm op}
\le \gamma_k \mu_R(\mathcal{X}_R) C B_R^2
\left(
\frac{d^{3/2}B_R}{\sqrt m}\eta_k
+
d \sqrt{\frac{d \log(m)+\log(1/\delta)}{m}} \right) \eqsp.
$$
Hence,
\begin{align*}
\mathcal T_{2,n} & = \sum_{k=0}^{n}\|\mathsf D_k\|_{\rm op} \|\Delta_0\|_{L_2(\mu_R)} \\
& \leq C \mu_R(\mathcal{X}_R) B_R^2 \|\Delta_0\|_{L_2(\mu_R)} \left( \frac{d^{3/2}B_R}{\sqrt m} \sum_{k=0}^{n} \gamma_k \eta_k + d \sqrt{\frac{d \log(m)+\log(1/\delta)}{m}}  \sum_{k=0}^{n} \gamma_k \right) \eqsp,
\end{align*}
taking expectation with respect to the SGD randomness finishes the proof.
% yields
% \begin{align*}
% \E \left[ \mathcal T_{2,n}^2 \un_{\Omega_{n+1,R}} \right]
% & \leq C \mu_R^2(\mathcal{X}_R) B_R^4
% \|\Delta_0\|_{L_2(\mu_R)}^2
% \Bigg[
% \frac{d^3 B_R^2}{m}
% \E \left[
% \left(\sum_{k=0}^{n} \gamma_k \eta_k \right)^2
% \un_{\Omega_{n,R}}
% \right] \\
% &\qquad
% + d^2 \frac{d\log(m)+\log(1/\delta)}{m}
% \left(\sum_{k=0}^{n} \gamma_k \right)^2
% \Bigg] \eqsp,
% \end{align*}
% which concludes the proof.
\end{proof}

\begin{lemma}
\label{lem:kernel_Kk_K0_bound}
There exists a universal constant $C>0$ such that, with probability at least $1-\delta$ over the initialization, for all $0 \le k \le n$,
% \begin{align*}
% \|K_k - K_0\|_{\infty} 
% \le
% 2dB^2\frac{2B\sqrt{d}}{\sqrt{2\pi m}} \eta_k + 2C dB^2 \sqrt{\frac{d+\log(1/\delta)}{m}} \eqsp,
% \end{align*}
% where
% \begin{align*}
% \eta_k \eqdef 
% \sum_{r=0}^{k-1} \gamma_r c(t^{(r)}) \|\loss_r\| \eqsp.
% \end{align*}
\begin{align*}
\|K_k - K_0\|_{\infty,R} 
\le
2dB_R^2 \left( \frac{2B_R\sqrt{d}}{\sqrt{2\pi m}} \eta_k + C  \sqrt{\frac{d+\log(1/\delta)}{m}} \right) \eqsp,
\end{align*}
where
\begin{align*}
\eta_k \eqdef 
\sum_{r=0}^{k-1} \gamma_r c(t^{(r)}) \|\loss_r\| \eqsp.
\end{align*}
\end{lemma}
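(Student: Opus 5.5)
From \eqref{eq:explicit_empirical_ntk}, for $\x,\x'\in\mathcal X_R$ we can write
\[
K_k(\x,\x')-K_0(\x,\x')=\frac{\x^\top\x'}{m}\sum_{i=1}^m\bigl(\mathbf 1^{(k)}_i(\x)\mathbf 1^{(k)}_i(\x')-\mathbf 1^{(0)}_i(\x)\mathbf 1^{(0)}_i(\x')\bigr)A_{\cdot i}A_{\cdot i}^\top ,
\]
where $\mathbf 1^{(k)}_i(\x)=\mathbf 1_{\{\param_i^{(k)\top}\x\ge0\}}$. Since $|ab-a'b'|\le|a-a'|+|b-b'|$ for indicators and $\|A_{\cdot i}A_{\cdot i}^\top\|=\|A_{\cdot i}\|^2=d$ for Rademacher columns, we get
\[
\|K_k(\x,\x')-K_0(\x,\x')\|\le \frac{dB_R^2}{m}\bigl(S_k(\x)+S_k(\x')\bigr),\qquad S_k(\x)\eqdef\#\{i:\mathbf 1^{(k)}_i(\x)\neq\mathbf 1^{(0)}_i(\x)\}.
\]
This yields the prefactor $2dB_R^2$. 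It then suffices to show, uniformly over $\x\in\mathcal X_R$ and $k\le n$, that
\[
\frac{S_k(\x)}{m}\le \frac{2B_R\sqrt d}{\sqrt{2\pi m}}\eta_k+C\sqrt{\frac{d+\log(1/\delta)}{m}} .
\]

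The first step controls the movement of each neuron. From \eqref{sgd_updates}, the $i$-th row changes at step $r$ by $\gamma_r c(t^{(r)})m^{-1/2}\mathbf 1\{\cdot\}\,A_{\cdot i}^\top\loss_r\,\ora\X^{(r)}$. On $\Omega_{n,R}$ we have $\|\ora\X^{(r)}\|\le B_R$, and $|A_{\cdot i}^\top\loss_r|\le\sqrt d\|\loss_r\|$. Summing over $r<k$ gives the deterministic bound
\[
\|\param^{(k)}_i-\param^{(0)}_i\|\le \frac{\sqrt d B_R}{\sqrt m}\eta_k\eqdef\rho_k\quad\text{for every } i.
\]
The second step is a sign-flip inclusion. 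A flip at $\x$ requires $|\param_i^{(0)\top}\x|\le|(\param^{(k)}_i-\param^{(0)}_i)^\top\x|\le\rho_k\|\x\|$. Hence
\[
S_k(\x)\le\sum_i\mathbf 1\{|\param_i^{(0)\top}\x|\le\rho_k\|\x\|\}.
\]
This bound depends only on the initialization and on the scalar $\rho_k$, which removes the dependence on the SGD trajectory. For fixed $\x$, $\param_i^{(0)\top}\x/\|\x\|\sim\mathcal N(0,1)$, so the expectation of each indicator is at most $2\rho_k/\sqrt{2\pi}=\frac{2\sqrt dB_R}{\sqrt{2\pi m}}\eta_k$, which is exactly the first term claimed.

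The third step, which I expect to be the main obstacle, is uniform concentration. We need
\[
\sup_{\x,\,\rho\ge0}\Bigl|\frac1m\sum_i\mathbf 1\{|\param_i^{(0)\top}\x|\le\rho\|\x\|\}-\mathbb P(|g|\le\rho)\Bigr|\le C\sqrt{\frac{d+\log(1/\delta)}{m}}
\]
simultaneously over all directions $\x$ and all thresholds $\rho$. Uniformity in $\rho$ is essential because $\eta_k$ is random and depends on the SGD stream, so we cannot fix it in advance. The sets $\{w:|w^\top v|\le\rho\}$ with $v\in\mathbb S^{d}$ and $\rho\ge0$ are intersections of two half-spaces in $\R^{d+1}$, so the class has VC dimension $O(d)$. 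The standard VC uniform deviation bound (e.g.\ Vershynin) then gives the display with probability $1-\delta$, with no $\log m$ factor if one uses the chaining version. Combining the three steps, and noting the deviation is uniform in $\rho$ and $\x$ and hence holds for all $k\le n$ and all SGD realizations simultaneously, proves the lemma.
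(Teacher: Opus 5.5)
Your proposal is correct and follows the paper's proof essentially step for step. It uses the same indicator-difference bound giving $\frac{2dB_R^2}{m}\|S_k\|_{\infty,R}$, the same per-neuron displacement bound $\rho_k=\sqrt d\,B_R\eta_k/\sqrt m$ on $\Omega_{n,R}$, the same sign-flip inclusion reducing to $|\param_i^{(0)\top}u|\le\rho_k$ on the sphere, and the same VC uniform law over directions and thresholds (Lemma~\ref{lem:VC_dim_bound}) combined with $\mathbb P(|g|\le\rho)\le 2\rho/\sqrt{2\pi}$. Your explicit remark that uniformity in the threshold lets a single initialization event cover the random $\eta_k$ for all $k\le n$ at once is exactly the role played by the supremum over $t\ge0$ in the paper's VC lemma.
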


\begin{proof}
For all $0 \le k \le n$, $1 \le i \le m$, and $\x\in\mathcal{X}_R$, set
\[
I_i^k(\x)=\mathbf{1}\{\param_i^{(k)\top}\x\ge 0\}.
\]
Then, for all $\x,\x'\in\mathcal{X}_R$,
\[
\begin{aligned}
\| K_k(\x,\x') - K_0(\x,\x') \|
&=
\Biggl\|
\frac{\x^\top \x'}{m}
\sum_{i=1}^m
\bigl(I_i^k(\x)I_i^k(\x')-I_i^0(\x)I_i^0(\x')\bigr)
A_{\cdot i}A_{\cdot i}^\top
\Biggr\| \\
&\le
\frac{B_R^2}{m}
\sum_{i=1}^m
\|A_{\cdot i}A_{\cdot i}^\top\|
\bigl|I_i^k(\x)I_i^k(\x')-I_i^0(\x)I_i^0(\x')\bigr| \\
&\le
\frac{dB_R^2}{m}
\sum_{i=1}^m
\Bigl(
|I_i^k(\x')-I_i^0(\x')|
+
|I_i^k(\x)-I_i^0(\x)|
\Bigr) \\
&\le
\frac{dB_R^2}{m}\bigl(S_k(\x) + S_k(\x')\bigr).
\end{aligned}
\]
Taking the supremum over $\x, \x'\in \mathcal{X}_R$ yields
$$
\|K_k-K_0\|_{\infty,R}
\le
\frac{2dB_R^2}{m} \|S_k\|_{\infty,R} \eqsp.
$$
The result then follows from Lemma~\ref{lem:bound:S_k}.
\end{proof}

\begin{lemma}
\label{lem:kernel_K0_Kinf_bound}
If $m\ge d\log(em)+\log(1/\delta)$, there exists a universal constant $C>0$ such that, with probability at least $1-\delta$ over the initialization $(A, \param[0])$,
$$
\|K_0 - K_\infty\|_{\infty,R}
\leq C B_R^2 \sqrt{\frac{ \left( d \log (m)+\log(1/\delta) \right)}{m}} \eqsp.
$$
\end{lemma}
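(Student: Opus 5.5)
Writing the entrywise difference, for $\x,\x'\in\mathcal X_R$,
\[
K_0(\x,\x')-K_\infty(\x,\x') = (\x^\top\x')\,\frac1m\sum_{i=1}^m \Bigl(I_i^0(\x)I_i^0(\x')A_{\cdot i}A_{\cdot i}^\top - \E_w\bigl[\mathbf 1_{\{w^\top\x\ge0\}}\mathbf 1_{\{w^\top\x'\ge0\}}\bigr]\Id_d\Bigr),
\]
I would bound the sum uniformly over pairs of activation patterns. The expectation of each summand over the initialization is exactly the bracketed limiting term, because $\E[A_{\cdot i}A_{\cdot i}^\top]=\Id_d$ for Rademacher entries and $A\perp\param[0]$. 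The prefactor satisfies $|\x^\top\x'|\le B_R^2$. It therefore suffices to show that
\[
\sup_{\x,\x'}\Bigl\|\frac1m\sum_i \bigl(I_i(\x)I_i(\x')A_{\cdot i}A_{\cdot i}^\top - \E[\cdot]\bigr)\Bigr\|\lesssim \sqrt{(d\log m+\log(1/\delta))/m}.
\]
Scale invariance lets me take the supremum over all nonzero $\x,\x'\in\R^{d+1}$.

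I would split into two pieces by adding and subtracting $\frac1m\sum_i I_i(\x)I_i(\x')\Id_d$.

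\emph{Piece 1: matrix fluctuation of $A$.} Condition on $\param[0]$. For a fixed subset $S\subseteq[m]$ of active neurons, $\frac1m\sum_{i\in S}(A_{\cdot i}A_{\cdot i}^\top-\Id_d)$ is a sum of independent centered matrices, each of operator norm at most $d$. I would reduce to a quadratic form $u^\top(\cdot)v$ over an $\epsilon$-net of the sphere, of size $9^{d}$. For fixed unit $u,v$, the terms $(u^\top A_{\cdot i})(A_{\cdot i}^\top v)-u^\top v$ are sub-exponential with $O(1)$ norm, since $u^\top A_{\cdot i}$ is $1$-sub-Gaussian. Bernstein then gives a deviation of order $\sqrt{t/m}+t/m$. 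The patterns $S$ range over the sets $\{i: W_i^\top\x\ge0, W_i^\top\x'\ge0\}$, i.e. intersections of two halfspace-induced subsets. By Sauer--Shelah with VC dimension $d+1$ there are at most $(em/(d+1))^{2(d+1)}$ such sets. A union bound over patterns and the net uses
\[
t\asymp d\log m+d+\log(1/\delta),
\]
and the condition $m\ge d\log(em)+\log(1/\delta)$ makes the $t/m$ term dominated by $\sqrt{t/m}$.

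\emph{Piece 2: scalar empirical process.} This is the uniform deviation
\[
\sup_{\x,\x'}\Bigl|\frac1m\sum_i I_i(\x)I_i(\x')-\E_w\bigl[\mathbf 1_{\{w^\top\x\ge0\}}\mathbf 1_{\{w^\top\x'\ge0\}}\bigr]\Bigr|.
\]
The class $\{w\mapsto \mathbf 1_{\{w^\top\x\ge0\}}\mathbf 1_{\{w^\top\x'\ge0\}}\}$ is an intersection of two halfspaces through the origin, so its VC dimension is $O(d)$. The standard VC uniform deviation bound then gives $C\sqrt{(d\log m+\log(1/\delta))/m}$ with probability $1-\delta/2$; here I would use the growth-function version to get $\log m$ rather than a chaining-free constant.

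Combining both pieces with confidence $\delta/2$ each and multiplying by $B_R^2$ yields the claim. The main obstacle is Piece 1: the supremum is over a continuum of inputs, but the randomness that matters is in $A$ given the pattern. The key is to condition on $\param[0]$ so that the patterns form a finite, combinatorially bounded family, and to make sure the $9^d$ net cost and the $(em)^{O(d)}$ pattern cost both enter only logarithmically. If the sub-exponential term creates trouble, I would instead truncate using $\|A_{\cdot i}\|^2=d$ exactly and apply matrix Bernstein. That alternative loses a factor $d$, which the authors may absorb, since Lemma~\ref{lem:second_term_bound} already carries extra $d$ factors.
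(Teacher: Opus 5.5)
Your proposal is correct and follows essentially the same route as the paper: bound the prefactor by $B_R^2$, then add and subtract $\frac1m\sum_i I_i^0(\x)I_i^0(\x')\Id_d$. The resulting matrix fluctuation of $A$ is controlled conditionally on $\param[0]$ via Sauer--Shelah over activation patterns and a union bound, the scalar part by a VC uniform deviation bound, each at confidence $\delta/2$. The only differences are cosmetic: you re-derive the fixed-pattern covariance bound with an $\epsilon$-net and scalar Bernstein where the paper cites the sub-Gaussian covariance estimation theorem of \citet{vershynin2026hdp}, and you use a growth-function VC bound (with a harmless extra $\log m$) where the paper uses the chaining-based VC law of large numbers plus McDiarmid.
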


\begin{proof}
For all $1 \le i \le m$ and $\x\in\mathcal{X}_R$, set
\[
I_i^0(\x)=\mathbf{1}\{\param_i^{(0)\top}\x\ge 0\},
\qquad
p(\x,\x')=\E_{\param\sim\mathcal N(0,\Id_{d+1})}
\left[
\mathbf{1}\{\param^\top \x\ge 0\}
\mathbf{1}\{\param^\top \x'\ge 0\}
\right].
\]
Since $\x\in\mathcal X_R$, we have $\|\x\|\le B_R$
\begin{align*}
\|K_0(\x,\x')-K_\infty(\x,\x')\|
&\le
B_R^2
\Biggl\|
\frac{1}{m}\sum_{i=1}^m
I_i^0(\x)I_i^0(\x')
A_{\cdot i}A_{\cdot i}^\top
-
p(\x,\x')\Id_d
\Biggr\|.
\end{align*}
Moreover, by independence of $\param_i^{(0)}$ and $A$, we get
\[
\E\left[
\mathbf{1}\{\param^\top \x\ge 0\}
\mathbf{1}\{\param^\top \x'\ge 0\}
A_{\cdot i}A_{\cdot i}^\top
\right]
=
p(\x,\x')\Id_d .
\]
Therefore, by Lemma~\ref{lem:cov_proba_bound}, if
$m\ge d\log(\rme m)+\log(1/\delta)$, then with probability at least $1-\delta$,
\[
\Biggl\|
\frac{1}{m}\sum_{i=1}^m
I_i^0(\x)I_i^0(\x')
A_{\cdot i}A_{\cdot i}^\top
-
p(\x,\x')\Id_d
\Biggr\|
\le
C\sqrt{\frac{d\log(m)+\log(1/\delta)}{m}} .
\]
Combining the last two bounds concludes the proof.
\end{proof}

\subsection{Third term}

\begin{lemma}
\label{lem:controle_v_term}
Assume that $\gamma_k \leq \Gamma_R $ for every $k\ge 0$. Then, there exists a universal constant $C>0$ such that, with probability at least $1-\delta$,
% \begin{align*}
%  \E \left[ \mathcal{T}_{3,k}^2 \right] & \leq  \frac{\|A\|^4}{m^2} B_R^4 \mu_R(\mathcal{X}_R) c_\infty^2 \sum_{r=0}^{k}
% \gamma_r^2 
% \E \left[
% \|\Delta_r(\ora\X^{(r)})+\xi_r\|^2 \mathbf{1}_{\{ \|\ora X^{(r)}_{t^{(r)}} \| \leq R \}} 
% \right] \eqsp.
% \end{align*}
\begin{align*}
\E_{\sgd} \left[ \mathcal{T}_{3,n} \un_{\Omega_{n+1,R}} \right]
&\leq C
\left(
1+\sqrt{\frac d m}
+\sqrt{\frac{\log(1/\delta)}{m}}
\right)^2
\sqrt{c_\infty \mu_R(\mathcal{X}_R)} B_R^2 \\
&\quad \times 
\sqrt{\sum_{k=0}^{n}
\gamma_k^2 
\E_{\sgd} \left[
c(t^{(k)}) \|\Delta_k(\ora\X^{(k)})+\xi_k\|^2 \mathbf{1}_{\{ \|\ora X^{(k)}_{t^{(k)}} \| \leq R \}} 
\right]}
\end{align*}
\end{lemma}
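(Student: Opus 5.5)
The plan is a martingale argument in $L_2(\mu_R;\R^d)$. Set $M_n \eqdef \sum_{k=0}^{n}\bigl(\prod_{r=k+1}^{n}\mathsf Q_r\bigr)v_k$. The first observation is that $v_k$ is a martingale difference with respect to $\mathcal G_k$. The operator $\mathsf K_k$ is defined with the indicator $\mathbf 1_{\{\|\ora X_\tau\|\le R\}}$ and the weight $c(\tau)$, and $\ora\X^{(k)}$ is independent of $\mathcal G_k$ with the law of $\ora\X$. Using $\E[\xi_k\mid\ora\X^{(k)}]=0$, this gives $\E[v_k(\x,\ora\X^{(k)})\mid\mathcal G_k]=0$ for every $\x$.

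The products $\prod_{r=k+1}^n\mathsf Q_r$ depend on later iterates, so I do not expand $\|M_n\|^2$ directly. Instead I use the backward recursion $M_n=\mathsf Q_n M_{n-1}+v_n$. Then
\[
\E\bigl[\|M_n\|^2_{L_2(\mu_R)}\mid\mathcal G_n\bigr]=\|\mathsf Q_nM_{n-1}\|^2+\E\bigl[\|v_n\|^2\mid\mathcal G_n\bigr],
\]
because $\mathsf Q_n$ and $M_{n-1}$ are $\mathcal G_n$-measurable and the cross term vanishes by Fubini in $\x$. Lemma~\ref{lem:operator_norm_bound} gives $\|\mathsf Q_n\|_{\rm op}\le1$ when $\gamma_k\le\Gamma_R$. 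Iterating yields $\E\|M_n\|^2\le\sum_{k}\E\|v_k\|^2$. Jensen then gives $\E[\mathcal T_{3,n}\mathbf 1_{\Omega_{n+1,R}}]\le(\sum_k\E\|v_k\|^2)^{1/2}$. The indicator is handled by bounding by the full expectation, since the $v_k$ already carry the truncation indicator.

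Next I bound each $\E\|v_k\|^2$. A conditional variance is at most the conditional second moment, so it suffices to control
\[
\gamma_k^2\, c(t^{(k)})^2\int\|K_k(\x,\ora\X^{(k)})\|^2\,\|\Delta_k(\ora\X^{(k)})+\xi_k\|^2\,\mathbf 1_{\{\|\ora X^{(k)}_{t^{(k)}}\|\le R\}}\,\mu_R(\rmd\x).
\]
Step 2 of Lemma~\ref{lem:operator_norm_bound} gives $\|K_k(\x,\x')\|\le \|A\|^2B_R^2/m$ on $\mathcal X_R$, so the $\x$-integral is bounded by $\mu_R(\mathcal X_R)\|A\|^4B_R^4/m^2$. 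Pulling one factor $c(t^{(k)})\le c_\infty$ out of $c^2$ leaves exactly the quantity under the square root in the statement, with prefactor $\sqrt{c_\infty\mu_R(\mathcal X_R)}\,B_R^2\,\|A\|^2/m$.

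The last step is to control $\|A\|^2/m$ for the $d\times m$ Rademacher matrix $A$. The standard bound on operator norms of sub-Gaussian matrices (Vershynin) gives, with probability $1-\delta$,
\[
\|A\|\le C(\sqrt m+\sqrt d+\sqrt{\log(1/\delta)}),
\]
and hence $\|A\|^2/m\le C\bigl(1+\sqrt{d/m}+\sqrt{\log(1/\delta)/m}\bigr)^2$. This is the only high-probability event involved, and it concerns the initialization only.

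The main obstacle is the martingale step. $\mathsf Q_r$ for $r>k$ depends on future samples, so the naive orthogonality of $\sum_k(\prod\mathsf Q)v_k$ fails. The backward recursion is what avoids this, together with measurability of $\mathsf K_k$ with respect to $\mathcal G_k$ (it depends on $\param[k]$ only). Some care is needed with the truncation indicator: the recursion \eqref{eq:delta_rec} is only valid on $\Omega_{n+1,R}$, but $v_k$ as defined is a genuine martingale difference everywhere, so the bound can be carried out unconditionally.
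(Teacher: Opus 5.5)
Your proposal is correct and follows the paper's proof essentially step for step. You use the same backward recursion $q_n = v_n + \mathsf Q_n q_{n-1}$, with the cross term killed by the martingale-difference property of $v_n$ and with $\|\mathsf Q_n\|_{\rm op}\le 1$ from Lemma~\ref{lem:operator_norm_bound}. This is followed by Jensen, the conditional-variance-by-second-moment bound on $\|Y_k\|^2$, the uniform kernel bound $\|K_k\|\le \|A\|^2B_R^2/m$ on $\mathcal X_R$, $c^2\le c_\infty c$, and Vershynin's operator-norm bound on the Rademacher matrix $A$. Your filtration $\mathcal G_k$ is only a re-indexing of the paper's conditioning on $\mathcal F_{k-1}$ given the initialization.
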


% \begin{align}
% \E\left[\mathcal T_{3,k}\right] & = \E \left[ \left\|
% \sum_{r=0}^{k}\left(\prod_{s=r+1}^{k} \mathsf{Q}_s\right) v_r
% \right\|_{L_2(\mu_R)} \right] \notag \\
% & \leq \sqrt{\sum_{r=0}^k \gamma_r^2 
% \mathbb E \Big[ c(t^{(r)})^2
% \|\Delta_r(\ora \X^{(r)})+\xi_r\|^2 \int \left\|K_r(\x,\ora \X^{(r)}) \right\|^2 \mu_R( \rmd \x) 
% \Big]}
% \label{eq:control_term_3} \\
% % & \leq \sqrt{\sum_{r=0}^k \gamma_r^2 
% % \sigma_r^2 \sup_{z \in \R^{d} \times [0,T]} \int \left\| K_r(\x,z) \right\|^2_{\rm op} \mu_R( \rmd \x) 
% % \Big]} \eqsp. \notag
% \end{align}

\begin{proof}
Recall that the SGD filtration is given by $\mathcal{F}_n \eqdef \sigma \left((X_0^{(0)},Z^{(0)},t^{(0)}),\dots,(X_0^{(n)},Z^{(n)},t^{(n)})\right)
$ and define
$$
q_n:=\sum_{k=0}^{n}\left(\prod_{r=k+1}^{n} \mathsf Q_r\right) v_k,
\qquad \text{and} \qquad
h_n:=\mathsf Q_n q_{n-1} \eqsp,
$$
so that
$$
q_n=v_n+h_n \eqsp.
$$
Therefore
\begin{align*}
\mathbb E_{\sgd}\|q_n\|_{L_2(\mu_R)}^2
& =
\mathbb E_{\sgd}\|v_n+h_n\|^2_{L_2(\mu_R)} \\
& =
\mathbb E_{\sgd}\|v_n\|^2_{L_2(\mu_R)} +\mathbb E_{\sgd}\|h_n\|^2_{L_2(\mu_R)} +2\mathbb E_{\sgd}\langle v_n,h_n\rangle_{L_2(\mu_R)}  \eqsp.
\end{align*}

\paragraph{Upper bound on $\mathbb E_{\sgd}\|v_k\|^2_{L_2(\mu_R)}$ for all $0\le k \le n$.}

Recalling that
$$
v_k(\x)
=
\gamma_k\,c(t^{(k)})\,K_k(\x,\ora\X^{(k)})
\left(\Delta_k(\ora\X^{(k)})+\xi_k\right) \mathbf{1}_{\{ \|\ora X^{(k)}_{t^{(k)}} \| \leq R \}}
-
\gamma_k (\mathsf K_k\Delta_k)(\x) \eqsp,
$$
we get
\begin{align*}
\mathbb E_{\sgd} \left[ v_k(\x)\mid \mathcal F_{k-1} \right]
&=
\gamma_k 
\mathbb E_{\sgd} \left[
c(t^{(k)}) K_k(\x,\ora\X^{(k)})
\left( \Delta_k(\ora\X^{(k)})+\xi_k \right)  \mathbf{1}_{\{ \|\ora X^{(k)}_{t^{(k)}} \| \leq R \}} \middle| \mathcal F_{k-1} \right] \\
&\qquad\qquad\qquad
-
\gamma_k (\mathsf K_k\Delta_k)(\x) \eqsp,
\end{align*}
because conditionally on the initialization, the maps $K_k(\cdot,\cdot)$ and
$\Delta_k(\cdot)$ are $\mathcal F_{k-1}$-measurable, since $\param[k]$ is
determined by the samples up to iteration $k-1$. Moreover, by definition of $\xi_k$
$$
\mathbb E_{\sgd}[\xi_k\mid \ora\X^{(k)},\mathcal F_{k-1}]=0 \eqsp.
$$
Using that $\mathcal F_{k-1} \subset \sigma( \ora \X^{(k)} , \mathcal F_{k-1})$ we get
\begin{align*}
&\mathbb E_{\sgd}\left[
c(t^{(k)}) K_k(\x,\ora\X^{(k)})
\left(\Delta_k(\ora\X^{(k)})+\xi_k\right) \mathbf{1}_{\{ \|\ora X^{(k)}_{t^{(k)}} \| \leq R \}}
\middle| \mathcal F_{k-1}
\right] \\
&\quad=
\mathbb E_{\sgd}\left[
c(t^{(k)}) K_k(\x,\ora\X^{(k)}) \mathbf{1}_{\{ \|\ora X^{(k)}_{t^{(k)}} \| \leq R \}}
 \left( \Delta_k(\ora\X^{(k)})+  \mathbb E_{\sgd} \left[ \xi_k\mid \ora\X^{(k)} , \mathcal F_{k-1}\right] \right)
 \middle|  \mathcal F_{k-1}
\right] \\
&\quad=
\mathbb E_{\sgd} \left[
c(t^{(k)}) K_k(\x,\ora\X^{(k)}) \Delta_k(\ora\X^{(k)}) \mathbf{1}_{\{ \|\ora X^{(k)}_{t^{(k)}} \| \leq R \}}
\middle| \mathcal F_{k-1}
\right] \\
& \quad =
(\mathsf K_k\Delta_k)(\x) \eqsp.
\end{align*}

Therefore,
\begin{equation} \label{eq:martingal_identity_vk}
\mathbb E_{\sgd} [v_k(\x)\mid \mathcal F_{k-1}] = 0 \eqsp.
\end{equation}
Setting,
$$
Y_k(\x)
\eqdef
c(t^{(k)}) K_k(\x,\ora\X^{(k)})
\left( \Delta_k(\ora\X^{(k)})+\xi_k \right) \mathbf{1}_{\{ \|\ora X^{(k)}_{t^{(k)}} \| \leq R \}} \eqsp.
$$
we proved that
$$
v_k(\x)
=
\gamma_k\Bigl(
Y_k(\x)-\mathbb E_{\sgd} [Y_k(\x)\mid \mathcal F_{k-1}]
\Bigr) \eqsp.
$$
Hence, using the conditional variance bound
\begin{align*}
\mathbb E_{\sgd} \left[\|v_k\|^2_{L_2(\mu_R)} \middle| \mathcal F_{k-1}\right]
&=
\gamma_k^2\,
\mathbb E_{\sgd} \left[
\big\|Y_k-\mathbb E_{\sgd} [Y_k \mid \mathcal F_{k-1}]\big\|^2_{L_2(\mu_R)}
\middle| \mathcal F_{k-1}
\right] \\
&\le
\gamma_k^2
\mathbb E_{\sgd} \left[
\|Y_k\|^2_{L_2(\mu_R)}
 \middle| \mathcal F_{k-1}
\right] \eqsp,
\end{align*}
so that
\begin{align*}
\|Y_k\|^2_{L_2(\mu_R)}
&=
\int
\left\|
c(t^{(k)}) K_k(\x,\ora\X^{(k)})
\left(\Delta_k(\ora\X^{(k)})+\xi_k\right)
\mathbf{1}_{\{ \|\ora X^{(k)}_{t^{(k)}} \| \leq R \}} \right\|^2
\mu_R( \rmd \x) \\
&\le
c(t^{(k)})^2
\|\Delta_k(\ora\X^{(k)})+\xi_k\|^2 \mathbf{1}_{\{ \|\ora X^{(k)}_{t^{(k)}} \| \leq R \}}
\int
\|K_k(\x,\ora\X^{(k)})\|^2
\mu_R(\rmd \x) \eqsp.
\end{align*}
Taking expectation, we obtain
$$
\mathbb E_{\sgd} \|v_k\|^2_{L_2(\mu_R)}
\le
\gamma_k^2 
\E_{\sgd}  \left[
c(t^{(k)})^2
\|\Delta_k(\ora\X^{(k)})+\xi_k\|^2  \mathbf{1}_{\{ \|\ora X^{(k)}_{t^{(k)}} \| \leq R \}}
\int
\|K_k(\x,\ora\X^{(k)})\|^2
 \mu_R(\rmd \x)
\right] \eqsp.
$$

\paragraph{Recursive expression for $\mathbb E_{\sgd} \|q_n\|^2_{L_2(\mu_R)}$.}

Moreover, since $h_n$ is $\mathcal F_{n-1}$-measurable and using \eqref{eq:martingal_identity_vk}, we obtain
\begin{align*}
\mathbb E_{\sgd} \langle v_n,h_n\rangle_{L_2(\mu_R)}
=
\mathbb E_{\sgd} \Big[\mathbb E_{\sgd}[\langle v_n,h_n\rangle_{L_2(\mu_R)} \mid \mathcal F_{n-1}]\Big] & =
\mathbb E_{\sgd} \Big[\big\langle \mathbb E_{\sgd} [v_n\mid \mathcal F_{n-1}], h_n\big\rangle_{L_2(\mu_R)}  \Big] \\
& =0 \eqsp.
\end{align*}
Hence
$$
\E_{\sgd} \|q_n\|^2_{L_2(\mu_R)}
=
\E_{\sgd} \|v_n\|^2_{L_2(\mu_R)} +\E_{\sgd} \|h_n\|^2_{L_2(\mu_R)} \eqsp.
$$
Using Lemma \ref{lem:operator_norm_bound} with $\gamma_k \leq \Gamma_R$, we have that $\|\mathsf Q_n\|_{\rm op} \le 1$, and therefore
$$
\|h_n\|_{L_2(\mu_R)} =\|\mathsf Q_n  q_{n-1}\|_{L_2(\mu_R)} \le \|q_{n-1}\|_{L_2(\mu_R)} \eqsp,
$$
so that
$$
\mathbb E_{\sgd} \|q_n\|^2_{L_2(\mu_R)}
\le
\E_{\sgd} \|v_n\|^2_{L_2(\mu_R)} +\E_{\sgd} \|q_{n-1}\|^2_{L_2(\mu_R)} \eqsp.
$$
Therefore,
$$
\mathbb E_{\sgd} \|q_n\|^2_{L_2(\mu_R)}
\le
\sum_{k=0}^{n}\mathbb E_{\sgd} \|v_k\|^2_{L_2(\mu_R)} \eqsp.
$$

\paragraph{Final bound.} 
By Jensen's inequality,
$$
\E_{\sgd} [\mathcal T_{3,n} \un_{\Omega_{n+1,R}}]
\le \mathbb E_{\sgd} \|q_n\|_{L_2(\mu_R)}
\le \left(\mathbb E_{\sgd} \|q_n\|^2_{L_2(\mu_R)}\right)^{1/2} \eqsp.
$$
Hence,
\begin{multline*}
\E_{\sgd} \left[ \mathcal{T}_{3,n} \un_{\Omega_{n+1,R}} \right] \\
\le
\sqrt{\sum_{k=0}^{n}
\gamma_k^2 
\E_{\sgd} \left[
c(t^{(k)})^2
\|\Delta_k(\ora\X^{(k)})+\xi_k\|^2
\mathbf{1}_{\{ \|\ora X^{(k)}_{t^{(k)}} \| \leq R \}} \int
\|K_k(\x,\ora\X^{(k)})\|^2
 \mu_R( \rmd \x)
\right]} \eqsp.
\end{multline*}
%Equation \eqref{eq:control_term_3} follows from Jensen's inequality.
Moreover, 
\begin{align*}
\sup_{z\in\mathcal{X}_R}
\int
\|K_k(\x,z)\|^2 \mu_R( \rmd \x) \leq \frac{\|A\|^4}{m^2} B_R^4 \mu_R(\mathcal{X}_R) \eqsp.
\end{align*}
Therefore
\begin{align*}
 \E_{\sgd} \left[ \mathcal{T}_{3,n} \un_{\Omega_{n+1,R}} \right] & \leq  \frac{\|A\|^2}{m} B_R^2 \mu_R(\mathcal{X}_R)^{1/2} \sqrt{\sum_{k=0}^{n}
\gamma_k^2 
\E_{\sgd} \left[
c(t^{(k)})^2
\|\Delta_k(\ora\X^{(k)})+\xi_k\|^2 \mathbf{1}_{\{ \|\ora X^{(k)}_{t^{(k)}} \| \leq R \}} 
\right]} \eqsp.
\end{align*}
Using $c_\infty = \sup_{t \in [0,T]} c(t)$, we obtain
\begin{align*}
\E_{\sgd} \left[ \mathcal{T}_{3,n} \un_{\Omega_{n+1,R}} \right] & \leq \frac{\|A\|^2}{m} \sqrt{c_\infty \mu_R(\mathcal{X}_R)} B_R^2 \sqrt{\sum_{k=0}^{n}
\gamma_k^2 
\E_{\sgd} \left[
c(t^{(k)}) \|\Delta_k(\ora\X^{(k)})+\xi_k\|^2 \mathbf{1}_{\{ \|\ora X^{(k)}_{t^{(k)}} \| \leq R \}} 
\right]} \eqsp,
\end{align*}
Since $A$ has i.i.d. Rademacher entries, by \citep[Theorem~4.4.3]{vershynin2026hdp}, with probability at least $1-\delta$,
\[
\|A\| \le C\left(\sqrt m+\sqrt d+\sqrt{\log(1/\delta)}\right) \eqsp,
\]
for some universal constant $C>0$. Therefore,
\[
\frac{\|A\|^2}{m}
\le
C\left(
1+\sqrt{\frac d m}
+\sqrt{\frac{\log(1/\delta)}{m}}
\right)^2 \eqsp,
\]
which concludes the proof.
\end{proof}

We let 
$$
\sigma_k^2
\eqdef
\E_{\sgd}\!\left[
\|\Delta_k\|_{L_2(\mu_R)}^2
\mathbf 1_{\Omega_{k,R}}
\right] +\tau_c^2 \eqsp,
$$
where $\tau_c^2 =\E\!\left[ c(\tau)\|\xi\|^2
\right]$ and $ \xi= X_0-f^\star(\ora X_\tau,\tau)$. For the $k$-th SGD sample, we write
$$
\xi_k
\eqdef
X_0^{(k)}-f^\star(\ora \X^{(k)}),
$$
so that $\E[\xi_k\mid \ora \X^{(k)}]=0$ and therefore
\begin{equation} \label{eq:delta_sigma_relation}
\E_{\sgd} \left[
c(t^{(k)})
\|\Delta_k(\ora\X^{(k)})+\xi_k\|^2
\mathbf 1_{\Omega_{k+1,R}}
\right] \le
\E_{\sgd}\!\left[
\|\Delta_k\|_{L_2(\mu_R)}^2
\mathbf 1_{\Omega_{k,R}}
\right]
+
\E \left[c(\tau)\|\xi\|^2\right]
=
\sigma_k^2 \eqsp.
\end{equation}

\begin{lemma}
\label{lem:sigma_recursion}
Assume that $\gamma_k < \Gamma_R$. Then, for all $k\geq 0$, we have
$$
\sigma_{k+1}^2
\le
\left(
1+
\gamma_k B_R^2\sqrt{c_\infty\mu_R(\mathcal X_R)}
\left(
d+\frac{\|A\|^2}{m}
\right)
\right)^2
\sigma_k^2 \eqsp.
$$

\end{lemma}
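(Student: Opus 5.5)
Since $\tau_c^2$ does not depend on $k$, it suffices to bound $\E_{\sgd}[\|\Delta_{k+1}\|^2_{L_2(\mu_R)}\mathbf 1_{\Omega_{k+1,R}}]$ in terms of $\sigma_k^2$. The plan is a one-step perturbation bound: write $\Delta_{k+1}-\Delta_k = f_\theta(\cdot,\param[k])-f_\theta(\cdot,\param[k+1])$ and control this increment in $L_2(\mu_R)$ by the size of the SGD step, which is proportional to $\gamma_k c(t^{(k)})\|\loss_k\|$. The squared form $(1+\gamma_k B_R^2\sqrt{c_\infty\mu_R(\mathcal X_R)}(d+\|A\|^2/m))^2$ suggests applying Minkowski in $L_2(\Omega)$: show $\|\Delta_{k+1}\mathbf 1\|_{L_2(\mathbb P\otimes\mu_R)}\le \|\Delta_k\mathbf 1\|+(\text{increment})$. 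Then bound both terms by $\sigma_k$ and square. The term $\tau_c^2$ is absorbed because the prefactor is at least one.

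\emph{Step 1 (increment in function space).} Start from \eqref{sgd_updates}. On $\Omega_{k+1,R}$ we have $\|\ora \X^{(k)}\|\le B_R$, and the row update of $\param_i$ has norm at most $\gamma_k c(t^{(k)}) m^{-1/2}|A_{\cdot i}^\top \loss_k| B_R$. I will split $f_\theta(\x,\param[k+1])-f_\theta(\x,\param[k])$ into two parts.

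The first part is the linear part with frozen activation pattern $D_{\x,\param[k]}$. It equals $-\gamma_k c(t^{(k)}) K_k(\x,\ora\X^{(k)})\loss_k$, and its norm is at most $\gamma_k c(t^{(k)})\frac{\|A\|^2}{m}B_R^2\|\loss_k\|$.

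The second part is the sign-flip correction, where the ReLU pattern changes. For a neuron $i$ whose sign flips at $\x$, one has $|\param_i^{(k)\top}\x|\le|(\param_i^{(k+1)}-\param_i^{(k)})^\top\x|$. Its contribution is therefore bounded by $m^{-1/2}\|A_{\cdot i}\|\,|\Delta\param_i^\top \x|$. Using $\|A_{\cdot i}\|=\sqrt d$ and $|A_{\cdot i}^\top\loss_k|\le\sqrt d\|\loss_k\|$, summing over $i$ gives at most $\gamma_k c(t^{(k)}) d B_R^2\|\loss_k\|$. This is crude: it ignores that few neurons flip. Together the pointwise bound is $\gamma_k c(t^{(k)})B_R^2(d+\|A\|^2/m)\|\loss_k\|$, uniformly in $\x\in\mathcal X_R$.

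\emph{Step 2 (integration).} Integrating the uniform bound over $\mu_R$ gives
\[
\|\Delta_{k+1}-\Delta_k\|_{L_2(\mu_R)}\le \gamma_k B_R^2\Bigl(d+\tfrac{\|A\|^2}{m}\Bigr)\sqrt{\mu_R(\mathcal X_R)}\,c(t^{(k)})\|\loss_k\|\mathbf 1_{\{\|\ora X^{(k)}_{t^{(k)}}\|\le R\}} .
\]
Next I write $c(t^{(k)})\le\sqrt{c_\infty}\sqrt{c(t^{(k)})}$ and note $\loss_k=-(\Delta_k(\ora\X^{(k)})+\xi_k)$. Taking $L_2(\mathbb P_{\sgd})$ norms and using \eqref{eq:delta_sigma_relation} bounds $\|\sqrt{c(t^{(k)})}\loss_k\mathbf 1_{\Omega_{k+1,R}}\|_{L_2}$ by $\sigma_k$. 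Since $\mathbf 1_{\Omega_{k+1,R}}\le\mathbf 1_{\Omega_{k,R}}$, we also have $\|\Delta_k\mathbf 1_{\Omega_{k+1,R}}\|\le\sigma_k$. Minkowski then gives the claimed factor times $\sigma_k$; squaring and adding $\tau_c^2\le(\cdot)^2\tau_c^2$ finishes the proof.

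\emph{Main obstacle.} The main obstacle is the sign-flip control in Step 1: it must be pointwise and uniform in $\x$ while staying linear in $\|\loss_k\|$. I will handle it via the elementary inequality $|\mathbf 1_{a'\ge0}a'-\mathbf 1_{a\ge0}a-\mathbf 1_{a\ge0}(a'-a)|\le|a'-a|$, applied neuronwise. I also need to justify that \eqref{eq:delta_sigma_relation} applies with the sample indicator rather than $\Omega_{k,R}$, using that $\ora\X^{(k)}$ is independent of $\mathcal F_{k-1}$ and has law $\mu_R$ after the weighting $c$ on the ball.
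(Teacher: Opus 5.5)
Your argument is correct and uses the same ingredients as the paper's proof. These are the split of $f_\theta(\cdot,\param[k+1])-f_\theta(\cdot,\param[k])$ into the frozen-pattern NTK term $-\gamma_k c(t^{(k)})K_k(\cdot,\ora\X^{(k)})\loss_k$ and the sign-flip remainder $\varepsilon_k$ of \eqref{eq:expression_vareps}; the pointwise bounds $\frac{\|A\|^2}{m}B_R^2$ and $dB_R^2$ on $\mathcal X_R$ (the latter via the crude count $\widetilde S_k\le m$, exactly as in the paper); $c^2\le c_\infty c$; and \eqref{eq:delta_sigma_relation}.

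Only the way the pieces are combined differs. The paper further splits the NTK term into its conditional mean, which it merges with $\Delta_k$ into $\mathsf Q_k\Delta_k$, and the martingale increment $v_k$. It then expands $\|\mathsf Q_k\Delta_k-v_k+\varepsilon_k\|^2$ into six terms and bounds the cross terms by Cauchy--Schwarz. That route needs $\|\mathsf Q_k\|_{\rm op}\le 1$, hence $\gamma_k\le\Gamma_R$, together with the conditional-variance bound for $v_k$. Your Minkowski step on the uncentered increment gives the same constant in one line and uses neither. It therefore shows that the step-size hypothesis is not actually needed for this lemma.

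Your last step needs one bookkeeping correction. Set $\Lambda_R=B_R^2\sqrt{c_\infty\mu_R(\mathcal X_R)}(d+\|A\|^2/m)$. If you bound $\|\Delta_k\mathbf 1_{\Omega_{k+1,R}}\|_{L_2(\mathbb P\otimes\mu_R)}$ by $\sigma_k$ and then add $\tau_c^2$, you only get $\sigma_{k+1}^2\le(1+\gamma_k\Lambda_R)^2\sigma_k^2+\tau_c^2$. The remark $\tau_c^2\le(1+\gamma_k\Lambda_R)^2\tau_c^2$ turns this into $(1+\gamma_k\Lambda_R)^2(\sigma_k^2+\tau_c^2)$, which is not the claim.

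Keep instead the exact value $\E_{\sgd}[\|\Delta_k\|^2_{L_2(\mu_R)}\mathbf 1_{\Omega_{k,R}}]=\sigma_k^2-\tau_c^2$. The cleanest way is to apply Minkowski in $L_2(\mathbb P\otimes\mu_R)\oplus\R$ to the identity $(\Delta_{k+1}\mathbf 1_{\Omega_{k+1,R}},\tau_c)=(\Delta_k\mathbf 1_{\Omega_{k+1,R}},\tau_c)+((\Delta_{k+1}-\Delta_k)\mathbf 1_{\Omega_{k+1,R}},0)$. The three norms are, respectively, $\sigma_{k+1}$, at most $\sigma_k$, and at most $\gamma_k\Lambda_R\sigma_k$, which gives $\sigma_{k+1}\le(1+\gamma_k\Lambda_R)\sigma_k$ exactly. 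The paper's write-up is equally loose here: it replaces $\sigma_k^2-\tau_c^2$ by $\sigma_k^2$ in the $T_1$ bound and never adds $\tau_c^2$ back.
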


\begin{proof}

From Lemma~\ref{lem:delta_rec_matrix} 
$$
\Delta_{k+1} \un_{\Omega_{k+1,R}} = \left( \mathsf{Q}_k   \Delta_k - v_k + \varepsilon_k \right) \un_{\Omega_{k+1,R}} \eqsp.
$$
Therefore, we obtain
\begin{align}
\label{eq:decomposition_sigma_T1T6}
\|\Delta_{k+1}\|^2_{L_2(\mu_R)} \un_{\Omega_{k+1,R}} 
&\le \left( T_1 + T_2 + T_3 + 2T_4 + 2T_5 + 2T_6 \right) \un_{\Omega_{k+1,R}} \eqsp,
\end{align}
where
\begin{align*}
T_1 &:= \|\mathsf Q_k \Delta_k\|^2_{L_2(\mu_R)} \eqsp, \\
T_2 &:= \|v_k\|^2_{L_2(\mu_R)} \eqsp, \\
T_3 &:= \|\varepsilon_k\|^2_{L_2(\mu_R)} \eqsp, \\
T_4 &:= \|\mathsf Q_k \Delta_k\|_{L_2(\mu_R)} \|v_k\|_{L_2(\mu_R)} \eqsp, \\
T_5 &:= \|\mathsf Q_k \Delta_k\|_{L_2(\mu_R)} \|\varepsilon_k\|_{L_2(\mu_R)} \eqsp, \\
T_6 &:= \|v_k\|_{L_2(\mu_R)} \|\varepsilon_k\|_{L_2(\mu_R)} \eqsp.
\end{align*}
We now bound these terms in expectation.
First, by the step-size condition and Lemma \ref{lem:operator_norm_bound},
\begin{align}
\label{eq:T1_bound}
\mathbb E_{\sgd} \left[T_1 \un_{\Omega_{k+1,R}} \right] 
%&= \mathbb E \left[\|\mathsf Q_r \Delta_r\|^2_{L_2(\mu_R)}\right] 
\le \E_{\sgd}\!\left[ \|\Delta_k\|_{L_2(\mu_R)}^2 \un_{\Omega_{k,R}} \right]
%\le \mathbb E \left[\|\Delta_k\|^2_{L_2(\mu_R)} \right] 
= \sigma_k^2-\tau_c^2
\le \sigma_k^2 \eqsp.
\end{align}
where we used that $\Omega_{k+1,R}\subset\Omega_{k,R}$. For the second term,
\begin{align*}
%\label{eq:T2_bound}
\E_{\sgd} \left[ T_2\mathbf 1_{\Omega_{k+1,R}} \right]
& = \E_{\sgd}\!\left[ \|v_k\|_{L_2(\mu_R)}^2 \un_{\Omega_{k+1,R}}
\right]  \\
%& \mathbb E \left[\|v_k\|^2_{L_2(\mu_R)}\right]
& \le \gamma_k^2 
\mathbb E_{\sgd} \left[
c(t^{(k)})^2 \|\Delta_k(\ora\X^{(k)})+\xi_k\|^2 \un_{\Omega_{k+1,R}}
\int \|K_k(\x,\ora\X^{(k)})\|^2
\mu_R(d\x) \right] \eqsp.
%&\le c_\infty B_R^2 \mu_R(\mathcal{X}_R) \gamma_k^2 \sigma_k^2 \eqsp.
\end{align*}
On the event $\Omega_{k+1,R}$, we have $\ora\X^{(k)}\in\mathcal X_R$. Hence,
$$
\int \|K_k(\x,\ora\X^{(k)})\|^2 \mu_R(\rmd \x)
\le \frac{\|A\|^4}{m^2} B_R^4 \mu_R(\mathcal X_R) \eqsp,
$$
and since $c(t)^2\le c_\infty c(t)$, and using \eqref{eq:delta_sigma_relation}
\begin{align}
\label{eq:T2_bound}
\E_{\sgd} \left[ T_2\mathbf 1_{\Omega_{k+1,R}} \right]
\le \gamma_k^2 c_\infty
\frac{\|A\|^4}{m^2}
B_R^4
\mu_R(\mathcal X_R)
\sigma_k^2 \eqsp.
\end{align}
For $T_3$, we use a control similar to \eqref{eq:var_eps_bound}, on $\Omega_{k+1,R}$, for every $\x\in\mathcal X_R$,
$$
\|\varepsilon_k(\x,\ora\X^{(k)})\| \le \gamma_k c(t^{(k)})dB_R^2 \|\loss_k\| \eqsp,
$$
where we used $\widetilde S_k(\x)\le m$ as well. Hence,
$$
\|\varepsilon_k\|_{L_2(\mu_R)}^2
\mathbf 1_{\Omega_{k+1,R}}
\le \gamma_k^2 c(t^{(k)})^2 d^2B_R^4 \mu_R(\mathcal X_R) \|\loss_k\|^2 \un_{\Omega_{k+1,R}} \eqsp.
$$
By definition of $\loss_k$ in \eqref{eq:def_loss_k},
$$
\|\loss_k\|^2 = \|\Delta_k(\ora\X^{(k)})+\xi_k\|^2 \eqsp.
$$
Using the inequality $c(t)^2 \le c_\infty c(t)$ and \eqref{eq:delta_sigma_relation}, we obtain
\begin{align}
\label{eq:T3_bound}
\E_{\sgd}\!\left[ T_3\mathbf 1_{\Omega_{k+1,R}} \right]
&\le \gamma_k^2 c_\infty d^2B_R^4 \mu_R(\mathcal X_R) \sigma_k^2 \eqsp.
\end{align}
For the cross terms, Cauchy-Schwarz gives
\begin{align*}
\E_{\sgd} \left[ T_4\mathbf 1_{\Omega_{k+1,R}} \right]
&\le \gamma_k \frac{\|A\|^2}{m} B_R^2 \sqrt{c_\infty\mu_R(\mathcal X_R)} \sigma_k^2 \eqsp.
\end{align*}
Similarly,
\begin{align*}
\E_{\sgd} \left[ T_5\mathbf 1_{\Omega_{k+1,R}} \right] &\le \gamma_k
dB_R^2 \sqrt{c_\infty\mu_R(\mathcal X_R)} \sigma_k^2 \eqsp.
\end{align*}
Finally,
\begin{align*}
\E_{\sgd} \left[ T_6\mathbf 1_{\Omega_{k+1,R}} \right]
&\le \gamma_k^2 c_\infty d \frac{\|A\|^2}{m} B_R^4 \mu_R(\mathcal X_R) \sigma_k^2 \eqsp.
\end{align*}
Combining the previous upper bounds yields,
$$
\sigma_{k+1}^2
\le
\left(
1+
\gamma_k B_R^2\sqrt{c_\infty\mu_R(\mathcal X_R)}
\left(
d+\frac{\|A\|^2}{m}
\right)
\right)^2
\sigma_k^2 \eqsp,
$$
which concludes the proof.

\end{proof}

\begin{lemma}
\label{lem:controle_third_term}
Assume that $\gamma_k \leq \Gamma_R$ for every $k\ge 0$ and that $2 \Lambda_R C_\gamma \le 1$. Then, there exists a universal constant $C>0$ such that, with probability at least $1-\delta$,
\begin{align*}
\E\left[\mathcal T_{3,n} \un_{\Omega_{n+1,R}} \right]
&\le C\left(
1+\sqrt{\frac d m}
+\sqrt{\frac{\log(1/\delta)}{m}}
\right)^2
\sqrt{c_\infty \mu_R(\mathcal{X}_R)} B_R^2 C_{\gamma} e^{\Lambda_R C_{\gamma}} \left(1+\frac{1}{1-2\Lambda_R C_\gamma}\right)^{1/2} \sigma_0 \eqsp,
\end{align*}
where $\Lambda_R = B^2_R\sqrt{c_\infty \mu_R(\mathcal{X}_R)}\left(d+\|A\|^2/m\right)$.
\end{lemma}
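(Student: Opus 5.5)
The plan is to combine Lemma~\ref{lem:controle_v_term} with the recursion of Lemma~\ref{lem:sigma_recursion} and then sum the resulting series explicitly for $\gamma_k = C_\gamma/(k+1)$.

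First, by Lemma~\ref{lem:controle_v_term}, on the event where the Rademacher bound on $\|A\|$ holds, we have
\[
\E\left[\mathcal T_{3,n}\un_{\Omega_{n+1,R}}\right]
\le C\Bigl(1+\sqrt{d/m}+\sqrt{\log(1/\delta)/m}\Bigr)^2\sqrt{c_\infty\mu_R(\mathcal X_R)}\,B_R^2
\Bigl(\sum_{k=0}^n \gamma_k^2 \sigma_k^2\Bigr)^{1/2}.
\]
Here we use \eqref{eq:delta_sigma_relation} to replace each expectation $\E_{\sgd}[c(t^{(k)})\|\Delta_k(\ora\X^{(k)})+\xi_k\|^2\un]$ by $\sigma_k^2$. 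Strictly, the lemma is stated with the indicator $\un_{\{\|\ora X^{(k)}\|\le R\}}$, so we work on $\Omega_{k+1,R}$, or note that the same conditional argument gives the bound by $\sigma_k^2$. Everything is thus reduced to bounding $\sum_k \gamma_k^2\sigma_k^2$ in terms of $\sigma_0^2$.

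Second, we iterate Lemma~\ref{lem:sigma_recursion}. With $\Lambda_R$ as in the statement, $\sigma_{k+1}^2\le(1+\gamma_k\Lambda_R)^2\sigma_k^2$, so, using $1+x\le e^x$,
\[
\sigma_k^2\le \sigma_0^2\prod_{j=0}^{k-1}(1+\gamma_j\Lambda_R)^2
\le \sigma_0^2\exp\Bigl(2\Lambda_R C_\gamma\sum_{j=0}^{k-1}\frac{1}{j+1}\Bigr)
\le \sigma_0^2 e^{2\Lambda_R C_\gamma}(k+1)^{2\Lambda_R C_\gamma}.
\]
Here we use $\sum_{j=1}^{k}1/j\le 1+\log k\le 1+\log(k+1)$.

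Third, we plug this into the sum:
\[
\sum_{k=0}^n\gamma_k^2\sigma_k^2\le C_\gamma^2 e^{2\Lambda_RC_\gamma}\sigma_0^2\sum_{k=0}^n (k+1)^{2\Lambda_RC_\gamma-2}.
\]
Since $2\Lambda_R C_\gamma\le 1$, the exponent $2-2\Lambda_RC_\gamma\ge 1$. Comparing with an integral, we separate the $k=0$ term (equal to $1$) and bound $\sum_{j\ge2} j^{-(2-2\Lambda_RC_\gamma)}$ by $\int_1^\infty x^{-(2-2\Lambda_RC_\gamma)}\,\rmd x=1/(1-2\Lambda_RC_\gamma)$. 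This gives the factor $1+1/(1-2\Lambda_RC_\gamma)$. Taking square roots yields $C_\gamma e^{\Lambda_RC_\gamma}(1+1/(1-2\Lambda_RC_\gamma))^{1/2}\sigma_0$, which is the claimed bound.

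The main obstacle is the borderline case $2\Lambda_RC_\gamma=1$, where the integral comparison diverges and gives $\log(n+1)$ instead. The constant as written should therefore be read with the strict inequality $2\Lambda_RC_\gamma<1$, or with the convention that the bound is infinite at equality. A second, minor subtlety is that $\Lambda_R$ involves $\|A\|^2/m$, which is random. The recursion is deterministic given $\mathcal I$, so this causes no problem as long as we stay on the same probability-$(1-\delta)$ event that controls $\|A\|$ in Lemma~\ref{lem:controle_v_term}; we use only that one event.
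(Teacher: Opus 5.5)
Your proposal is correct and follows the paper's proof essentially step for step: the paper also combines Lemma~\ref{lem:controle_v_term} with Lemma~\ref{lem:sigma_recursion}, bounds $\gamma_k\sigma_k\le C_\gamma e^{\Lambda_R C_\gamma}(k+1)^{\Lambda_R C_\gamma-1}\sigma_0$ using $1+x\le e^x$ and the harmonic-sum bound, and finishes with the same integral comparison. Both of your caveats are apt: the paper's proof silently switches to the strict inequality $C_\gamma<1/(2\Lambda_R)$, and it does not address the mismatch between the single-sample indicator in Lemma~\ref{lem:controle_v_term} and the $\Omega_{k+1,R}$ indicator required by \eqref{eq:delta_sigma_relation}.
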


% \end{proof}

\begin{proof}
For all $k \geq 0$, we have
\begin{align*}
\gamma_k \sigma_k
&\le \frac{C_{\gamma}}{k+1}
\prod_{r=0}^{k-1}\left(1+\frac{\Lambda_R C_{\gamma}}{r+1}\right)\sigma_0 \\
&\le \frac{C_{\gamma}}{k+1}\exp\!\biggl(\sum_{r=0}^{k-1}\frac{\Lambda_R C_{\gamma}}{r+1}\biggr)\sigma_0 \\
&\le \frac{C_{\gamma}}{k+1}\exp\!\bigl(\Lambda_R C_{\gamma}(\log(k+1)+1)\bigr)\sigma_0 \\
&\le C_{\gamma} e^{\Lambda_R C_{\gamma}} (k+1)^{\Lambda_R C_{\gamma}-1} \sigma_0 .
\end{align*}
Using Lemma~\ref{lem:controle_v_term} together with Lemma~\ref{lem:sigma_recursion}, we obtain
\begin{align*}
& \E_{\sgd} \left[ \mathcal{T}_{3,n} \un_{\Omega_{n+1,R}} \right] \\
&\leq C
\left(
1+\sqrt{\frac d m}
+\sqrt{\frac{\log(1/\delta)}{m}}
\right)^2
\sqrt{c_\infty \mu_R(\mathcal{X}_R)} B_R^2 \left(\sum_{k=0}^{n}
\gamma_k^2 \sigma_k^2 \right)^{1/2} \\
&\leq C
\left(
1+\sqrt{\frac d m}
+\sqrt{\frac{\log(1/\delta)}{m}}
\right)^2
\sqrt{c_\infty \mu_R(\mathcal{X}_R)} B_R^2 C_{\gamma} e^{\Lambda_R C_{\gamma}} \sigma_0
\left(\sum_{k=0}^n (k+1)^{2\Lambda_R C_{\gamma}-2}\right)^{1/2} \eqsp.
\end{align*}
Since $C_{\gamma}<\frac{1}{2\Lambda_R}$,
\[
\sum_{k=0}^{\infty}(k+1)^{2\Lambda_R C_{\gamma}-2}
\le 1+\int_1^\infty x^{2\Lambda_R C_{\gamma}-2} \rmd x
= 1+\frac{1}{1-2 \Lambda_R C_{\gamma}} \eqsp,
\]
which concludes the proof.
\end{proof}

\subsection{Fourth term}

\begin{lemma}
\label{lem:controle_epsilon_term}
On $\Omega_{n+1,R}$, there exists a universal constant $C>0$ such that, for all $n\ge0$, with probability at least $1-\delta$,

\begin{align*}
&\E_{\sgd}\left[\mathcal T_{4,n} \un_{\Omega_{n+1,R}} \right] \\
&\le C d B_R^2 \sqrt{c_\infty \mu_R( \mathcal{X}_R)} \sum_{k=0}^{n}\gamma_k\sigma_k
\left( \frac{4B_R\sqrt{d}}{\sqrt{2\pi m}}
\mathbb{E}_{\sgd} \left[\eta_{k+1} \un_{\Omega_{n+1,R}}\right] + 2C \sqrt{\frac{d+\log(1/\delta)}{m}}
\right)^{1/2}  \eqsp.
\end{align*}

where
\begin{align*}
\eta_{k+1} \eqdef 
\sum_{r=0}^{k} \gamma_r c(t^{(r)}) \|\loss_r\| \eqsp.
\end{align*}
and 
$$\loss_k \eqdef \net \left(\ora \X^{(k)};\param^{(k)}\right)-X_0^{(k)} = \net \left(\ora \X^{(k)};\param^{(k)}\right) - \left(f^\star \left(\ora \X^{(k)}\right)+\xi_k\right)$$.
\end{lemma}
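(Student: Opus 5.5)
The plan is to bound each $\|\varepsilon_k\|_{L_2(\mu_R)}$ pointwise in $\x$ using the piecewise-linear structure of the ReLU network. Only the neurons whose activation pattern at $\x$ changes between $\param[k]$ and $\param[k+1]$ contribute to $\varepsilon_k$. The approach is then to control the number of such neurons through the sign-flip count already used for the kernel drift.

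First, write the SGD increment (\ref{sgd_updates}) neuron-wise as
\[
\param[k+1]_i-\param[k]_i=-\gamma_k \ck \tfrac{1}{\sqrt m}\mathbf 1_{\{\param_i^{(k)\top}\ora\X^{(k)}\ge0\}}A_{\cdot i}^\top \loss_k\,\ora\X^{(k)}.
\]
Expanding $f_\theta(\x,\param[k+1])-f_\theta(\x,\param[k])$ neuron by neuron, the first-order term with frozen activations $I_i^k(\x)$ is exactly $-\gamma_k\ck K_k(\x,\ora\X^{(k)})\loss_k$. This is the term subtracted in the definition of $\varepsilon_k$, recalling $\loss_k=-(\Delta_k(\ora\X^{(k)})+\xi_k)$. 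Hence
\[
\varepsilon_k(\x)=\frac1{\sqrt m}\sum_{i=1}^m A_{\cdot i}\bigl(I_i^{k+1}(\x)-I_i^k(\x)\bigr)\param_i^{(k+1)\top}\x .
\]
Whenever the sign flips, $|\param_i^{(k+1)\top}\x|\le|(\param_i^{(k+1)}-\param_i^{(k)})^\top\x|\le \gamma_k\ck\sqrt d\,\|\loss_k\|B_R^2/\sqrt m$. Let $\widetilde S_k(\x)$ denote the number of flipping neurons at $\x$. Using $\|A_{\cdot i}\|=\sqrt d$ and Cauchy--Schwarz on the sum over flipping neurons, this gives
\[
\|\varepsilon_k(\x)\|\le \gamma_k\ck dB_R^2\|\loss_k\|\sqrt{\widetilde S_k(\x)/m}.
\]
This is consistent with the crude bound used in Lemma~\ref{lem:sigma_recursion} when $\widetilde S_k\le m$.

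Second, bound $\widetilde S_k(\x)\le S_k(\x)+S_{k+1}(\x)$, where $S_k$ counts flips relative to initialization. Then invoke Lemma~\ref{lem:bound:S_k}, the same bound behind Lemma~\ref{lem:kernel_Kk_K0_bound}, which gives with probability $1-\delta$
\[
\|S_{k+1}\|_{\infty,R}/m\lesssim \tfrac{B_R\sqrt d}{\sqrt m}\eta_{k+1}+\sqrt{(d+\log(1/\delta))/m}.
\]
Since $\eta_k\le\eta_{k+1}$, the two counts combine into the factor $\frac{4B_R\sqrt d}{\sqrt{2\pi m}}\eta_{k+1}+2C\sqrt{(d+\log(1/\delta))/m}$ appearing in the statement. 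Integrating over $\mu_R$ then yields
\[
\|\varepsilon_k\|_{L_2(\mu_R)}\le \gamma_k\ck dB_R^2\sqrt{\mu_R(\mathcal X_R)}\,\|\loss_k\|\,(\cdots)^{1/2}.
\]

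Third, take $\E_{\sgd}$ on $\Omega_{n+1,R}$. Cauchy--Schwarz separates $\E[\ck^2\|\loss_k\|^2\mathbf 1]^{1/2}$ from the square-root flip factor. Using $c^2\le c_\infty c$ together with (\ref{eq:delta_sigma_relation}), the first factor is at most $\sqrt{c_\infty}\sigma_k$. For the second factor, Jensen's inequality moves the expectation inside the square root, producing $\E_{\sgd}[\eta_{k+1}\mathbf 1_{\Omega_{n+1,R}}]$. Summing over $k$ gives the claim.

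The main obstacle is this last step. The flip count depends on $\eta_{k+1}$, which involves $\loss_k$ itself, so the two factors are correlated. The plan relies on Cauchy--Schwarz followed by Jensen, and I expect the constants to be the only loose point there. A second delicate point is that Lemma~\ref{lem:bound:S_k} must hold uniformly over $\x\in\mathcal X_R$ and all $k\le n$ with a single $\delta$. I would take that uniformity from the VC-type argument underlying that lemma.
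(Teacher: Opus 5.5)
Your proposal is correct and follows essentially the same route as the paper. Both arguments use the exact identity $\varepsilon_k=\frac{1}{\sqrt m}A\bigl(D_{\x,\param[k]}-D_{\x,\param[k+1]}\bigr)\param[k+1]\x$ (yours has the opposite sign, which is harmless since only norms enter), the inclusion $\widetilde S_k\le S_k+S_{k+1}$, Lemma~\ref{lem:bound:S_k} with $\eta_k\le\eta_{k+1}$, and Cauchy--Schwarz in $\E_{\sgd}$ together with $c^2\le c_\infty c$ and \eqref{eq:delta_sigma_relation}. The only difference is cosmetic: the paper keeps the pointwise bound linear in $\widetilde S_k/m$ and uses $\|S_k\|_{\infty,R}\le m$ after Cauchy--Schwarz to reduce the second moment to a first moment, whereas you take the square root pointwise, which is valid either from the linear bound with $\widetilde S_k\le m$ or from Cauchy--Schwarz with $\|A\|\le\|A\|_F=\sqrt{dm}$. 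Cauchy--Schwarz needs no independence, so the correlation you flag is not an obstacle.
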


\begin{proof}
Recall that for all $0 \le k \le n$,
\begin{align*}
& \varepsilon_k(u, \ora \X^{(k)}) \\
& = f_{\theta}\left(\x, \param[k] \right) - f_{\theta} \left(\x, \param[k+1] \right) 
- \gamma_k c(t^{(k)})  K_k(\x, \ora \X^{(k)}) \loss_k  \\
& = f_{\theta}\left(\x, \param[k] \right) - f_{\theta} \left(\x, \param[k+1] \right) 
- \gamma_k \frac{c(t^{(k)})}{m}  (\x^\top \ora \X^{(k)}) A D_{\x, \param[k]} D_{\ora \X^{(k)}, \param[k]} A^\top \loss_k \eqsp.
\end{align*}
and the SGD weight update is given by
$$
\param[k+1]-\param[k]
=
- \gamma_k c(t^{(k)})  \frac{1}{\sqrt m} D_{\ora \X^{(k)}, \param[k]} A^\top \loss_k (\ora \X^{(k)})^\top \eqsp.
$$
Moreover, using \eqref{eq:net_def},
\begin{align*}
\net(\x,\param[k+1]) - \net(\x,\param[k]) & = \frac{1}{\sqrt m} A \left( D_{\x,\param[k+1]} \param[k+1] - D_{\x,\param[k]} \param[k] \right) \x \\
& = \frac{1}{\sqrt m} A \left(  D_{\x,\param[k+1]} - D_{\x,\param[k]} \right) \param[k+1]  \x \\
& \quad + \frac{1}{\sqrt m} A  D_{\x,\param[k]} \left( \param[k+1] -  \param[k] \right) \x \eqsp,
\end{align*}
and therefore,
\begin{align} \label{eq:expression_vareps}
 \varepsilon_k(u, \ora \X^{(k)})  =  \frac{1}{\sqrt m} A \left(  D_{\x,\param[k]} - D_{\x,\param[k+1]} \right) \param[k+1]  \x \eqsp.
\end{align}
Set
$$
u_k(\x)
\eqdef
\left(D_{\x,\param[k]}-D_{\x,\param[k+1]}\right)\param[k+1]\x
\in \mathbb R^m.
$$
Then, for every $i\in \{1, \cdots, m \}$,
$$
(u_k(\x))_i
=
\Bigl(
\mathbf 1_{\{(\param[k]_i)^\top \x\ge 0\}}
-
\mathbf 1_{\{(\param[k+1]_i)^\top \x\ge 0\}}
\Bigr)\,
(\param[k+1]_i)^\top \x \eqsp.
$$
Hence $(u_k(\x))_i=0$ whenever
$$
\mathbf 1_{\{(\param[k]_i)^\top \x\ge 0\}}
=
\mathbf 1_{\{(\param[k+1]_i)^\top \x\ge 0\}} \eqsp,
$$
hence, $u_k(\x)_i = 0$ for every $i \notin \widetilde O_k$, with
$$
\widetilde O_k(\x)
=
\left\{
i\in[m]:
\mathbf 1_{\{(\param[k]_i)^\top \x\ge 0\}}
\neq
\mathbf 1_{\{(\param[k+1]_i)^\top \x\ge 0\}}
\right\} \quad \widetilde S_k (\x) = \left| \widetilde O_k(\x) \right| \eqsp.
$$
By definition of $u_k(u)$, we have
$$
\varepsilon_k(\x,\ora \X^{(k)}) = \frac{1}{\sqrt{m}}
 A u_k(u) = \frac{1}{\sqrt{m}} \sum_{i\in \widetilde O_k(\x)} A_{\cdot i}u_{k,i}(\x) \eqsp.
$$
Taking norm,
\begin{equation} \label{eq:bound_on_eps}
\|\varepsilon_k(\x,\ora \X^{(k)}) \|
\le \frac{1}{\sqrt m }
\sum_{i\in \widetilde O_k(\x)} \|A_{\cdot i}\| |u_{k,i}(\x)|
\end{equation}
Now let $i\in \widetilde O_k(\x)$. Since the signs of $(\param[k]_i)^\top \x$ and $(\param[k+1]_i)^\top \x$ differ, we have
$$
|(\param[k+1]_i)^\top \x|
\le
|(\param[k+1]_i-\param[k]_i)^\top \x|
\le
\|\param[k+1]_i-\param[k]_i\| \|\x\|\eqsp.
$$
Using the row-wise SGD update from Lemma~\ref{lem:sgd_updates},
$$
\param[k+1]_i-\param[k]_i
=
-\gamma_k c(t^{(k)}) \frac{1}{\sqrt m}
\mathbf 1_{\{(\param[k]_i)^\top \ora \X^{(k)}\ge 0\}}
A_{\cdot i}^\top \loss_k \,\ora \X^{(k)} \eqsp,
$$
we obtain
$$
\|\param[k+1]_i-\param[k]_i\|
\le
\gamma_k c(t^{(k)}) \frac{1}{\sqrt m}
|A_{\cdot i}^\top \loss_k|\,\|\ora \X^{(k)}\| \eqsp.
$$
Therefore, for $i\in \widetilde O_k(\x)$
$$
|u_{k,i}(\x)|
\le
\gamma_k c(t^{(k)}) \frac{1}{\sqrt m}
|A_{\cdot i}^\top \loss_k| \|\ora \X^{(k)}\| \|\x\| \eqsp.
$$
Plugging this bound in \eqref{eq:bound_on_eps} yields
$$
\|\varepsilon_k(\x,\ora \X^{(k)}) \|
\le \gamma_k c(t^{(k)})\frac{ \|\ora \X^{(k)}\| \|\x\| }{ m }
\sum_{i\in \widetilde O_k(\x)} \|A_{\cdot i}\| |A_{\cdot i}^\top \loss_k| \eqsp.
$$
Now using Cauchy-Schwarz and the fact that the entries of $A$ are Rademacher 
$$
\|\varepsilon_k(\x,\ora \X^{(k)}) \|
\le \gamma_k c(t^{(k)})\frac{ \|\ora \X^{(k)}\| \|\x\| }{ m } d \widetilde S_k(\x)  \| \loss_k \|
\eqsp.
$$
Moreover, on the truncated event $\Omega_{n+1,R}$, $\|\ora \X^{(k)}\|\le B_R$ and $\|\x\|\le B_R$, so
\begin{equation} \label{eq:var_eps_bound}
\|\varepsilon_k(\x,\ora \X^{(k)})\|
\le
\gamma_k c(t^{(k)}) \frac{d B_R^2}{m}
\widetilde S_k(\x) \|\loss_k\| \eqsp.
\end{equation}
Next, note that
$$
\widetilde O_k(\x)\subset O_k(\x)\cup O_{k+1}(\x) \eqsp,
$$
with $O_k(\x)$ defined as in \eqref{eq:sign_flip}. Indeed, if the sign flips between steps $k$ and $k+1$, then at least one of these two signs differs from the sign at initialization. Hence
$$
\widetilde S_k(\x)\le S_k(\x)+S_{k+1}(\x) \eqsp,
$$
with $S_k(\x)$ defined as \eqref{eq:sign_flip}. 
Therefore, on $\Omega_{n+1,R}$
\begin{align*}
\|\varepsilon_k(\x,\ora \X^{(k)})\|
&\le
\gamma_k c(t^{(k)}) dB_R^2
\left(
\frac{\|S_{k} \|_{\infty,R}+\|S_{k+1} \|_{\infty,R}}{m}
\right)
\|\loss_k\| \eqsp.
\end{align*}
Using $c(t)\le c_\infty$, taking expectation and applying the Cauchy--Schwarz inequality, we obtain
\begin{multline*}
\mathbb{E}_{\sgd}\left[ \|\varepsilon_k \|_{L_2(\mu_R)} \un_{\Omega_{n+1,R}}  \right] \\
\le \gamma_k \sigma_k d B_R^2 \sqrt{c_\infty \mu_R( \mathcal{X}_R)}
\left( \mathbb{E}_{\sgd} \left[ \left( \frac{\|S_k\|_{\infty,R}+\|S_{k+1}\|_{\infty,R}}{m}
\right)^2 \un_{\Omega_{n+1,R}} \right] \right)^{1/2} \eqsp,
\end{multline*}
where we used that $\mathbb{E}_{\sgd} \left[ c(t^{(k)}) \| \loss_k \|^2 \un_{\Omega_{n+1,R}} \right] \leq \sigma_k^2$. Using the fact that $\|S_k\|_{\infty,R}\le m$, we have
\begin{multline*}
\mathbb{E}_{\sgd}\left[ \|\varepsilon_k\|_{L_2(\mu_R)} \un_{\Omega_{n+1,R}} \right] \\
\le \sqrt{2} \gamma_k \sigma_k d B_R^2 \sqrt{c_\infty \mu_R(\mathcal{X}_R)}
\left( \mathbb{E}_{\sgd} \left[ \frac{\|S_k\|_{\infty,R}+\|S_{k+1}\|_{\infty,R}}{m} \un_{\Omega_{n+1,R}} \right] \right)^{1/2} \eqsp.
\end{multline*}
Using Lemma \ref{lem:bound:S_k} and the fact that $\eta_k \leq \eta_{k+1}$, we get that, with probability at least $1-\delta$, there exist a universal constant $C>0$, such that
\begin{align*}
\frac{\|S_k\|_{\infty,R} + \|S_{k+1}\|_{\infty,R}}{m}
\le
\frac{4B_R\sqrt{d}}{\sqrt{2\pi m}} \eta_{k+1} + 2C \sqrt{\frac{d+\log(1/\delta)}{m}} % \eqdef \frac{\mathfrak{S}_k}{m} 
\eqsp.
\end{align*}
Therefore, there exists $C>0$ such that
\begin{align*}
&\mathbb{E}_{\sgd}\left[ \|\varepsilon_k\|_{L_2(\mu_R)} \un_{\Omega_{n+1,R}} \right] \\
&\le C \sqrt{2} \gamma_k \sigma_k d B_R^2 \sqrt{c_\infty \mu_R( \mathcal{X}_R)}
\left( \frac{4B_R\sqrt{d}}{\sqrt{2\pi m}}
\mathbb{E}_{\sgd} \left[\eta_{k+1} \un_{\Omega_{n+1,R}}\right] + 2C \sqrt{\frac{d+\log(1/\delta)}{m}}
\right)^{1/2} \eqsp.
\end{align*}

\end{proof}

\subsection{Final bound}

We prove by induction that for every $0\le n\le N$,
\begin{equation} \label{eq:induction}
\E_{\sgd}\!\left[
\|\Delta_n\|_{L_2(\mu_R)}
\mathbf 1_{\Omega_{n,R}}
\right] \le S_N \eqsp,
\end{equation}
with
$$
S_N
\eqdef
2 \left( E_{1,N}+E_{2,N}+E_{3,N} \right) \eqsp,
$$
where
\begin{align*}
E_{1,N}
& \eqdef
\sup_{0\le j\le N}
\inf_{r\in\mathbb N}
\left\{
\left(\prod_{k=0}^{j-1}(1-\gamma_k\lambda_r)\right)
\|\Delta_0\|_{L_2(\mu_R)}
+
\mathcal R(\Delta_0,r)
\right\} \\
E_{2,N}
& \eqdef
C \mu_R(\mathcal X_R)  B_R^2
d C_\gamma\log(N+1)
\sqrt{\frac{d\log(m)+\log(1/\delta)}{m}}  \sigma_0 \\
E_{3,N}
& \eqdef
C \left( 1+\sqrt{\frac d m}
+\sqrt{\frac{\log(1/\delta)}{m}} \right)^2
\sqrt{c_\infty\mu_R}
B_R^2 C_\gamma
\rme^{\Lambda_R C_\gamma}
\left(
1+\frac{1}{1-2\Lambda_R C_\gamma}
\right)^{1/2}
\sigma_0  \eqsp.
\end{align*}
Once this is proven, Theorem \ref{th:rate_two_layer_SGD} follows from \eqref{eq:equality_to_finsh_thm_2_layers}.

The base case $n=0$ is immediate. Now suppose that \eqref{eq:induction} holds for all $0\le k\le n$. We prove that it also holds for $n+1$.
By \eqref{eq:expected_squared_error_decomp}, we obtain
\begin{align*}
\E_{\sgd} \left[\|\Delta_{n+1}\|_{L_2(\mu_R)} \un_{\Omega_{n+1,R}} \right]
\le
\E_{\sgd} [ \left(\mathcal T_{1,n}+\mathcal T_{2,n}+\mathcal T_{3,n}+\mathcal T_{4,n}\right)\un_{\Omega_{n+1,R}}] \eqsp.
\end{align*}
For the first term, Lemma~\ref{lem:first_term_bound} gives
\begin{align*}
\E_{\sgd} \left[
\mathcal T_{1,n}\mathbf 1_{\Omega_{n+1,R}}
\right] &\le \inf_{r\in\mathbb{N}}\left\{
\left(\prod_{k=0}^{n}(1-\gamma_k\lambda_r)\right) \|\Delta_0\|_{L_2(\mu_R)}
+ \mathcal{R}(\Delta_0,r)
\right\}  \eqsp.
\end{align*}
For the second term, using Lemma~\ref{lem:second_term_bound}, together with the Cauchy--Schwarz inequality and the fact that $\gamma_k = C_\gamma/(k+1)$, we obtain
\begin{align*}
& \E_{\sgd} \left[ \mathcal T_{2,n} \un_{\Omega_{n+1,R}} \right] \\
&\le C \mu_R(\mathcal{X}_R) B_R^2 \|\Delta_0\|_{L_2(\mu_R)}
\Bigg[
\frac{d^{3/2} B_R}{\sqrt{m}}
\E_{\sgd} \left[
\left(\sum_{k=0}^{n} \gamma_k \eta_k \right)
\un_{\Omega_{n+1,R}}
\right] \\
&\qquad\qquad
+ d \sqrt{\frac{d\log(m)+\log(1/\delta)}{m}}
\sum_{k=0}^{n}\gamma_k
\Bigg] \\
&\le C \mu_R(\mathcal{X}_R) B_R^2 \|\Delta_0\|_{L_2(\mu_R)}
\Bigg[
\frac{d^{3/2} B_R}{\sqrt{m}}
C_\gamma \log (\rme(N+1)) \sum_{k=0}^{n} \gamma_k
\E_{\sgd} \left[\|\Delta_k\|_{L_2(\mu_R)}+\tau_c\right] \\
&\qquad\qquad
+ d \sqrt{\frac{d\log(m)+\log(1/\delta)}{m}}
C_\gamma \log (N+1)
\Bigg] \eqsp .
\end{align*}
Indeed, by the definition of $\eta_k$ and using the fact that, for all
$r\le k-1\le n$, one has
$\Omega_{n+1,R}\subset \Omega_{r+1,R}\subset \Omega_{r,R}$, we have
\begin{align*} 
\E_{\sgd}\left[ \left(\sum_{k=0}^{n}\gamma_k\eta_k\right) \mathbf 1_{\Omega_{n+1,R}} \right]
&\le \sum_{k=0}^{n}\gamma_k \sum_{r=0}^{k-1}\gamma_r \E_{\sgd}\left[ c(t^{(r)})\|\loss_r\| \mathbf 1_{\Omega_{n+1,R}} \right] \\
&\le \sum_{k=0}^{n}\gamma_k \sum_{r=0}^{k-1}\gamma_r \E_{\sgd}\left[ c(t^{(r)})\|\loss_r\| \mathbf 1_{\Omega_{r+1,R}} \right] \\
&\le \sqrt{\mu_R(\mathcal X_R)} \sum_{k=0}^{n}\gamma_k \sum_{r=0}^{k-1}\gamma_r ( \E_{\sgd} \left[\|\Delta_{r}\|_{L_2(\mu_R)} \un_{\Omega_{r,R}} \right] +\tau_c) \eqsp.
\end{align*}
Under the induction hypothesis $\E_{\sgd}\!\left[ \|\Delta_r\|_{L_2(\mu_R)}\mathbf 1_{\Omega_{r,R}}
\right] \le S_N$,  therefore
\begin{align*}
\E_{\sgd}\!\left[
\left(\sum_{k=0}^{n}\gamma_k\eta_k\right)
\mathbf 1_{\Omega_{n+1,R}}
\right]
&\le
\sqrt{\mu_R(\mathcal X_R)}
(S_N+\tau_c)
\left(\sum_{k=0}^{n}\gamma_k\right)^2 \\
&\le
\sqrt{\mu_R(\mathcal X_R)}
C_\gamma^2
\log^2(\rme(N+1))
(S_N+\tau_c) \eqsp .
\end{align*}
By the width condition $m \ge c \mu_R^3(\mathcal{X}_R) B_R^6 d^3 C_\gamma^4 \log^4 (N+1)$ for a sufficiently large constant $c>0$, the first contribution can be absorbed into the induction bound. Therefore,
\begin{align*}
\E_{\sgd} \left[ \mathcal T_{2,n} \un_{\Omega_{n+1,R}} \right] \le C \mu_R(\mathcal{X}_R) B_R^2 d
C_\gamma \log (N+1) \sqrt{\frac{d\log(m)+\log(1/\delta)}{m}} \left(\|\Delta_0\|_{L_2(\mu_R)}+\tau_c\right) \eqsp.
\end{align*}
For the third term, Lemma~\ref{lem:controle_third_term} gives
\begin{align*}
& \E_{\sgd} \left[ \mathcal{T}_{3,n} \un_{\Omega_{n+1,R}} \right] \\
&\le C\left( 1+\sqrt{\frac d m} +\sqrt{\frac{\log(1/\delta)}{m}} \right)^2 \sqrt{c_\infty \mu_R(\mathcal{X}_R)} B_R^2 C_{\gamma} e^{\Lambda_R C_{\gamma}} \left(1+\frac{1}{1-2\Lambda_R C_\gamma}\right)^{1/2} \sigma_0  \eqsp .
\end{align*}
For the fourth term, using the induction bound, for all $0\le k\le n$, we have
\[
\begin{aligned}
\E_{\sgd} [\eta_{k+1} \un_{\Omega_{n+1,R}}]
&\leq
\sqrt{\mu_R(\mathcal X_R)} C_\gamma \log(\rme(N+1)) (S_N+\tau_c) \eqsp.
\end{aligned}
\]
Using Lemma \ref{lem:controle_epsilon_term}, we obtain
\begin{align*}
&\E_{\sgd}\left[\mathcal T_{4,n}\un_{\Omega_{n+1,R}}\right] \\
&\quad \le
C dB_R^2\sqrt{c_\infty\mu_R(\mathcal X_R)}
\left(
\frac{B_R\sqrt{d\mu_R(\mathcal X_R)}}{\sqrt m}
C_\gamma \log(\rme(N+1))S_N
+
\sqrt{\frac{d+\log(1/\delta)}{m}}
\right)^{1/2}
\sum_{k=0}^{n}\gamma_k\sigma_k \eqsp.
\end{align*}
Moreover,
\[
\sum_{k=0}^{n}\gamma_k\sigma_k
\le
C_\gamma e^{\Lambda_R C_\gamma}\sigma_0
\frac{(N+1)^{\Lambda_R C_\gamma}}{\Lambda_R C_\gamma}
=
\frac{e^{\Lambda_R C_\gamma}}{\Lambda_R}
\sigma_0
(N+1)^{\Lambda_R C_\gamma} \eqsp.
\]
Using the definition of $\Lambda_R$, we have
\[
\frac{dB_R^2\sqrt{c_\infty\mu_R(\mathcal X_R)}}{\Lambda_R}
=
\frac{d}{d+\|A\|^2/m}
\le 1 \eqsp.
\]
Therefore,
\begin{align*}
&\E_{\sgd}\left[\mathcal T_{4,n}\un_{\Omega_{n+1,R}}\right] \\
&\quad \le
C e^{\Lambda_R C_\gamma}
\sigma_0
(N+1)^{\Lambda_R C_\gamma}
\left(
\frac{B_R\sqrt{d\mu_R(\mathcal X_R)}}{\sqrt m}
C_\gamma \log(\rme(N+1)) (S_N+\tau_c)
+
\sqrt{\frac{d+\log(1/\delta)}{m}}
\right)^{1/2} \eqsp.
\end{align*}
Using $\sqrt{x+y}\le \sqrt{x}+\sqrt{y}$, we get
\[
\E_{\sgd}\left[\mathcal T_{4,n}\un_{\Omega_{n+1,R}}\right]
\le I_1+I_2 \eqsp,
\]
where
\begin{align*}
I_1 &= C e^{\Lambda_R C_\gamma}
\sigma_0
(N+1)^{\Lambda_R C_\gamma}
\left(
\frac{B_R\sqrt{d\mu_R(\mathcal X_R)}}{\sqrt m}
C_\gamma \log(\rme(N+1)) (S_N+\tau_c)
\right)^{1/2} \eqsp, \\
I_2 &= C e^{\Lambda_R C_\gamma}
\sigma_0
(N+1)^{\Lambda_R C_\gamma}
\left(\frac{d+\log(1/\delta)}{m}\right)^{1/4} \eqsp.
\end{align*}
By definition of $S_N$, and by increasing the universal constant if necessary,
\[
S_N
\ge
c
\sqrt{c_\infty\mu_R(\mathcal X_R)}
B_R^2 C_\gamma
e^{\Lambda_R C_\gamma}
\sigma_0 \eqsp,
\]
for some universal constant $c>0$. Moreover, the imposed width condition
\[
m
\ge
c
\frac{
e^{2\Lambda_R C_\gamma}
\sigma_0^2
(N+1)^{4\Lambda_R C_\gamma}
d\log^2(\rme(N+1))
}{
c_\infty B_R^2
}
\]
implies that
\[
C e^{2\Lambda_R C_\gamma}
(N+1)^{2\Lambda_R C_\gamma}
\frac{B_R\sqrt{d\mu_R(\mathcal X_R)}}{\sqrt m}
C_\gamma \log(\rme(N+1))
\le
c
\sqrt{c_\infty\mu_R(\mathcal X_R)}
B_R^2 C_\gamma
e^{\Lambda_R C_\gamma}
\sigma_0
\le S_N \eqsp.
\]
Therefore,
\begin{align*}
I_1^2
&=
C e^{2\Lambda_R C_\gamma}
(N+1)^{2\Lambda_R C_\gamma}
\frac{B_R\sqrt{d\mu_R(\mathcal X_R)}}{\sqrt m}
C_\gamma \log(\rme(N+1))
S_N \le S_N^2/4 \eqsp.
\end{align*}
Consequently,
\[
I_1\le S_N/2 \eqsp.
\]
Similarly, other width condition gives
\[
m
\ge
c
\frac{
(N+1)^{4\Lambda_R C_\gamma}
(d+\log(1/\delta))
}{
c_\infty^2\mu_R(\mathcal X_R)^2B_R^8C_\gamma^4
} \eqsp.
\]
Equivalently,
\[
I_2 = C e^{\Lambda_R C_\gamma}
\sigma_0
(N+1)^{\Lambda_R C_\gamma}
\left(\frac{d+\log(1/\delta)}{m}\right)^{1/4}
\le
c \sqrt{c_\infty\mu_R(\mathcal X_R)}
B_R^2 C_\gamma
e^{\Lambda_R C_\gamma}
\sigma_0
\le S_N/2 \eqsp.
\]
Combining the bounds on $I_1$ and $I_2$, we obtain
\[
\E_{\sgd}\left[\mathcal T_{4,n}\un_{\Omega_{n+1,R}}\right]
\le S_N \eqsp,
\]
This completes the fourth term.
Thus, under the width condition
\begin{multline}
\label{eq:condition_width}
m \ge c \max\Bigg\{
\mu_R^2(\mathcal{X}_R) B_R^6 d^3 C_\gamma^4 \log^4 (N+1),\;
\frac{(N+1)^{4\Lambda_R C_\gamma}(d+\log(1/\delta))} {c_\infty^2\mu_R(\mathcal X_R)^2B_R^8C_\gamma^4} , \\
\frac{e^{2\Lambda_R C_\gamma} (N+1)^{4\Lambda_R C_\gamma}
d\log^2(\rme(N+1))}{c_\infty B_R^2}
\Bigg\} \eqsp,
\end{multline}
the induction bound holds at iteration $n+1$.
For readability, this condition can be summarized by introducing \begin{align} \label{eq:def_C_R} 
\mathcal C_R \eqdef \max\left\{ \mu_R^3(\mathcal X_R)B_R^6C_\gamma^4, \frac{1}{c_\infty^2\mu_R^2(\mathcal X_R)B_R^8C_\gamma^4}, \frac{1}{c_\infty\mu_R(\mathcal X_R)B_R^2} \right\} \eqsp. 
\end{align} Indeed, a sufficient condition for \eqref{eq:condition_width} is 
\[ 
m \gtrsim \mathcal C_R e^{2\Lambda_R C_\gamma} (N+1)^{4\Lambda_R C_\gamma} d^3 \bigl(d+\log(1/\delta)\bigr) \log^4(\rme(N+1)) \eqsp. 
\]
This concludes the proof of Theorem~\ref{th:rate_two_layer_SGD}.

\section{Additional Derivations for the NTK Analysis}
\label{app:ntk_derivations}

Recall that,
$$
\loss_k
\eqdef
\net \left(\ora \X^{(k)};\param^{(k)}\right)-X_0^{(k)}
=
\net \left(\ora \X^{(k)};\param^{(k)}\right)
-
\left(f^\star \left(\ora \X^{(k)}\right)+\xi_k\right) \eqsp.
$$

\subsection{Details on the Neural Tangent Kernel}
\label{app:ntk_kernel_details}

For every $\x\in\R^{d+1}$, we define the empirical matrix-valued NTK, 
$
K_k:\R^{d+1}\times\R^{d+1}\to\R^{d\times d}
$
by
\begin{align}
K_k(\x,\x')
&\eqdef
\sum_{i=1}^m
\nabla_{\param_i}\net(\x;\param[k])\,
\nabla_{\param_i}\net(\x';\param[k])^\top
\notag\\
&=
\frac{\x^\top\x'}{m}
\sum_{i=1}^m
\mathbf 1_{\{(\param[k]_i)^\top \x \ge 0\}}
\mathbf 1_{\{(\param[k]_i)^\top \x' \ge 0\}}
\,A_{\cdot i}A_{\cdot i}^\top
\notag\\
&=
\frac{\x^\top\x'}{m}\,
A\,D_{\x,\param[k]}D_{\x',\param[k]}A^\top .
\label{eq:empirical_matrix_ntk}
\end{align}

\begin{remark}
The NTK matrix appears in the first order linearization in the parameter space given by 

\begin{align*}
\net \left(\x; \param[k+1] \right)
&= \net \left(\x; \param[k] + \left( \param[k+1] - \param[k] \right)\right) \\
&\approx \net\left(\x; \param[k] \right)
+ \sum_{i=1}^m \nabla_{\param_i} \net(\x, \param[k])
\left( \param[k+1]_i - \param[k]_i \right) \\
&= \net \left(\x; \param[k] \right)
+ \frac{1}{\sqrt{m}} \sum_{i=1}^m  A_{\cdot,i}
\mathbf{1}_{ \{ \param_i^{(k)\top} \x \geq 0 \}}
\x^\top 
\left( \param[k+1]_i - \param[k]_i \right) \\
&= \net \left(\x; \param[k] \right)
-\gamma_k\,c(t^{(k)}) \frac{1}{m} \sum_{i=1}^m  A_{\cdot,i}
\mathbf{1}_{ \{ \param_i^{(k)\top} \x \geq 0 \}} \\
&\qquad \times \x^\top 
\left(
\mathbf{1}_{\{ \param_i^{(k)\top} \ora \X^{(k)}\ge 0\}}
A_{\cdot,i}^\top \loss_k 
\ora \X^{(k)}
\right) \\
&= \net \left(\x; \param[k] \right)
-\gamma_k c(t^{(k)}) K_k(\x,\ora \X^{(k)}) \loss_k \eqsp.
\end{align*}
where we used \eqref{sgd_updates} in the fourth line and \eqref{eq:empirical_matrix_ntk} in the fifth line. 
\end{remark}
It follows that,
\begin{equation*}
\net(\x;\param[k+1])
=
\net(\x;\param[k])
-
\gamma_k c(t^{(k)}) K_k(\x,\ora \X^{(k)}) \loss_k
-
\varepsilon_k(u, \ora \X^{(k)}) \eqsp,
\label{eq:one_step_linearization}
\end{equation*}
where
\begin{equation*} 
\varepsilon_k(u, \ora \X^{(k)}) \eqdef \net(\x;\param[k]) -\net(\x;\param[k+1]) - \gamma_k c(t^{(k)}) K_k(\x,\ora \X^{(k)}) \loss_k \label{eq:def_linearization_error} 
\end{equation*}
is a nonlinear remainder. It measures the error made when replacing the exact one-step update of the network output by its first-order NTK approximation.

At initialization, the empirical matrix-valued NTK admits a deterministic infinite-width limit. More precisely, assume that
$$
\{\param_i^{(0)}\}_{i=1}^m \overset{\mathrm{i.i.d.}}{\sim} \mathcal N(0,\Id_{d+1}),
$$
independently of the output matrix $A$, whose entries are i.i.d. Rademacher. Then, for every fixed $\x,\x'\in\R^{d+1}$,
$$
K_0(\x,\x')
=
\frac{\x^\top\x'}{m}
\sum_{i=1}^m
\mathbf 1_{\{(\param_i^{(0)})^\top \x \ge 0\}}
\mathbf 1_{\{(\param_i^{(0)})^\top \x' \ge 0\}}
\,A_{\cdot i}A_{\cdot i}^\top
\xrightarrow[m\to\infty]{a.s.}
K_\infty(\x,\x') \eqsp.
$$
with 
$$
K_\infty(\x,\x')
=
(\x^\top \x')\,
\E_{w \sim \mathcal{N}(0,\Id_{d+1})}
\Big[
\mathbf{1}\{w^\top \x \ge 0\}\,
\mathbf{1}\{w^\top \x' \ge 0\}
\Big]\,
\Id_d
$$
since $\E[A_{\cdot i}A_{\cdot i}^\top]=\Id_d$.

\subsection{SGD Update Derivation} 
\label{proof:sgd_udpates}

\begin{lemma}\label{lem:sgd_updates}
The SGD update \eqref{eq:SGD_update} writes as
\begin{equation} \label{sgd_updates_bis}
\param[k+1] = \param[k] - \gamma_k\,c(t^{(k)})\,
\frac{1}{\sqrt m}\,
D_k A^\top \left(\net \left(\ora \X^{(k)};\param^{(k)}\right)-X_0^{(k)}\right) \,(\ora \X^{(k)})^\top \eqsp,
\end{equation}
where $D_k \eqdef D_{\ora \X^{(k)},\param[k]}
=
\operatorname{diag} \left(
\mathbf 1_{\{ \param_1^{(k)\top}  \ora \X^{(k)}\ge 0\}},
\dots,
\mathbf 1_{\{ \param_m^{(k)\top}  \ora \X^{(k)}\ge 0\}}
\right)$. 
Similarly, for each $i\in\{1,\dots,m\}$, the $i$-th row (viewed as a column vector in $\R^{d+1}$) satisfies
\[
\param[k+1]_i = \param[k]_i
-\gamma_k\,c(t^{(k)})\,
\frac{1}{\sqrt m}\,
\mathbf 1_{\{ \param_i^{(k)\top}  \ora \X^{(k)}\ge 0\}}
 A_{\cdot,i}^\top \left(\net \left(\ora \X^{(k)};\param^{(k)}\right)-X_0^{(k)}\right)
\ora \X^{(k)} \eqsp.
\]
\end{lemma}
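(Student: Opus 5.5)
This is a direct gradient computation, carried out one neuron at a time. Write the per-sample loss
\[
\ell(W)=\tfrac12 c(t^{(k)})\,\|\net(\ora \X^{(k)};W)-X_0^{(k)}\|^2 .
\]
By the chain rule, $\nabla_{W_i}\ell = c(t^{(k)})\,(\nabla_{W_i}\net(\ora \X^{(k)};W))^\top \loss_k$, where $\loss_k=\net(\ora \X^{(k)};\param[k])-X_0^{(k)}$ and $\nabla_{W_i}\net\in\R^{d\times(d+1)}$ is the Jacobian of the output with respect to the $i$-th row. Here $c(t^{(k)})$ and $X_0^{(k)}$ do not depend on $W$, so they pass through the differentiation unchanged.

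The core step is the Jacobian of the network. By \eqref{eq:net_def}, only the $i$-th summand $\frac{1}{\sqrt m}A_{\cdot i}\,\phi(W_i^\top \x)$, with $\phi(z)=\mathbf 1_{\{z\ge 0\}}z$ the ReLU, depends on $W_i$. I will use the convention that the (sub)derivative of ReLU at $z$ is $\mathbf 1_{\{z\ge0\}}$. This is exactly the convention under which autodifferentiation and the definition of $D_{\x,W}$ agree, and it is also what the empirical NTK \eqref{eq:empirical_ntk}--\eqref{eq:explicit_empirical_ntk} implicitly uses. With it,
\[
\nabla_{W_i}\net(\x;W)=\tfrac{1}{\sqrt m}\,\mathbf 1_{\{W_i^\top \x\ge0\}}\,A_{\cdot i}\,\x^\top .
\]
Transposing and applying to $\loss_k$ gives
\[
\nabla_{W_i}\ell=\tfrac{c(t^{(k)})}{\sqrt m}\,\mathbf 1_{\{\param_i^{(k)\top}\ora \X^{(k)}\ge0\}}\,(A_{\cdot i}^\top \loss_k)\,\ora \X^{(k)} ,
\]
and substituting into \eqref{eq:SGD_update} yields the row-wise formula.

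For the matrix form, I stack the rows $(\nabla_{W_i}\ell)^\top$ for $i=1,\dots,m$. The scalar $\mathbf 1_{\{\cdot\}}A_{\cdot i}^\top\loss_k$ is the $i$-th entry of the vector $D_kA^\top\loss_k\in\R^m$, and every row carries the common right factor $(\ora \X^{(k)})^\top$. The full gradient is therefore the outer product $\frac{c(t^{(k)})}{\sqrt m}D_kA^\top\loss_k(\ora \X^{(k)})^\top$, which gives \eqref{sgd_updates_bis}.

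The only delicate point is the non-differentiability of ReLU on the hyperplane $\{W_i^\top\ora \X^{(k)}=0\}$. I will handle it by stating explicitly that the gradient in \eqref{eq:SGD_update} is understood with the subgradient choice $\mathbf 1_{\{z\ge0\}}$. I will also remark that, since $\param[0]$ is Gaussian and $\ora \X^{(k)}$ has a density, the tie event has probability zero at initialization, so the choice is immaterial in practice.
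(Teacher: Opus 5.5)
Your proposal is correct and follows the paper's proof essentially step for step: the per-neuron Jacobian $\nabla_{W_i}\net(\x;W)=\frac{1}{\sqrt m}A_{\cdot i}\mathbf 1_{\{W_i^\top \x\ge 0\}}\x^\top$, the chain rule against $\loss_k$, and stacking the rows into $\frac{1}{\sqrt m}D_kA^\top\loss_k(\ora\X^{(k)})^\top$. You are slightly more careful than the paper in two respects: the paper derives the ReLU derivative only off the hyperplane $\{W_i^\top\x=0\}$ and omits the scalar factor $c(t^{(k)})$ from its displayed computation, whereas you state the subgradient convention at the kink and carry $c(t^{(k)})$ throughout.
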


For simplicity, we introduce $\loss_k \in \R^d$ defined by
\begin{align*}
\loss_k \eqdef \net \left(\ora \X^{(k)};\param^{(k)}\right)-X_0^{(k)}
= \net \left(\ora \X^{(k)};\param^{(k)}\right) - \left(f^\star \left(\ora \X^{(k)}\right)+\xi_k\right) \eqsp.
\end{align*}
where
\begin{align*}
\ora \X^{(k)} \eqdef \bigl(\ora X_{t^{(k)}}^{(k)},t^{(k)}\bigr),
\qquad \xi_k
\eqdef X_0^{(k)}-f^\star\!\bigl(\ora \X^{(k)}\bigr) \eqsp.
\end{align*}

\begin{proof}
For every $x\in\R^{d+1}$ and every $i,j\in\{1,\dots,m\}$ such that $\param_i^\top x\neq 0$, we have
$$
\nabla_{\param_j}\sigma(\param_i^\top x)
=
\mathbf 1_{\{i=j\}}\,
\mathbf 1_{\{\param_j^\top x\ge 0\}} x \eqsp.
$$
Hence
$$
\nabla_{\param_i}\net(x;\param)
=
\frac{1}{\sqrt m}
A_{\cdot,i}
\mathbf 1_{\{\param_i^\top x\ge 0\}}
x^\top
\in \R^{d\times(d+1)} \eqsp.
$$
Therefore,
\begin{align*}
\nabla_{\param_i}
\left(
\frac12\|\net(\ora \X^{(k)};\param[k])-X_0^{(k)}\|^2
\right)
&=
\left(\nabla_{\param_i}\net(\ora \X^{(k)};\param[k])\right)^\top
\bigl(\net(\ora \X^{(k)};\param[k])-X_0^{(k)}\bigr) \\
&=
\frac{1}{\sqrt m} 
\mathbf 1_{\{ \param_i^{(k)\top} \ora \X^{(k)}\ge 0\}} 
A_{\cdot,i}^\top \loss_k 
\ora \X^{(k)} \eqsp.
\end{align*}
Hence,
$$
\nabla_{\param}
\left(
\frac12\|\net(\ora \X^{(k)};\param[k])-X_0^{(k)}\|^2
\right)
=
\frac{1}{\sqrt m}\,
D_k A^\top \loss_k\,(\ora \X^{(k)})^\top \eqsp.
$$
\end{proof}

\subsection{Technical Lemmas}

\begin{lemma}[Gaussian tail bound]
\label{lem:tail_bound_X}
Let
$$
\mu_Z \eqdef \E\|Z\|
=
\sqrt{2}\,\frac{\Gamma((d+1)/2)}{\Gamma(d/2)},
\qquad Z\sim\mathcal N(0,\Id_d) \eqsp,
$$

and suppose that Assumption \ref{ass:subgaussian_data} holds. Then  for any $R>\sigma_t\mu_Z$,
$$
\mathbb{P} \left( \|\ora X_t\|>R \right) \le
(1+C_0)
\exp \left(
-\frac{(R-\sigma_t\mu_Z)^2}
{8\max(\sigma_t^2,m_t^2\nu_0^2)}
\right) \eqsp.
$$

\end{lemma}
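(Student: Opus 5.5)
The plan is to use the representation $\ora X_t \stackrel{\mathcal L}{=} m_t X_0 + \sigma_t Z$ with $Z\perp X_0$ and the triangle inequality $\|\ora X_t\| \le m_t\|X_0\| + \sigma_t\|Z\|$. For $R>\sigma_t\mu_Z$ we split the excess $R-\sigma_t\mu_Z$ into two halves:
\[
\mathbb P(\|\ora X_t\|>R) \le \mathbb P\Bigl(\sigma_t\|Z\| - \sigma_t\mu_Z > \tfrac{R-\sigma_t\mu_Z}{2}\Bigr) + \mathbb P\Bigl(m_t\|X_0\| > \tfrac{R-\sigma_t\mu_Z}{2}\Bigr).
\]
This uses that $m_t\|X_0\|+\sigma_t\|Z\|>R$ forces one of the two summands to exceed its share, the second share being $\sigma_t\mu_Z + (R-\sigma_t\mu_Z)/2$.

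For the Gaussian term, we invoke concentration of Lipschitz functions of a standard Gaussian vector. The map $z\mapsto\|z\|$ is $1$-Lipschitz, so $\mathbb P(\|Z\|-\E\|Z\|>s)\le \exp(-s^2/2)$ (Tsirelson–Ibragimov–Sudakov). Taking $s=(R-\sigma_t\mu_Z)/(2\sigma_t)$ gives the bound $\exp\bigl(-(R-\sigma_t\mu_Z)^2/(8\sigma_t^2)\bigr)$. The formula $\mu_Z=\sqrt2\,\Gamma((d+1)/2)/\Gamma(d/2)$ is just the mean of a chi distribution with $d$ degrees of freedom. It is recorded only for definiteness and is not needed in the argument.

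For the data term, Assumption~\ref{ass:subgaussian_data} with $r=(R-\sigma_t\mu_Z)/(2m_t)$ gives $C_0\exp\bigl(-(R-\sigma_t\mu_Z)^2/(8m_t^2\nu_0^2)\bigr)$. If $m_t=0$ this term vanishes, so that case is trivial.

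Finally, both exponents are at most $-(R-\sigma_t\mu_Z)^2/\bigl(8\max(\sigma_t^2,m_t^2\nu_0^2)\bigr)$. Summing the two bounds yields the factor $(1+C_0)$ and the claim. The degenerate case $\sigma_t=0$ (at $t=0$) is handled by dropping the Gaussian term, since the condition $R>0$ then suffices.

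The only nontrivial ingredient is the dimension-free Gaussian concentration of $\|Z\|$ around its mean with variance proxy $1$. This is what produces the clean constant $8$, so it is the step to cite carefully, e.g.\ from \citet{vershynin2026hdp} or \citet{giraud2021highdim}. Note that using the cruder sub-Gaussian constant from Vershynin would change the $8$. To obtain the stated constant, I would use the sharp form $\exp(-s^2/2)$ from Gaussian isoperimetry. Everything else is a union bound.
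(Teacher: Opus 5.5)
Your proof is correct and is essentially the paper's argument. The paper conditions on $X_0=x$ and applies Gaussian Lipschitz concentration (Giraud, Theorem B.7) to the $\sigma_t$-Lipschitz map $z\mapsto\|m_t x+\sigma_t z\|$, whose mean it bounds by $m_t\|x\|+\sigma_t\mu_Z$; it then splits on $\{\|X_0\|\le D\}$ with $D=(R-\sigma_t\mu_Z)/(2m_t)$, which is your half-split of the excess $R-\sigma_t\mu_Z$ with the same constants, while your pathwise triangle inequality plus union bound is a slightly cleaner packaging that also covers $\sigma_t=0$, a case the paper excludes by dividing by $\sigma_t^2$.
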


\begin{proof}
Conditionally on $X_0=x$, 
$$
\ora X_t = m_t x + \sigma_t Z,
\qquad Z\sim\mathcal N(0,\Id_d) \eqsp.
$$
Define the map
$$
F_x(z)\eqdef \|m_t x+\sigma_t z\| 
$$
and note that, by the reverse triangle inequality, $F_x$ is $\sigma_t$-Lipschitz. By the Gaussian concentration inequality for Lipschitz functions
(see Theorem B.7, Appendix B.2.2 in \cite{giraud2021highdim}),
for every $r>0$,
$$
\mathbb P \left(F_x(Z)\ge \E[F_x(Z)] + r\right)
\le
\exp \left(-\frac{r^2}{2\sigma_t^2}\right),
$$
Moreover,
$$
\E[F_x(Z)]
\le
\|m_t x\|+\sigma_t \E\|Z\|
\le
m_t \| x \|+\sigma_t\mu_Z \eqsp.
$$
Therefore, for every $R>0$,
$$
\mathbb P(\|\ora X_t\|>R\mid X_0=x)
\le
\exp \left(
-\frac{(R-m_t \|x\| -\sigma_t\mu_Z)_+^2}{2\sigma_t^2}
\right) \eqsp,
$$
Integrating with respect to the distribution of $X_0$ yields 
$$
\mathbb P(\|\ora X_t\|>R)
\le
\E \left[
\exp \left(
-\frac{(R-m_t\|X_0\|-\sigma_t\mu_Z)_+^2}{2\sigma_t^2}
\right)
\right] \eqsp.
$$
Hence, on the event $\{ \| X_0 \| \leq D  \}$, one has for every $t\in(0,T]$ and every $R>m_tD+\sigma_t\mu_Z$,
\begin{equation} \label{eq:bound_tail_t}
\mathbb P(\|\ora X_t\|>R)
\le
\exp \left(
-\frac{(R-m_t D-\sigma_t\mu_Z)^2}{2\sigma_t^2}
\right) \eqsp.
\end{equation}
Then, note that for any $D > 0$
\begin{align*}
\mathbb P(\|\ora X_t\|>R)
&\le
\mathbb P(\|\ora X_t\|>R, \|X_0\|\le D)
+
\mathbb P(\|X_0\|>D) \\
&\le
\exp \left(
-\frac{(R-m_t D-\sigma_t\mu_Z)_+^2}{2\sigma_t^2}
\right)
+
C_0\exp \left(-\frac{D^2}{2\nu_0^2}\right) \eqsp.
\end{align*}
In particular, choosing
$$
D=\frac{R- \sigma_t \mu_Z}{2 m_t},
$$
which is positive when $R>\sigma_t\mu_Z$, yields
\begin{align*}
\mathbb P(\|\ora X_t\|>R)
& \le
\exp\!\left(
-\frac{(R-\sigma_t\mu_Z)^2}{8\sigma_t^2}
\right)
+
C_0\exp \left(
-\frac{ (R-\sigma_t\mu_Z)^2}{ 8 m_t^2\nu_0^2}
\right) \\
&
\le
(1+C_0)
\exp\!\left(
-\frac{(R-\sigma_t\mu_Z)^2}
{8\max(\sigma_t^2,m_t^2\nu_0^2)}
\right) \eqsp.
\end{align*}

\end{proof}

\begin{lemma}[VC dimension bound]
\label{lem:VC_dim_bound}
There exists $C>0$, such that, with probability at least $1-\delta$,
$$
\sup_{u\in\mathbb S^d, t \geq 0}
\left|
\frac1m\sum_{i=1}^m \mathbf 1_{\{|\param_i^{(0)\top}u|\le t\}}
-
\mathbb P\bigl(|\param_1^{(0)\top}u|\le t \bigr)
\right|
\le
C \sqrt{\frac{d+\log(1/\delta)}{m}}.
$$
\end{lemma}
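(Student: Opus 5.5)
The plan is to reduce the uniform statement to a finite VC-type uniform deviation bound for the class of "slabs". For $u\in\mathbb S^d$ and $t\ge 0$, let
\[
h_{u,t}(w)=\mathbf 1_{\{|w^\top u|\le t\}}, \qquad w\in\R^{d+1}.
\]
The relevant function class is $\mathcal H=\{h_{u,t}:u\in\mathbb S^d,\ t\ge0\}$. Each $h_{u,t}$ is the indicator of the intersection of two half-spaces $\{w^\top u\le t\}\cap\{-w^\top u\le t\}$ in $\R^{d+1}$. Since the weights $\param_i^{(0)}$ are i.i.d.\ $\mathcal N(0,\Id_{d+1})$, the quantity to control is
\[
\sup_{h\in\mathcal H}\bigl|P_m h-Ph\bigr|,
\]
where $P_m$ is the empirical measure of the $m$ initial weights. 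This is exactly the setting of the classical uniform law of large numbers for VC classes.

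The first step is to bound the VC dimension of $\mathcal H$. Affine half-spaces in $\R^{d+1}$ have VC dimension $d+2$. Intersections of two sets taken from classes of VC dimension $v$ have VC dimension $O(v)$, via the Sauer--Shelah growth-function argument: the growth function of the intersection class is at most the square of the growth function of the half-space class. Hence $\mathrm{VC}(\mathcal H)\le C'd$ for a universal constant $C'$. No measurability issues arise, since $\mathcal H$ is indexed by the separable parameter set $\mathbb S^d\times[0,\infty)$ and the relevant suprema can be taken over a countable dense subset, by right-continuity in $t$ and a.s.\ continuity in $u$ under the Gaussian law.

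The second step invokes the standard VC uniform deviation inequality. Symmetrization, Dudley's entropy integral, and Haussler's packing bound $N(\epsilon,\mathcal H,L_2(P_m))\le (C/\epsilon)^{C\,\mathrm{VC}}$ give
\[
\E\sup_{h}|P_mh-Ph|\le C\sqrt{\mathrm{VC}(\mathcal H)/m}.
\]
The bounded-differences (McDiarmid) inequality then upgrades this to a high-probability statement. Changing one $\param_i^{(0)}$ alters the supremum by at most $1/m$, so with probability at least $1-\delta$ the deviation exceeds the expectation by at most $\sqrt{\log(1/\delta)/(2m)}$. Combining the two bounds with $\sqrt a+\sqrt b\le\sqrt{2(a+b)}$ yields
\[
C\sqrt{\frac{d+\log(1/\delta)}{m}},
\]
which is the claim. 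This route avoids the extra $\log m$ factor that a cruder Sauer--Shelah plus union-bound argument would produce.

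The main obstacle is the chaining step that gives the $\sqrt{d/m}$ rate without a logarithmic factor, which relies on Haussler's (or Dudley's) entropy bound for VC classes. I would simply cite it, for instance from Vershynin's book (already used in the paper) or Giraud. The rest of the argument is routine.
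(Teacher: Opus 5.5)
Your proposal is correct and follows essentially the same route as the paper: bound the VC dimension of the slab class $\{w\mapsto \mathbf 1_{\{|w^\top u|\le t\}}\}$ by $O(d)$, then control the expected supremum by $C\sqrt{\mathrm{VC}/m}$ via the VC uniform law of large numbers (the paper cites Vershynin's Theorem~8.3.15, which packages the symmetrization and chaining step you describe). Both then pass to high probability with the bounded-differences inequality, using increments of $1/m$. The only cosmetic differences are two. You derive the $O(d)$ VC bound from the growth-function argument for intersections of two half-spaces, whereas the paper cites it directly. You also add a measurability remark that the paper omits.
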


\begin{proof}
Define,
$$
\mathcal F
:=
\left\{
w\mapsto \mathbf 1_{\{|w^\top u|\le t\}}:\ u \in \mathbb{S}^{d}, t \ge 0
\right\} \eqsp,
$$
where $\mathbb{S}^{d}$ denotes the unit sphere of $\R^{d+1}$. Following Example 8.3.12 from \citet{vershynin2026hdp}
$$
\mathrm{VC}(\mathcal{F}) \leq 40(d+1) \eqsp.
$$
Using the VC law of large numbers, \citep[Theorem~8.3.15]{vershynin2026hdp}, there exists a universal constant $C> 0$, such that
$$
\E \left[ \sup_{u\in\mathbb S^d, t \geq 0}
\left|
\frac1m\sum_{i=1}^m \mathbf 1_{\{|\param_i^{(0)\top}u|\le t \}}
-
\mathbb P\bigl(|\param_1^{(0)\top}u|\le t \bigr) \right| \right]\leq C \sqrt{ \frac{\mathrm{VC}(\mathcal{F})}{m}} \eqsp.
$$
Then using the bounded difference inequality \citep[Theorem~5.7.1]{vershynin2026hdp}, there exists $C>0$ such that with probability at least $1-\delta$,
$$
\sup_{u\in\mathbb S^d, t \geq 0}
\left|
\frac1m\sum_{i=1}^m \mathbf 1_{\{|\param_i^{(0)\top}u|\le t \}}
-
\mathbb P\bigl(|\param_1^{(0)\top}u|\le t \bigr) \right| \leq C \sqrt{\frac{d+\log(1/\delta)}{m}} \eqsp.
$$
\end{proof}

Let
\begin{align*}
I_i^0(\x)=\mathbf{1}\{\param_i^{(0)\top}\x\ge0\},
\qquad
p(\x,\x')=
\E_{\param\sim\mathcal N(0,\Id_{d+1})}
\left[
\mathbf{1}\{\param^\top\x\ge0\}
\mathbf{1}\{\param^\top\x'\ge0\}
\right] \eqsp.
\end{align*}

\begin{lemma} \label{lem:cov_proba_bound}
There exists $C>0$ such that, with probability at least $1-\delta$,
\begin{align*}
&\sup_{\x,\x'\in\mathcal{X}_R}
\left\|
\frac1m\sum_{i=1}^m
I_i^0(\x) I_i^0(\x') A_{\cdot i} A_{\cdot i}^\top
- p(\x,\x') \Id_d
\right\| \\
&\qquad\le
C\left[
\sqrt{\frac{d\log(\rme m)+\log(1/\delta)}{m}}
+
\frac{d\log(\rme m)+\log(1/\delta)}{m}
\right] \eqsp .
\end{align*}
Moreover, if $m\ge d\log(\rme m)+\log(1/\delta)$, then there exists $C'>0$ such that
\[
\sup_{\x,\x'\in\mathcal{X}_R}
\left\|
\frac1m\sum_{i=1}^m
I_i^0(\x) I_i^0(\x') A_{\cdot i} A_{\cdot i}^\top
- p(\x,\x') \Id_d
\right\|
\le
C'\sqrt{\frac{d\log(m)+\log(1/\delta)}{m}} \eqsp.
\]
\end{lemma}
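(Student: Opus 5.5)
The plan is to reduce the uniform bound over pairs $(\x,\x')\in\mathcal X_R\times\mathcal X_R$ to a uniform deviation bound over a finite family of activation patterns. The key point is that $I_i^0(\x)I_i^0(\x')$ depends on $(\x,\x')$ only through the sign pattern that the $m$ vectors $\param_i^{(0)}$ induce on the two half-spaces. Write $M(\x,\x')\eqdef \frac1m\sum_i I_i^0(\x)I_i^0(\x')A_{\cdot i}A_{\cdot i}^\top - p(\x,\x')\Id_d$.

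\emph{Step 1 (fixed pair).} I would first fix $(\x,\x')$ and view the summands $Y_i=I_i^0(\x)I_i^0(\x')A_{\cdot i}A_{\cdot i}^\top$ as i.i.d. $d\times d$ PSD matrices. They have mean $p(\x,\x')\Id_d$, since the Rademacher columns satisfy $\E[A_{\cdot i}A_{\cdot i}^\top]=\Id_d$ and are independent of $\param^{(0)}$. Rather than using matrix Bernstein, which would cost a $\log d$, I would bound $\sup_{v\in\mathbb S^{d-1}}|v^\top M v|$ with a $1/4$-net $\mathcal N$ of the unit sphere, of size at most $9^d$. For fixed $v$, the scalar $I_i^0(\x)I_i^0(\x')(A_{\cdot i}^\top v)^2-p\|v\|^2$ is centered and sub-exponential with $\psi_1$-norm $O(1)$, because $A_{\cdot i}^\top v$ is $1$-sub-Gaussian. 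Bernstein's inequality then gives deviation $C(\sqrt{s/m}+s/m)$ with probability $1-2e^{-s}$. A union bound over the net, together with the standard net argument $\|M\|\le 2\sup_{v\in\mathcal N}|v^\top Mv|$, shows that $s=d\log 9+\log(1/\delta')$ suffices.

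\emph{Step 2 (reduction to finitely many patterns).} Next I would reduce the supremum over pairs to a finite union bound. Conditionally on $\param^{(0)}$, the vector $(I_i^0(\x))_{i\le m}$ ranges over at most $\Pi(m)\le (\rme m/(d+1))^{d+1}$ distinct patterns as $\x$ ranges over $\R^{d+1}$, by Sauer--Shelah for homogeneous halfspaces in $\R^{d+1}$. Pairs of patterns therefore number at most $(\rme m)^{2(d+1)}$. The plan is to split $M=M_1+M_2$ with $M_1=\frac1m\sum_i I_i^0(\x)I_i^0(\x')(A_{\cdot i}A_{\cdot i}^\top-\Id_d)$ and $M_2=(\frac1m\sum_i I_i^0(\x)I_i^0(\x')-p(\x,\x'))\Id_d$.

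For $M_1$, I condition on $\param^{(0)}$; the $A$ columns remain independent of it. For each of the finitely many realized subsets $S\subset[m]$, Step 1 applied to $\frac1m\sum_{i\in S}(A_{\cdot i}A_{\cdot i}^\top-\Id_d)$ gives a bound of order $\sqrt{s/m}+s/m$, since $|S|\le m$ and the variance proxy is $|S|/m^2\le 1/m$. I then union bound over the at most $(\rme m)^{2(d+1)}$ subsets, which gives $s=O(d\log(\rme m)+\log(1/\delta))$.

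For $M_2$, the class $\{w\mapsto \mathbf 1\{w^\top\x\ge0\}\mathbf 1\{w^\top\x'\ge0\}\}$ is an intersection of two halfspace classes, so its VC dimension is $O(d)$. Exactly as in Lemma~\ref{lem:VC_dim_bound}, the VC law of large numbers plus bounded differences gives $C\sqrt{(d+\log(1/\delta))/m}$.

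\emph{Step 3 (combine).} A union bound combining $M_1$ and $M_2$ yields the first display. When $m\ge d\log(\rme m)+\log(1/\delta)$, the linear term is dominated by the square-root term and $\log(\rme m)\le 2\log m$ for $m\ge 3$, which gives the second display.

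The main obstacle is Step 2. The summands depend on $(\x,\x')$ only through the activation sets, and conditioning on $\param^{(0)}$ must not destroy the independence of $A$. I would handle this by conditioning first, so that the finite pattern family becomes deterministic, and then union bounding. I expect the counting estimate $\log\Pi(m)^2\lesssim d\log(\rme m)$ to be the delicate bookkeeping step.
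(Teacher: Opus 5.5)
Your proposal is correct and takes essentially the same route as the paper. Both split the deviation into the term $\frac1m\sum_i I_i^0(\x)I_i^0(\x')(A_{\cdot i}A_{\cdot i}^\top-\Id_d)$, handled by conditioning on $\param^{(0)}$, Sauer--Shelah pattern counting and a per-pattern covariance bound with a union bound, and the activation-frequency term, handled by the VC law of large numbers plus bounded differences. The only differences are cosmetic. You derive the per-set bound directly via a $1/4$-net and scalar Bernstein with variance proxy $|S|/m^2$, where the paper cites the covariance-estimation theorem and rescales by $|S_j|/m$. You also count pairs of halfspace patterns rather than using the VC dimension of the intersection class.
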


\begin{proof}
For $\x,\x'\in\mathcal{X}_R$, define 
$$
h_{\x,\x'}(w)
\eqdef
\mathbf 1_{\{w^\top\x\ge0\}}
\mathbf 1_{\{w^\top\x'\ge0\}} \quad \text{ and } \qquad
\mathcal H
\eqdef
\{h_{\x,\x'}:\x,\x'\in\mathcal{X}_R\} \eqsp.
$$
The class $\mathcal H$ consists of intersections of halfspaces in $\R^{d+1}$, hence its VC dimension satisfies \citep[Example~8.3.5]{vershynin2026hdp},
$$
\mathrm{VC}(\mathcal H)\le C_{\mathcal{H}} (d+1) 
$$
for a universal constant $C>0$. For $h\in\mathcal H$, set  $w\sim\mathcal N(0,\Id_{d+1})$ so that
\begin{align*}
\frac1m\sum_{i=1}^m h(\param_i^{(0)})A_{\cdot i}A_{\cdot i}^\top
- \E[h(w)] \Id_d
&= \frac1m\sum_{i=1}^m h(\param_i^{(0)})
\left(A_{\cdot i}A_{\cdot i}^\top-\Id_d\right) \\
&\quad + \left(
\frac1m\sum_{i=1}^m h(\param_i^{(0)})- \E[h(w)]
\right)\Id_d \eqsp .
\end{align*}
\paragraph{Step 1: control of the first term.} Note that for each $h\in\mathcal H$, conditional on $\param^{(0)}$,
$$
\mathbf b(h)
\eqdef \left( h(\param_1^{(0)} ), \dots, h(\param_m^{(0)}) \right)
\in \{0,1\}^m \eqsp.
$$
Although $\mathcal H$ is infinite, the number of distinct binary vectors that can be realized on the fixed sample $\{\param_i^{(0)}\}_{i=1}^m$ is finite. Since
$\mathrm{VC}(\mathcal H) \le C_{\mathcal{H}} (d+1)$, the Sauer--Shelah lemma \citep[Lemma~8.3.9]{vershynin2026hdp} gives
$$
N_m = \left|
\left\{
\big(h(\param_1^{(0)}),\dots,h(\param_m^{(0)})\big):
h\in\mathcal H
\right\}
\right|
\le
\left(\frac{\rme m}{ C_{\mathcal{H}}(d+1)} \right)^{ C_{\mathcal{H}}(d+1)} \eqsp.
$$
Let the distinct patterns be indexed by
$$
\mathbf b^{(1)},\dots,\mathbf b^{(N_m)}\in\{0,1\}^m.
$$
For each $j\in\{1,\dots,N_m\}$, define the corresponding active set by
$$
S_j
\eqdef
\{i\in[m]: b_i^{(j)}=1\}.
$$
Then, conditionally on $\param^{(0)}$,
$$
\sup_{h\in\mathcal H}
\left\|
\frac1m\sum_{i=1}^m h(\param_i^{(0)})
\left(A_{\cdot i}A_{\cdot i}^\top-\Id_d\right)
\right\|
=
\max_{1\le j\le N_m}
\left\|
\frac1m\sum_{i\in S_j}
\left(A_{\cdot i}A_{\cdot i}^\top-\Id_d\right)
\right\| \eqsp.
$$
Fix one pattern $j$ and note that
$$
\frac1m\sum_{i\in S_j}
\left(A_{\cdot i}A_{\cdot i}^\top-\Id_d\right)
=
\frac{|S_j|}{m}
\left[
\frac1{|S_j|}\sum_{i\in S_j}A_{\cdot i}A_{\cdot i}^\top-\Id_d
\right] \eqsp.
$$
Following \citep[Example 3.4.3]{vershynin2026hdp}, $A_{\cdot i}$ is sub-Gaussian and in particular, there exists $K_A>0$, such that
$$
\|\langle A_{\cdot i},z\rangle\|_{\psi_2}\le K_A \|z\|_2 \eqsp, \qquad z \in \R^d \eqsp,
$$
where $\| \cdot \|_{\psi_2}$ is the subgaussian norm defined in \citep[Example 2.6.4]{vershynin2026hdp}. Therefore, $A_{\cdot i}$ satisfies condition (4.29) of Theorem 4.7.1. from \citet{vershynin2026hdp} with (using their notation). It follows from Remark 4.7.3 \citep{vershynin2026hdp} that, for all $s \ge 0$ and fixed pattern $j$,
$$
\mathbb P \left( \left\| \frac1{|S_j|}\sum_{i\in S_j} A_{\cdot i} A_{\cdot i}^\top - \Id_d \right\|
\geq
C\left[  \sqrt{\frac{d+s}{|S_j|}}+ \frac{d+s}{|S_j|} \right] \middle| \param[0] \right) \le 2 \rme^{-s} \eqsp.
$$
Multiplying by $|S_j|/m$ and using $|S|\le m$
$$
\mathbb P \left( \left\|
\frac1m\sum_{i\in S_j}
\left( A_{\cdot i} A_{\cdot i}^\top - \Id_d \right)
\right\| \ge C \left[ \sqrt{\frac{d+s}{m}} + \frac{d+s}{m} \right] \middle| \param[0] \right) \le 2 \rme^{-s} \eqsp
$$
Therefore, 
\begin{align*}
& \mathbb P \left( \max_{ 1 \le j \le N_m} \left\|
\frac1m\sum_{i\in S_j}
\left( A_{\cdot i} A_{\cdot i}^\top - \Id_d \right)
\right\| \ge C \left[ \sqrt{\frac{d+s}{m}} + \frac{d+s}{m} \right] \middle| \param[0] \right) \\
& = \mathbb P \left( \bigcup_{j=1}^{N_m} \left\{ \left\|
\frac1m\sum_{i\in S_j}
\left( A_{\cdot i} A_{\cdot i}^\top - \Id_d \right)
\right\| \ge C \left[ \sqrt{\frac{d+s}{m}} + \frac{d+s}{m} \right] \right\} \middle| \param[0] \right) \\
& \le 2 N_m  \rme^{-s} \eqsp
\end{align*}
Setting
$ s = \log \left( 4N_m/\delta \right) $,
makes
$ 2N_m \rme^{-u}\le \delta/2  $.
Since $N_m\le (\rme m)^{C_{\mathcal H}(d+1)}$, we have
$$
u
\le
C_{\mathcal H}(d+1)\log(\rme m)+\log(4/\delta) \eqsp.
$$
Therefore, conditionally on $\param[0]$, there exists $C>0$ such that with probability at least $1-\delta/2$,
$$
\sup_{h\in\mathcal H}
\left\|
\frac1m\sum_{i=1}^m h(\param_i^{(0)})
\left(A_{\cdot i}A_{\cdot i}^\top-\Id_d\right)
\right\|
\le
C\left[
\sqrt{\frac{d\log(\rme m)+\log(1/\delta)}{m}}
+
\frac{d\log(\rme m)+\log(1/\delta)}{m}
\right] \eqsp.
$$
Since the above bound holds conditionally on $\param[0]$, the unconditional claim follows by the tower property.

\paragraph{Step 2: control of the second term.}

Since $\mathrm{VC}(\mathcal H)\le C_{\mathcal H}(d+1)$, the VC uniform law
of large numbers \citep[Theorem~8.3.15]{vershynin2026hdp}, together with
McDiarmid's inequality \citep[Theorem~5.7.1]{vershynin2026hdp}, gives, with
probability at least $1-\delta/2$ over $\param^{(0)}$,
$$
\sup_{h\in\mathcal H}
\left|
\frac1m\sum_{i=1}^m h(\param_i^{(0)})- \E[h(w)]
\right|
\le
C\sqrt{\frac{d+\log(1/\delta)}{m}} \eqsp.
$$

\paragraph{Step 3: combining both upper bounds.}
By Step 1, with probability at least $1-\delta/2$,
$$
\sup_{h\in\mathcal H}
\left\|
\frac1m\sum_{i=1}^m h(\param_i^{(0)})
\left( A_{\cdot i} A_{\cdot i}^\top- \Id_d \right)
\right\|
\le
C
\left[
\sqrt{ \frac{d\log(em)+ \log(1/\delta)}{m} }
+
\frac{ d \log(em) + \log(1/ \delta)}{m}
\right] \eqsp.
$$
By Step 2, with probability at least $1-\delta/2$,
$$
\sup_{h\in\mathcal H}
\left|
\frac1m\sum_{i=1}^m h(\param_i^{(0)})-\E[h(w)]
\right|
\le
C\sqrt{\frac{d+\log(1/\delta)}{m}} \eqsp.
$$
Therefore, both estimates hold simultaneously with
probability at least $1-\delta$. On this event, for every $h\in\mathcal H$,
\begin{align*}
&\left\|
\frac{1}{m} \sum_{i=1}^m h(\param_i^{(0)}) A_{\cdot i} A_{\cdot i}^\top
-
\E[h(w)] \Id_d
\right\| \\
&\le
\left\|
\frac1m\sum_{i=1}^m h(\param_i^{(0)})
\left(A_{\cdot i}A_{\cdot i}^\top-\Id_d\right)
\right\|
+
\left|
\frac{1}{m} \sum_{i=1}^m h( \param_i^{(0)}) - \E[h(w)]
\right| \eqsp.
\end{align*}
Taking the supremum over $h\in\mathcal H$ and absorbing constants gives
\begin{align*}
&\sup_{h\in\mathcal H}
\left\|
\frac{1}{m} \sum_{i=1}^m h(\param_i^{(0)}) A_{\cdot i} A_{\cdot i}^\top
-
\E[h(w)] \Id_d
\right\| \\
&\le
C
\left[
\sqrt{ \frac{ d \log(em)+ \log(1/\delta)}{m}}
+
\frac{ d \log(em) + \log(1/\delta) }{m}
\right] \eqsp .
\end{align*}
\end{proof}

\begin{lemma} \label{lem:bound:S_k}
There exists a universal constant $C>0$ such that, for every $k\ge0$, with probability at least $1-\delta$, the following holds on $\Omega_{n,R}$:
\begin{align*}
\frac{\|S_k\|_{\infty,R}}{m}
\le
\frac{2B_R\sqrt{d}}{\sqrt{2\pi m}} \eta_k + C \sqrt{\frac{d+\log(1/\delta)}{m}} \eqsp,
\end{align*}
where
\begin{align*}
\eta_k \eqdef 
\sum_{r=0}^{k-1} \gamma_r c(t^{(r)}) \|\loss_r\| \eqsp.
\end{align*}
\end{lemma}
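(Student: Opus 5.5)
The approach is the standard sign-flip argument. $S_k(\x)$ counts the neurons whose activation at $\x$ differs between initialization and iteration $k$, that is $S_k(\x)=|O_k(\x)|$ with
\[
O_k(\x)=\bigl\{i\in[m]:\mathbf 1_{\{\param_i^{(k)\top}\x\ge0\}}\neq \mathbf 1_{\{\param_i^{(0)\top}\x\ge0\}}\bigr\},
\]
as in \eqref{eq:sign_flip}. If neuron $i$ flips at $\x$, then $|\param_i^{(0)\top}\x|\le |(\param_i^{(k)}-\param_i^{(0)})^\top\x|\le \|\param_i^{(k)}-\param_i^{(0)}\|\,\|\x\|$. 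The proof therefore has three parts: a uniform bound on the weight displacement, the resulting deterministic inclusion, and a VC-based concentration step.

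\emph{Step 1 (weight movement).} By the row-wise update of Lemma~\ref{lem:sgd_updates}, on $\Omega_{n,R}$ we have $\|\ora\X^{(r)}\|\le B_R$ for every $r\le k-1$. Since $A$ has Rademacher entries, $\|A_{\cdot i}\|=\sqrt d$, so $|A_{\cdot i}^\top\loss_r|\le\sqrt d\,\|\loss_r\|$. Summing the updates gives
\[
\|\param_i^{(k)}-\param_i^{(0)}\|\le \frac{\sqrt d\,B_R}{\sqrt m}\sum_{r=0}^{k-1}\gamma_r c(t^{(r)})\|\loss_r\| = \frac{\sqrt d B_R}{\sqrt m}\eta_k,
\]
uniformly in $i$.

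\emph{Step 2 (reduction).} For $\x\in\mathcal X_R$, write $\x=\|\x\|u$ with $u\in\mathbb S^d$. Scale invariance of the sign reduces the flip condition to $|\param_i^{(0)\top}u|\le \sqrt d B_R\eta_k/\sqrt m=:\rho_k$. Hence, deterministically on $\Omega_{n,R}$,
\[
\frac{S_k(\x)}{m}\le \frac1m\sum_{i=1}^m \mathbf 1_{\{|\param_i^{(0)\top}u|\le \rho_k\}}.
\]

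\emph{Step 3 (uniform concentration).} Lemma~\ref{lem:VC_dim_bound} holds with probability $1-\delta$ simultaneously over all $u\in\mathbb S^d$ and all thresholds $t\ge0$. The uniformity in $t$ is what lets us plug in the random, data-dependent threshold $t=\rho_k$. On that event the right-hand side is at most $\mathbb P(|g|\le\rho_k)+C\sqrt{(d+\log(1/\delta))/m}$, where $g=\param_1^{(0)\top}u\sim\mathcal N(0,1)$. Bounding the Gaussian density by $1/\sqrt{2\pi}$ gives $\mathbb P(|g|\le\rho)\le 2\rho/\sqrt{2\pi}$, so the anti-concentration term equals $2B_R\sqrt d\,\eta_k/\sqrt{2\pi m}$. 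Taking the supremum over $\x\in\mathcal X_R$ gives the claimed bound.

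The main obstacle is that $\eta_k$ depends on the SGD trajectory, so it is not independent of the initialization. The argument avoids this because the VC bound is uniform over all thresholds $t$ and directions $u$, so it can be applied to the realized $\rho_k$. The only care needed is to keep Steps 1--2 deterministic on $\Omega_{n,R}$ and to place all randomness over $\param^{(0)}$ in the single event of Lemma~\ref{lem:VC_dim_bound}. A minor point is the boundary case $\param_i^{(0)\top}\x=0$ with the $\ge$ convention, which the non-strict inequality $|\param_i^{(0)\top}u|\le\rho_k$ already covers.
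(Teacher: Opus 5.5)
Your proposal is correct and follows the paper's proof essentially step by step. The steps match: the row-wise displacement bound $\|\param_i^{(k)}-\param_i^{(0)}\|\le \sqrt d\,B_R\eta_k/\sqrt m$, the reduction of a sign flip at $\x$ to the slab condition $|\param_i^{(0)\top}\x|\le\rho_k\|\x\|$ on the unit sphere, the uniform VC bound of Lemma~\ref{lem:VC_dim_bound}, and the anti-concentration estimate $\mathbb P(|g|\le\rho_k)\le 2\rho_k/\sqrt{2\pi}$. The paper uses only implicitly your observation that uniformity over the threshold $t$ is what allows the trajectory-dependent $\rho_k$ to be plugged in, yielding one event valid for all $k$. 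Like the paper, your Step~1 tacitly needs $k\le n$, since $\Omega_{n,R}$ only guarantees $\|\ora\X^{(r)}\|\le B_R$ for $r\le n-1$.
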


\begin{proof}
For all $k \ge 0$ and $\x \in \mathbb{R}^d \times [0,T]$, let $O_k(\x)$ denote the set of neurons whose activation sign at input $\x$ has changed between initialization and iteration $k$, and let $S_k(\x)$ be its cardinality:
\begin{equation} \label{eq:sign_flip}
O_k(\x) := \left\{\, i \in [m] : \operatorname{sgn}\big(\param_i^{(k)\top} \x\big)\neq
\operatorname{sgn}\big(\param_i^{(0)\top} \x\big)\right\},
\qquad
S_k(\x) := \bigl|O_k(\x)\bigr| \eqsp.
\end{equation}

If $i\in O_k(\x)$, then the sign of $\param_i^\top \x$ changes between iterations $0$ and $k$. Hence, by Lemma~\ref{lem:sgd_updates},
\begin{align*}
| \param_i^{(0)\top}  \x | \leq | \left( \param[k]_i - \param[0]_i \right)^\top \x | & \leq \| \param[k]_i - \param[0]_i  \| \| \x\| \\
& \leq \left( \frac{B_R \|A_{\cdot,i}\|}{\sqrt m}
\sum_{r=0}^{k-1}
\gamma_r c(t^{(r)}) \|\loss_r\| \right) \| \x\| \\
& \leq \left( \frac{B_R \sqrt{d}}{\sqrt m}
\sum_{r=0}^{k-1}
\gamma_r c(t^{(r)}) \|\loss_r\| \right) \| \x\| \\
& \leq \rho_{k} \| \x\| \eqsp,
\end{align*}
where
$$
\rho_k \eqdef \frac{B_R \sqrt{d}}{\sqrt m}
\sum_{r=0}^{k-1} \gamma_r c(t^{(r)}) \|\loss_r\| \eqsp.
$$
Therefore,
$$
S_k(\x)\le \sum_{i=1}^m \mathbf 1_{\{|\param_i^{(0)\top}\x|\le \rho_k\|\x\|\}}.
$$
Now, for every $\x\neq 0$, letting $u=\x/\|\x\|\in \mathbb S^{d}$, we have
$$
|\param_i^{(0)\top}\x|\le \rho_k\|\x\|
\quad\Longleftrightarrow\quad
|\param_i^{(0)\top}u|\le \rho_k.
$$
Taking the supremum yields
\begin{equation}
\label{eq:Sk_sphere_reduction}
\|S_k\|_{\infty,R}
\le
\sup_{u\in\mathbb S^d}
\sum_{i=1}^m
\mathbf 1_{\{|\param_i^{(0)\top}u|\le \rho_k\}}.
\end{equation}

Using Lemma \ref{lem:VC_dim_bound}, there exists a universal constant $C>0$ such that with probability at least $1-\delta$,
$$
\sup_{u\in\mathbb S^d}
\left|
\frac1m\sum_{i=1}^m \mathbf 1_{\{|\param_i^{(0)\top}u|\le \rho_k\}}
-
\mathbb P\bigl(|\param_1^{(0)\top}u|\le \rho_k\bigr)
\right|
\le
C \sqrt{\frac{d+\log(1/\delta)}{m}} \eqsp.
$$

In particular,
$$
\sup_{u\in\mathbb S^d}
\frac1m\sum_{i=1}^m \mathbf 1_{\{|\param_i^{(0)\top}u|\le \rho_k\}}
\le
\sup_{u\in\mathbb S^d}\mathbb P\bigl(|\param_1^{(0)\top}u|\le \rho_k\bigr)
+
C \sqrt{\frac{d+\log(1/\delta)}{m}}.
$$
Since $\param_1^{(0)\top}u\sim \mathcal N(0,1)$ for every $u\in\mathbb S^d$, we have
$$
\sup_{u\in\mathbb S^d}\mathbb P\bigl(|\param_1^{(0)\top}u|\le \rho_k\bigr)
=
\mathbb P(|Z|\le \rho_k)
\le \frac{2\rho_k}{\sqrt{2\pi}} \eqsp,
$$
where $Z \sim \mathcal{N}(0,1)$. Combining this with \eqref{eq:Sk_sphere_reduction} yields
$$
\frac{\|S_k\|_{\infty,R} }{m}
\le
\frac{2\rho_k}{\sqrt{2\pi}}
+
C\sqrt{\frac{d+\log(1/\delta)}{m}} \eqsp.
$$
Substituting $\rho_k=\frac{B_R\sqrt d}{\sqrt m}\eta_k$ concludes the proof.
\end{proof}

\end{document}